\title{Valid Inference with Imperfect Synthetic Data}
\author{%
  Yewon Byun$^{1}$, Shantanu Gupta$^{1}$, Zachary C. Lipton$^{1}$, Rachel Leah Childers$^{2}$, Bryan Wilder$^{1}$ \\
  Machine Learning Department, Carnegie Mellon University$^{1}$ $\hspace{1em}$ University of Zurich$^{2}$ \\
  \texttt{\{yewonb, shantang, zlipton, bwilder\}@cs.cmu.edu, rachel.childers@df.uzh.ch}
}
\begin{document}

\maketitle

\begin{abstract}
Predictions and generations from large language models are increasingly being explored as an aid in limited data regimes, such as in computational social science and human subjects research. 
While prior technical work has mainly explored the potential to use model-predicted labels for unlabeled data in a principled manner, there is increasing interest in using large language models to generate entirely new synthetic samples (e.g., synthetic simulations), such as in responses to surveys. However, it remains unclear 
by what means practitioners can combine such data with real data
and yet produce statistically valid conclusions upon them.
In this paper, we introduce a new estimator based on generalized method of moments, providing a hyperparameter-free solution with strong theoretical guarantees to address this challenge. Intriguingly, we find that interactions between the moment residuals of synthetic data and those of real data (i.e., when they are predictive of each other) can greatly improve estimates of the target parameter.
We validate the finite-sample performance of our estimator across different tasks in computational social science applications, demonstrating large empirical gains.
\end{abstract}

\section{Introduction}
 
Practitioners increasingly leverage large language models (LLMs) as cheap but noisy labelers for automating tasks traditionally reliant on manual human annotations \cite{ziems2024can}.
Beyond annotation, recently, practitioners have started to explore the possibility of leveraging LLMs for more diverse and open-ended forms of model-generated data, such as outputting entirely new synthetic samples, 
e.g., simulating human responses to surveys
or human participants in early pilot studies \citep{argyle2023out, brand2023using, dominguez2024questioning, anthis2025llm, hwang2025human}.
Determining the extent to which researchers should integrate LLM simulations---whether by simulating all samples, combining simulated and real samples, or relying entirely on human participants---remains an open and task-dependent question.
While such pipelines leveraging fully synthetic simulations have yet to be fully realized, reliable mechanisms for aggregating these data sources are indeed what will inform both the feasibility of such design choices and how such pipelines should be implemented in practice.

A persistent challenge, however, is that naively aggregating synthetic samples with real data for downstream inference often leads to greatly biased estimates, compromising the statistical validity of downstream conclusions. Ideally, we would like to realize the benefits of incorporating information from these additional data sources while retaining favorable statistical properties---consistency and proper asymptotic coverage. 
We consider the setting where practitioners have access to a corpus of unlabeled text and a small set of human-annotated samples with labeled covariates and outcomes. 
Here, practitioners can leverage LLMs to
(1) predict covariates and outcomes for the unlabeled text samples;
and (2) generate new text samples conditioned on available samples 
and label the covariates and outcomes for them similarly to (1).

First of all, it is not immediately obvious how to even produce synthetic samples such that they can be used in a principled manner. 
Naively drawing samples from a generative model and treating them as additional samples alongside real data makes it impossible to provide statistical guarantees for the resulting estimate if the generative model does not perfectly match the real distribution, which is expected in practice. 
We propose a specific sampling strategy in which each synthetic sample is generated conditional on an individual real text in Section \ref{sec:prelims}. What makes this formulation statistically powerful is that it introduces a correlation structure between each real text and synthetic sample. This correlation structure will prove critical for principled methods for integrating synthetic data, as it enables us to more effectively share information across them. 

In Section \ref{sec:gmm}, we introduce a new estimation framework based on Generalized Method of Moments (GMM) that allows us to incorporate this synthetic data. The GMM framework allows us to incorporate multiple sources of information by adding moments. The optimal weighting in GMM produces a combination of these moments that minimizes the variance of all estimators based on these moments \citep{chamberlain1987asymptotic}. This optimal weighting measures the cross-correlations between the synthetic and real data, producing a weighting matrix that reduces the variance of the real data moments if there is information from the synthetic data moments. 
Prospectively, it is not intuitive that the incorporation 
of additional moments based exclusively on synthetic data (defined in terms of a separate parameter from the target parameter)
should yield any benefits (or \textit{even affect}) the estimation of the target parameter of the real data. 
Intriguingly, we find that the incorporation of synthetic data leads to more precise estimation and tighter confidence intervals when the synthetic data moment residuals are predictive of the real data moment residuals. When they are independent from each other, the variance reduces to the optimal variance based only on the fully observed data. That is, in the worst case where synthetic data is \textit{completely uninformative}, including it does not hurt (at least asymptotically). Finally, in Section \ref{sec:results}, we analyze the finite-sample performance of our estimator using real-world datasets that encompass varying computational social science tasks, demonstrating large empirical gains.

At a fundamental level, this work takes a step towards understanding how synthetic data from foundation models can systematically be leveraged to support valid inference. As the usage and future promise of foundation models continue to grow, so too will the complexity of pipelines that incorporate their outputs. Our framework provides a foundation for easily extensible estimation methods that can safely incorporate the growing variety and quality of synthetic data sources from such models. 
More broadly, this GMM-based estimation framework for incorporating auxiliary data may be of broader interest as an alternative to the predominant debiasing-based methods in the surrogacy literature \citep{angelopoulos2023prediction}, as it can more flexibly accommodate multiple proxy covariates and proxy outcomes compared to existing approaches.

\section{Related Work}

\paragraph{Statistical Inference and Debiasing Methods.} 
Our work is broadly related to performing statistical inference with missing data, where past works have explored approaches to yielding valid and efficient parameter estimates \citep{robins1994estimation}. Other work has notably explored the usage of ML models to estimate nuisance parameters \citep{chernozhukov2018double}. The most related line of research are debiasing methods that focus on combining ground truth data with surrogate predictions (often produced by a machine learning model) to perform statistical inference \citep{egami2023using, gligoric2024can}. These frameworks are often referred to as prediction-powered inference \citep{angelopoulos2023prediction, angelopoulos2023ppi++} in the machine learning literature. 
Such methods have been well-studied in the context of predicted outcomes and, more recently, predicted covariates \citep{ji2025predictions}. 
A key difference between these works and our setting is that the primary focus of our work is how to incorporate fully synthetic samples, which remains unaddressed by previous work. For clarity, we refer to samples as fully synthetic when (1) the underlying text is synthetically generated and (2) both its covariates and outcomes are model predictions.

\vspace{-3mm}
\paragraph{LLMs for Data Annotation and Synthetic Simulation Tasks.} 
Our work is motivated by the increasingly growing use 
and future promise of LLMs for annotations 
and simulations \citep{ziems2024can, 10.1145/3706599.3716299, anthis2025llm}. 
There has been growing interest in using LLMs in fully synthetic simulation studies, 
with primary applications in exploratory research or early pilot studies. For instance, recent work has studied simulating social interactions and behaviors \citep{chenpersona, park2023generative}.
Other works have explored LLMs for simulating survey responses \citep{dillion2023can, rothschild2024opportunities, dominguez2024questioning}, analyzing how well simulations approximate human responses while cautioning about drawbacks 
such as limited diversity and lack of context-awareness.
Whether such simulations should be used on their own or in combination with "real" data remains an open and task-dependent question.
In summary, this line of work shows the potential of incorporating synthetic data 
powered through strong generative models in such downstream pipelines
but also exhibits clear failure modes and imperfect conclusions from such studies.
While most of these works focus on qualitative takeaways 
and early signals for future experiments, 
we focus on the forward-looking setting
of making statistically valid inference 
given such synthetic samples.

\section{Preliminaries}
\label{sec:prelims}

\paragraph{Notation and Setup.} We consider a parameter estimation task where the goal is to estimate a target parameter $\theta^\star \in \mathbb{R}^d$. Let $(T, X, Y) \sim \mathcal{D}$ denote a random triple drawn from an unknown data-generating distribution $\mathcal{D}$ over text inputs $T \in \mathcal{T}$, covariates about the text (e.g., structured metadata) $X \in \mathcal{X} \subseteq \mathbb{R}^d$, and 
labels $Y \in \mathcal{Y}$.
For example, $T$ can be texts from online requests, where $X$ are linguistic markers of hedging (i.e., notions of uncertainty) and $Y$ is perceived politeness. Due to labeling budget constraints, we assume we only observe a small fraction of human-annotated data (i.e., ground-truth covariates and labels about the text). Specifically, we have access to labeled dataset $\mathcal{D}_{\text{labeled}} = \{(T_i, X_i, Y_i)\}_{i=1}^{n}$ that is sampled i.i.d. from $\mathcal{D}$ and an unlabeled corpus of text $\mathcal{D}_{\text{unlabeled}} = \{(T_j)\}_{j=n+1}^{n+m}$ sampled i.i.d. from $\mathcal{D}_T$ (i.e., the marginal distribution over $T$), where $m \gg n$.
To supplement this limited supervision, we leverage machine learning models in the following two ways.

\paragraph{Proxy Covariates and Labels.} We use a machine learning model $f$ to produce predictions 
$\{f_{X}({T}_j), f_{Y}{(T}_j)\}$ for the available set of input texts $T \in \mathcal{T}$. Here, $f_{X}$ and $f_{Y}$ denote the same machine learning model, using separate prompts for the target outcome (either a covariate $X$ or outcome $Y$) (see Appendix \ref{appx:experiments} for details). This yields the following
$\mathcal{D}_{\text{proxy}} = \{(T_i, f_X(T_i), f_Y(T_i)\}_{i=1}^{n} \cup \{(T_j, f_X(T_j), f_Y(T_j)\}_{j=n+1}^{n+m}$. For simplicity, we will refer to these as \textbf{proxy samples} and denote them as $(T, \hat{X}, \hat{Y})$. We will refer to the distribution over proxy samples as $\hat{D}$. This is the main setting generally considered in the surrogacy literature (restricted to predicted outcomes).

\paragraph{Synthetic Covariates and Labels.} We propose a new data augmentation process which generates new samples using the same machine learning model $f$ (employing it as a generative model, instead of a classifier). Specifically, our method conditions the generation process on each individual text $T_j$ as an example and asks the model to generate a new synthetic sample given that context. Formally, for each $i$, we sample a new text $\tilde{T}_{i}$, conditioned on $(T_i, X_i)$ if the sample is labeled and $(T_j, \hat{X}_j)$ if the sample is unlabeled (since $X_{j}$ is not available, by definition).  More concretely, $\tilde{T}_k \sim \mathbb{P} (\cdot \mid T_i, X_i) \text{ if labeled}$ and $\tilde{T}_k \sim \mathbb{P} (\cdot \mid T_j, \hat{X}_j) \text{ if unlabeled}$. See Appendix \ref{appx:experiments} for prompts used for synthetic data generation.
Based on the generated sample, which we denote as $\tilde{T}_{i}$, we then extract its corresponding covariates and outcomes using $f$ similarly as in proxy samples. More concretely, $\tilde{X}_k \sim \mathbb{P} (\cdot \mid \tilde{T}_k)$ and $\tilde{Y}_k \sim \mathbb{P} (\cdot \mid \tilde{T}_k)$.
This results in $\mathcal{D}_{\text{synthetic}} = \{(\tilde{T}_k, \tilde{X}_k, \tilde{Y}_k)\}_{k=1}^{n+m}.$ We refer to the distribution over \textbf{synthetic samples} $(\tilde{T}, \tilde{X}, \tilde{Y})$ as $\tilde{D}$. 

This specific sampling process has two motivations. First, from a machine learning perspective it can be seen as a form of in-context prompting, where the model is given an example from the dataset in order to align it more closely with the task. Iteratively prompting with different samples $T_i$ is also likely to produce more diverse samples than asking for many samples with the same prompt. Second, from a statistical perspective, it introduces a correlation structure between each real text $T_i$ and synthetic sample $\tilde{T}_i$. This correlation structure will prove critical for principled methods for integrating synthetic data because it allows us to more effectively share information across them. 

Finally, we introduce some notation that combines all of these data sources into draws from a single joint distribution. Specifically, we introduce a new random variable $s \in \{0,1\}$ which is an indicator for whether $T$ is labeled (1) or unlabeled (0). Then, we view the complete generative process as draws $(T, s,  s\cdot X, s\cdot Y, \tilde{X}^1, \tilde{Y}^1...\tilde{X}^M, \tilde{Y}^M)$ for $M$ different kinds of auxiliary data. So far, we have discussed two kinds, proxy and synthetic, that we employ empirically ($M = 2$), but our methods are fully extensible to additional kinds of auxiliary data. For example, we could include samples from multiple different generative models. The real $(X, Y)$ are observed only for labeled points with $s = 1$ while the auxiliary data is available for all samples. The joint distribution over this full tuple is induced by the composition of the generative processes for the components described above.

\section{Combining Synthetic Information via Generalized Method of Moments}
\label{sec:gmm}

To estimate the target parameter $\theta^\star$, we adopt a generalized method of moments (GMM) approach \citep{hansen1982large} that combines information from the different types of data in the following manner.

\subsection{Moment Conditions}

Our framework is applicable whenever the target parameter can be identified by a set of moment conditions, functions whose expectation should be zero at the true value of the parameter. Moment-based estimation is a broad and flexible framework that includes almost all commonly used statistical frameworks (e.g., maximum likelihood, generalized linear models, instrumental variables, etc). We begin by defining the moment conditions that identify $\theta^{*}$ under the distribution of interest (i.e., the real-data distribution $\mathcal{D}$). In the following section, we introduce how this can be adapted to incorporate surrogate data (i.e., proxy and synthetic data).

 Formally, we consider the problem of estimating a parameter $\theta \in \mathbb{R}^d$. The true value $\theta^*$ is identified as the solution to a set of $p \geq d$ moment conditions 
 \begin{align*}
     \mathbb{E}[\psi^{(\ell)}(\theta^{*})] = 0, \quad \ell=1...p
 \end{align*}
 where the $\psi^{(\ell)}$ are continuously differentiable functions $\mathbb{R}^d \to \mathbb{R}$. For example, in a maximum likelihood model, we would have one $\psi$ for the derivative of the log-likelihood with respect to each parameter, and the moment conditions enforce that $\theta^*$ satisfies the first-order conditions for maximizing the likelihood. Let $\psi(\theta) = [\psi^{(1)}(\theta)...\psi^{(p)}(\theta)]^\top$ denote a column vector stacking the $p$ moments.

\subsection{Constructing Our GMM Estimator}

To leverage the auxiliary data (i.e., proxy data and synthetic data) in making our GMM estimator more efficient, we can construct a set of auxiliary moments for each additional source of data. We estimate an additional set of auxiliary parameters $\eta_1, ..., \eta_M \in \mathbb{R}^p$, one parameter vector for each set of new auxiliary data. In the specific instantiation of the model that we use here, we always have $M = 2$ (proxy and synthetic data), but in principle our method is extensible to many sources of auxiliary data, for example synthetic samples generated from several different models. Roughly, each new parameter vector $\eta_i$ can be understood as the parameter that we would estimate using each auxiliary data source, and our augmented model will automatically determine how to use these auxiliary estimates to inform the estimate of the parameter of interest $\theta$. 

For each new parameter vector $\eta_i$, we introduce a corresponding set of new moments to estimate this parameter and allow its estimate to inform the estimate of $\theta$. Specifically, we introduce for each $\eta_i$ two new blocks of moments that are copies of the original moments for $\theta$. Intuitively, one block of moments will be evaluated only on the real (labeled) data, while the other will be taken on the pooled set of labeled data and auxiliary dataset $i$. The pooled-data moment will allow us to improve the estimation of $\eta_i$ using the larger sample. The version evaluated only on the real data will allow GMM to evaluate how well the moments for the auxiliary parameter correlate with those of the true parameter on the same data, and share information across them if the auxiliary moments are informative (as we would expect if the generated data is high quality).  

Formally, let $S_t \in \mathbb{R}^p$ stack $p$ copies of the indicator variable $s_t$ for whether a data point $t$ is labeled. In block matrix notation, the combined model takes the form of the augmented moments

\begin{equation}
g_t(\theta, \eta) = \left[
\begin{array}{c}
S_t \\
S_t \\
\vdots \\
S_t \\
1 \\
\vdots \\
1
\end{array}
\right] \odot
\left[
\begin{array}{c}
\psi(\theta) \\
\psi(\eta_1) \\
\vdots \\
\psi(\eta_M) \\
\psi(\eta_1) \\
\vdots \\
\psi(\eta_M)
\end{array}
\right] \in \mathbb{R}^{p + 2Mp}
\label{eq:proxy}
\end{equation}

We will then jointly estimate $(\theta, \eta)$ as the solution to the moment condition $\mathbb{E}[g_t(\theta,\eta)] = 0$. For clarity, we refer to our estimator that uses real and proxy data ($M = 1$) as 
\textbf{GMM-Proxy} and our estimator that uses real, proxy, and synthetic data ($M = 2$) as \textbf{GMM-Synth} throughout the paper. We remark that since the parameter of interest $\theta$ appears only in its original set of moments, which are evaluated only on the labeled data, this new moment condition still identifies the target parameter $\theta^*$. However, as we discuss below, when we apply standard methods for efficiently estimating the augmented GMM, the new moment conditions will be leveraged to reduce the variance of the estimate without compromising consistency or asymptotic normality.

Before turning to estimation, we provide a concrete example of our moment construction for the case of generalized linear models (GLMs) in two-dimensions. 

\subsection{Example 1. Generalized Linear Models}\label{sec:glm}

Recall that the standard GLM formulation optimizes the objective function,
\begin{align*}
    \ell_\theta(x, y) = -yx^T \theta + f(x^T \theta),
\end{align*}

where $f$ is a function that is convex and infinitely differentiable. We remark that this recovers the setting of logistic regression when $f(z) = \log(1 + \exp({z}))$. Let us assume a two-dimensional setting for illustration. This translates to the population moment conditions of 
\begin{align*}
    \mathbb{E}\left[X_1\left(Y - \frac{\partial f}{\partial \theta_1}(X^T \theta^*)\right)\right] & = 0, \quad 
    \mathbb{E}\left[X_2\left(Y - \frac{\partial f}{\partial \theta_2}(X^T \theta^*)\right)\right]  = 0 
\end{align*}

We have similar moments for proxy and synthetic data, where we use parameters $\eta = (\eta^{(1)}, \eta^{(2)})$, which are also two-dimensional. Within our GMM framework, we construct the following set of moment conditions across the observed, proxy, and synthetic data. 
\begin{align*}
g_t(\theta, \eta) =
\left[
\begin{array}{c}
\vphantom{X_t ( Y_t - \frac{\partial f}{\partial \theta_1}(X_t^T \theta) )} s_t \\
\vphantom{X_t ( Y_t - \frac{\partial f}{\partial \theta_1}(X_t^T \theta) )} s_t \\
\vphantom{\hat{X}_t ( \hat{Y}_t - \frac{\partial f}{\partial \eta^{(1)}_1}(\hat{X}_t^T \eta^{(1)}) )} s_t \\
\vphantom{\hat{X}_t ( \hat{Y}_t - \frac{\partial f}{\partial \eta^{(1)}_1}(\hat{X}_t^T \eta^{(1)}) )} s_t \\
\vphantom{\tilde{X}_t ( \tilde{Y}_t - \frac{\partial f}{\partial \eta^{(2)}_1}(\hat{X}_t^T \eta^{(2)}) )} s_t \\
\vphantom{\tilde{X}_t ( \tilde{Y}_t - \frac{\partial f}{\partial \eta^{(2)}_1}(\hat{X}_t^T \eta^{(2)}) )} s_t \\
\vphantom{\hat{X}_t ( \hat{Y}_t - \frac{\partial f}{\partial \eta^{(1)}_1}(\hat{X}_t^T \eta^{(1)}) )} 1 \\
\vphantom{\hat{X}_t ( \hat{Y}_t - \frac{\partial f}{\partial \eta^{(1)}_1}(\hat{X}_t^T \eta^{(1)}) )} 1 \\
\vphantom{\tilde{X}_t ( \tilde{Y}_t - \frac{\partial f}{\partial \eta^{(2)}_1}(\hat{X}_t^T \eta^{(2)}) )} 1 \\
\vphantom{\tilde{X}_t ( \tilde{Y}_t - \frac{\partial f}{\partial \eta^{(2)}_1}(\hat{X}_t^T \eta^{(2)}) )} 1 \\
\end{array}
\right]
\odot
\left[
\begin{array}{c}
X_{t,1} ( Y_t - \frac{\partial f}{\partial \theta_1}(X_t^T \theta) ) \\
X_{t,2} ( Y_t - \frac{\partial f}{\partial \theta_2}(X_t^T \theta) ) \\
\hat{X}_{t,1} ( \hat{Y}_t - \frac{\partial f}{\partial \eta^{(1)}_1}(\hat{X}_t^T \eta^{(1)}) ) \\
\hat{X}_{t,2} ( \hat{Y}_t - \frac{\partial f}{\partial \eta^{(1)}_2}(\hat{X}_t^T \eta^{(1)}) ) \\
\tilde{X}_{t,1} ( \tilde{Y}_t - \frac{\partial f}{\partial \eta^{(2)}_1}(\tilde{X}_t^T \eta^{(2)}) ) \\
\tilde{X}_{t,2} ( \tilde{Y}_t - \frac{\partial f}{\partial \eta^{(2)}_2}(\tilde{X}_t^T \eta^{(2)}) ) \\
\hat{X}_{t,1} ( \hat{Y}_t - \frac{\partial f}{\partial \eta^{(1)}_1}(\hat{X}_t^T \eta^{(1)}) ) \\
\hat{X}_{t,2} ( \hat{Y}_t - \frac{\partial f}{\partial \eta^{(1)}_2}(\hat{X}_t^T \eta^{(1)}) ) \\
\tilde{X}_{t,1} ( \tilde{Y}_t - \frac{\partial f}{\partial \eta^{(2)}_1}(\tilde{X}_t^T \eta^{(2)}) ) \\
\tilde{X}_{t,2} ( \tilde{Y}_t - \frac{\partial f}{\partial \eta^{(2)}_2}(\tilde{X}_t^T \eta^{(2)}) ) \\
\end{array}
\right]
\end{align*}

\subsection{GMM Estimation}

Given our augmented moment conditions $g$, we estimate the parameters $(\theta,\eta)$ by minimizing the GMM objective:
\begin{equation}
\hat{\theta}_T,\hat{\eta}_T = \arg\min_{\theta \in \Theta, \eta \in \mathbb{R}^{2Mp}} \widehat{Q}_T(\theta, \eta),
\end{equation}
where
\begin{equation}\label{eq:gmm_estimator}
\widehat{Q}_T(\theta, \eta) = 
\left[
\frac{1}{T} \sum_{t=1}^T g_t(\theta, \eta)
\right]^\top
\widehat{\mathbf{W}}_T
\left[
\frac{1}{T} \sum_{t=1}^T g_t(\theta, \eta)
\right].
\end{equation}

\paragraph{Two-step GMM estimator.} We adopt the two-step GMM procedure as described in \citet{newey1994large}. First, we compute the one-step estimator $\hat{\theta}^{(\text{os})}_T,\hat{\eta}^{(\text{os})}_T$ using an identity weight matrix $\widehat{\mathbf{W}}_T = \mathbf{I}$. Then, we estimate the optimal weight matrix as:
\begin{equation}
\widehat{\Omega}_T(\hat{\theta}^{(\text{os})}_T,\hat{\eta}^{(\text{os})}_T) = 
\left[
\frac{1}{T} \sum_{t=1}^T g_t(\hat{\theta}^{(\text{os})}_T,\hat{\eta}^{(\text{os})}_T) g_t(\hat{\theta}^{(\text{os})}_T,\hat{\eta}^{(\text{os})}_T)^\top
\right],
\end{equation}
and set
\begin{equation}
\widehat{\mathbf{W}}_T = \left[\widehat{\Omega}_T(\hat{\theta}^{(\text{os})}_T,\hat{\eta}^{(\text{os})}_T)\right]^{-1}.
\label{eq:optimal}
\end{equation}

This optimal weighting has the interpretation as the inverse empirical covariance of the moment conditions on the one-step estimate. We then compute the final two-step estimator by minimizing $\widehat{Q}_T(\theta)$ with this updated weighting matrix. This choice of $\widehat{\mathbf{W}}_T$ yields an asymptotically efficient estimator under standard GMM regularity conditions. Following \citet{chamberlain1987asymptotic}, this choice of weighting minimizes the semiparametric efficiency bound with respect to the semi-parametric model defined by these moments (see Appendix \ref{appx:optimality} for more details). 

The adoption of two-step GMM is a critical component of our proposed estimation framework. Indeed, in the first-step estimates, the synthetic and proxy data will have no impact on the estimate of $\theta$ because they never appear in the moment conditions concerning $\theta$. In the second stage though, the weight matrix $\widehat{\mathbf{W}}_T$ accounts for the covariance between moment conditions, where off-diagonal terms in the matrix allow moments for the auxiliary data sources to influence the estimation of $\theta$.

\subsection{Consistency and Asymptotic Inference}

We now present results on the consistency and asymptotic behavior of our GMM estimators. 
\begin{restatable}{proposition}{gmm_asymptotics}\label{prop:asymptotics}
Our estimate $\hat{\theta}_T$ (as defined in Equation \ref{eq:gmm_estimator}) is consistent and asymptotically normal. It converges in distribution as 
\begin{align*}
    \sqrt{T}((\hat{\theta}_T^{\prime},\hat{\eta}_T^{\prime})^{\prime} - (\theta^{\prime},\eta^{\prime})^{\prime}) \xrightarrow[]{d} \mathcal{N}(0, V)
\end{align*}
where the covariance $V$ is given by 
\begin{align*}
    V = \left( G(\theta,\eta)^T \widehat{\mathbf{W}}G(\theta,\eta) \right)^{-1} G(\theta,\eta)^T \widehat{\mathbf{W}} F \widehat{\mathbf{W}} G(\theta,\eta) \left( G(\theta,\eta)^T \widehat{\mathbf{W}}G(\theta,\eta) \right)^{-1},
\end{align*}
and where $G(\theta,\eta)$ is the Jacobian of the population moments at the ground truth parameter values $\theta,\eta$. 
\end{restatable}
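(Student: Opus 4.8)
The plan is to establish this as an application of the classical asymptotic theory for (two-step) GMM estimators (\citet{hansen1982large}, \citet{newey1994large}), where the real work lies in verifying that our augmented moment construction of Equation~\ref{eq:proxy} satisfies the required regularity conditions. I would organize the argument into two parts: first consistency, then a Taylor-expansion argument delivering asymptotic normality with the stated sandwich covariance. Throughout, the key structural fact I would exploit is that, under the joint sampling model of Section~\ref{sec:prelims}, the stacked tuples (and hence the $g_t(\theta,\eta)$) are i.i.d., so standard laws of large numbers and central limit theorems apply directly.

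For consistency I would invoke a standard extremum-estimator theorem, which needs three ingredients. \emph{(i) Identification}: the population criterion $Q(\theta,\eta) = \mathbb{E}[g_t(\theta,\eta)]^\top W\, \mathbb{E}[g_t(\theta,\eta)]$ is uniquely minimized (and equals zero) at $(\theta^*,\eta^*)$. Here I would use the block structure: because $\theta$ enters only through the labeled-only block $S_t \odot \psi(\theta)$, which coincides with the original identifying moments reweighted by the labeling indicator, $\theta^*$ is pinned down exactly as in the un-augmented problem provided $\mathbb{E}[s] > 0$; each $\eta_i^*$ is then pinned down by its own paired blocks. \emph{(ii) Uniform convergence}: continuity of $\psi$ plus i.i.d.\ sampling give a uniform law of large numbers, $\sup_{\theta,\eta}\lVert \tfrac{1}{T}\sum_t g_t(\theta,\eta) - \mathbb{E}[g_t(\theta,\eta)]\rVert \xrightarrow{p} 0$, and $\widehat{\mathbf{W}}_T \xrightarrow{p} W$ for a positive-definite limit $W$. \emph{(iii) Compactness and continuity} of the criterion. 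Combining these yields $(\hat\theta_T,\hat\eta_T) \xrightarrow{p} (\theta^*,\eta^*)$.

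For asymptotic normality I would expand the first-order condition $\widehat{G}_T^\top \widehat{\mathbf{W}}_T\, \bar g_T(\hat\theta_T,\hat\eta_T) = 0$, where $\bar g_T = \tfrac{1}{T}\sum_t g_t$ and $\widehat{G}_T$ is the sample Jacobian. A mean-value expansion of $\bar g_T$ around the truth, combined with consistency and the continuous-mapping theorem (replacing $\widehat{G}_T$ and $\widehat{\mathbf{W}}_T$ by their limits $G$ and $W$), gives
\[
\sqrt{T}\big((\hat\theta_T',\hat\eta_T')' - (\theta',\eta')'\big) = -\big(G^\top W G\big)^{-1} G^\top W\, \sqrt{T}\,\bar g_T(\theta^*,\eta^*) + o_p(1).
\]
Applying a central limit theorem to the i.i.d.\ sum, $\sqrt{T}\,\bar g_T(\theta^*,\eta^*) \xrightarrow{d} \mathcal{N}(0,F)$ with $F = \mathbb{E}[g_t g_t^\top]$ the moment covariance at the truth, and then Slutsky's theorem, yields the stated sandwich form for $V$.

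The two steps I expect to be most delicate are identification and the estimated weight matrix. For identification, the subtlety is confirming that the duplicated moment blocks for each $\eta_i$ (one labeled-only, one pooled) do not produce rank deficiency, so that $G^\top W G$ is invertible; this requires that the Jacobian retain full column rank, which follows from the blocks responding distinctly to their parameters under the joint law. For the weight matrix, since $\widehat{\mathbf{W}}_T$ is built from the one-step estimator, I would argue its randomness is first-order negligible: consistency of the one-step estimator plus continuity of $\widehat{\Omega}_T(\cdot)$ and invertibility of its limit give $\widehat{\mathbf{W}}_T \xrightarrow{p} W$, so it may be replaced by $W$ without affecting the limiting distribution. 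Notably, the correlation structure between synthetic and real residuals — the source of the efficiency gains emphasized in the paper — enters only through the off-diagonal blocks of $F$, and therefore plays no role in the consistency or normality arguments themselves.
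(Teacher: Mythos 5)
Your proposal is correct and follows essentially the same route as the paper: both reduce the claim to standard Hansen/Newey--McFadden GMM asymptotics and locate the only real work in verifying that the augmented moments satisfy the identification, LLN, and CLT regularity conditions. The one point the paper spends more care on --- and that you assert only in passing as a ``key structural fact'' --- is that the synthetic-data moments are genuinely i.i.d., which the paper proves in Lemma~\ref{lemma:synthetic_moments} by noting that each $\tilde T_j$ is generated with independent randomness conditioned only on its own $(T_j, X_j)$ or $(T_j, \hat X_j)$.
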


For optimal weight matrix in Equation \ref{eq:optimal}, this simplifies to $V=(G(\theta,\eta)^T F^{-1}G(\theta,\eta))^{-1}$. These are standard results on GMM estimators, which follow by straightforwardly applying the results in \citet{hansen1982large}. We remark that these asymptotic results require a set of conditions on the sample moments, which are slightly nuanced in this setting with multiple sources of information. 
We discuss these conditions and prove that they are satisfied in Appendix \ref{appx:conditions} for the setting of proxy and synthetic samples. 
Given this asymptotic behavior, we can derive valid confidence intervals for our parameter estimates.
\subsection{Why does synthetic data improve performance?}
\label{sec:understanding}
To understand where the benefits arise from incorporating the proxy and synthetic data into our GMM estimator, we analyze the interaction between our moment conditions. Note that the functions $\psi$ are often referred to as ``residuals" in the GMM literature; since $\psi(\theta)$ should be zero in-expectation, deviations from zero are interpretable as a kind of residual. The key intuition is that synthetic data will improve performance when the synthetic-data residuals are predictive of the real-data residuals. 

First, we note that if the synthetic data were perfectly simulated, $X$ and $Y$ would be perfectly recovered from the unlabeled text $T$. With ground truth $X, Y$, we can perfectly recover the residual terms.
In settings where we have good but imperfect simulations, $\hat{X}$,$\hat{Y}$ and $\tilde{X}, \tilde{Y}$ are highly correlated with the errors in the true data, and we can approximately estimate the real-data residuals with the synthetic data. Within our GMM-based approach, this is all handled implicitly in our two-step estimation procedure. During the first estimation step, each set of parameters (e.g., defined on the observed, proxy, and synthetic data) is independently identified since the initial weighting is an identity matrix. The key insight is that, during the second estimation step, the weighting matrix $\widehat{\mathbf{W}}$, which is the inverse of the moment covariance matrix, captures the interactions between the observed residual terms and the residuals from the synthetic data in our GMM objective. This is captured in the off-diagonal terms of the moment covariance matrix. 
Partitioning the moments into observed data residuals $m_t(\theta)$ and synthetic data residuals $h_t(\eta)$, we derive an explicit formula for the asymptotic variance of $\sqrt{T}(\hat{\theta}_T-\theta)$ in Theorem \ref{prop:target_asymptotic_variance} with the full proof in Appendix \ref{appx:partitioned}. 

\begin{restatable}{theorem}{targetvariance}
\label{prop:target_asymptotic_variance}
The asymptotic variance of $\sqrt{T}(\hat{\theta}_T-\theta)$ is given by 
{\small
$$(\frac{d \mathbb{E}[m(\theta)]}{d\theta^\prime}A\frac{d \mathbb{E}[m(\theta)]}{d\theta}-\frac{d \mathbb{E}[h(\eta)]}{d\eta^\prime}B^{\top}\frac{d \mathbb{E}[m(\theta)]}{d\theta}(\frac{d \mathbb{E}[h(\eta)]}{d\eta^\prime}D\frac{d \mathbb{E}[h(\eta)]}{d\eta})^{-1}\frac{d \mathbb{E}[m(\theta)]}{d\theta^\prime}B\frac{d \mathbb{E}[h(\eta)]}{d\eta})^{-1}.$$
}
with $A,B,D, m(\theta), h(\eta)$ defined in Appendix \ref{appx:partitioned}.
\end{restatable}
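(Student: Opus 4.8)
The plan is to specialize the general sandwich expression of Proposition \ref{prop:asymptotics} to the optimally weighted case $\widehat{\mathbf{W}} = F^{-1}$, for which $V = (G^\top F^{-1} G)^{-1}$, and then to read off the block of $V$ corresponding to $\theta$ by exploiting the separable structure of the moments. The structural fact that makes everything clean is the one emphasized in the construction of the estimator: $\theta$ enters only the observed-data moments $m(\theta)$, while $\eta$ enters only the auxiliary (proxy and synthetic) moments $h(\eta)$. Hence the cross-derivatives $\partial \mathbb{E}[m]/\partial \eta'$ and $\partial \mathbb{E}[h]/\partial \theta'$ vanish identically, and the population Jacobian is block diagonal,
\[
G = \begin{pmatrix} \dfrac{d\mathbb{E}[m(\theta)]}{d\theta'} & 0 \\ 0 & \dfrac{d\mathbb{E}[h(\eta)]}{d\eta'} \end{pmatrix}.
\]

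Next I would partition the moment covariance conformably as $F = \left(\begin{smallmatrix} F_{mm} & F_{mh} \\ F_{hm} & F_{hh}\end{smallmatrix}\right)$ and \emph{define} $A, B, D$ to be the blocks of the inverse covariance, $F^{-1} = \left(\begin{smallmatrix} A & B \\ B^\top & D\end{smallmatrix}\right)$, given explicitly by the standard block-inversion (Schur complement) formulas, e.g. $A = (F_{mm} - F_{mh}F_{hh}^{-1}F_{hm})^{-1}$ and $D = (F_{hh} - F_{hm}F_{mm}^{-1}F_{mh})^{-1}$. It is precisely the off-diagonal block $B$ that couples the observed-data residuals to the auxiliary residuals, which is the mechanism by which the synthetic moments inform $\theta$ (cf. Section \ref{sec:understanding}).

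I would then substitute the block-diagonal $G$ and the partitioned $F^{-1}$ into $G^\top F^{-1} G$, producing a $2\times 2$ block matrix whose $(\theta,\theta)$, $(\theta,\eta)$, and $(\eta,\eta)$ blocks are the quadratic forms $\frac{d\mathbb{E}[m]}{d\theta'}A\frac{d\mathbb{E}[m]}{d\theta}$, $\frac{d\mathbb{E}[m]}{d\theta'}B\frac{d\mathbb{E}[h]}{d\eta}$, and $\frac{d\mathbb{E}[h]}{d\eta'}D\frac{d\mathbb{E}[h]}{d\eta}$ respectively (with $(\eta,\theta)$ the transpose of $(\theta,\eta)$). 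The asymptotic variance of $\hat\theta_T$ is the $(\theta,\theta)$ block of $V = (G^\top F^{-1}G)^{-1}$, which by one further application of the Schur complement formula—now inverting the $2\times 2$ block matrix $G^\top F^{-1}G$ itself—equals the inverse of the $(\theta,\theta)$ block minus its Schur complement against the $(\eta,\eta)$ block. Carrying out this substitution yields exactly the stated expression, with $A,B,D$ identified as the blocks of $F^{-1}$ above. As a sanity check one recovers the two limiting claims of Section \ref{sec:understanding}: when the residuals decouple, $B = 0$ and the formula collapses to $(\frac{d\mathbb{E}[m]}{d\theta'}A\frac{d\mathbb{E}[m]}{d\theta})^{-1}$, the observed-data-only optimum.

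The main obstacle is careful bookkeeping rather than any conceptual difficulty: I must keep transpose conventions consistent (the paper writes $d/d\theta'$ for the Jacobian and $d/d\theta$ for its transpose) so that each sandwich $(\cdot)^\top(\text{block of }F^{-1})(\cdot)$ is assembled correctly, and I must verify the invertibility conditions underlying the two nested block inversions—namely that $F$ is positive definite (so $F_{hh}$ and its Schur complement are nonsingular) and that $\frac{d\mathbb{E}[h]}{d\eta'}D\frac{d\mathbb{E}[h]}{d\eta}$ is invertible, which follows from the rank and identification conditions on the auxiliary moments established in Appendix \ref{appx:conditions}. Once these are in place, the derivation is a direct, if notation-heavy, double application of block matrix inversion.
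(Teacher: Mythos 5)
Your proposal is correct and follows essentially the same route as the paper's proof: specialize to the optimally weighted case so $V=(G^\top F^{-1}G)^{-1}$, exploit the block-diagonal Jacobian arising from the separation of $\theta$ into the observed-data moments and $\eta$ into the auxiliary moments, identify $A,B,D$ as the Schur-complement blocks of $F^{-1}$, and apply the partitioned-inverse formula a second time to extract the $(\theta,\theta)$ block. The only differences are cosmetic (transpose bookkeeping), and your sanity check that $B=0$ recovers the observed-data-only variance matches the remark following the theorem.
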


We find two important conclusions. First, when these residuals are independent of the observed data, the formula reduces to the optimal variance based only on the fully observed data. That is, in the worst case where synthetic data is completely uninformative, including it does not hurt (at least asymptotically). Second, when the real and synthetic residuals are correlated (as we would hope), we derive a lower bound on the variance which is proportional to the residual variance in a regression of the observed data residuals on the span of the synthetic data residuals. This bound is minimized by choosing moments that span the conditional expectation of the observed data residuals given $T_i$, a sufficient condition for which is that the conditional distribution of $\hat{X},\hat{Y}$ or $\tilde{X},\tilde{Y}$ given $T$ equals the conditional distribution of $X,Y$.

\section{Experimental Results}
\label{sec:results}

\subsection{Baselines}
\label{sec:baselines}

Existing methods in the literature are well-studied in the context of predicted outcomes and more recently, predicted covariates. However, it remains unclear how to aggregate information from fully synthetic data. We consider how to adapt existing debiasing methods; PPI++ \citep{angelopoulos2023ppi++} and recent variants \citep{ji2025predictions}.

\paragraph{RePPI} The most general approach is perhaps given by RePPI \citep{ji2025predictions}, which predicts the optimal loss through fitting an arbitrary model that maps the proxy and synthetic loss to the real loss. This results in the objective defined in Proposition \ref{prop:reppi}. The resulting estimate retains asymptotic normality conditions (see Appendix \ref{appx:reppi} for the proof and algorithm details).

\paragraph{PPI++Proxy, PPI++Synth} To adapt PPI++ \citep{angelopoulos2023ppi++} to this setting, we take an instantiation of RePPI, where the model is a convex combination to limit the number of parameters. This results in the following objective defined in Proposition \ref{prop:ppi-multiple}. We refer to the estimator with $\alpha = 1$ as PPI++Proxy, as the synthetic terms vanish, yielding an estimator that combines real and proxy data. We refer to the estimator with tunable $\alpha \in [0,1]$ as PPI++Synth, which combines real, proxy, and synthetic data. 
Note that the addition of this hyperparameter $\alpha$ adds increased complexity, and techniques such as cross-fitting must be used to select it in a statistically valid fashion. The resulting estimate retains asymptotic normality conditions (see Appendix \ref{appx:ppi-multi} for the proof and algorithm details).

\paragraph{PPI++Synth (Oracle)} As an upper bound, we conduct a grid search over different possible $\alpha$ values \textit{without} cross-fitting. Note, this is not a valid solution in the setup, as it requires peeking in hyperparameter selection, but it provides an oracle version of the baseline for reference.

\begin{figure} [t]
    \centering
    \begin{subfigure}[b]{0.32\textwidth}
        \includegraphics[width=\textwidth]{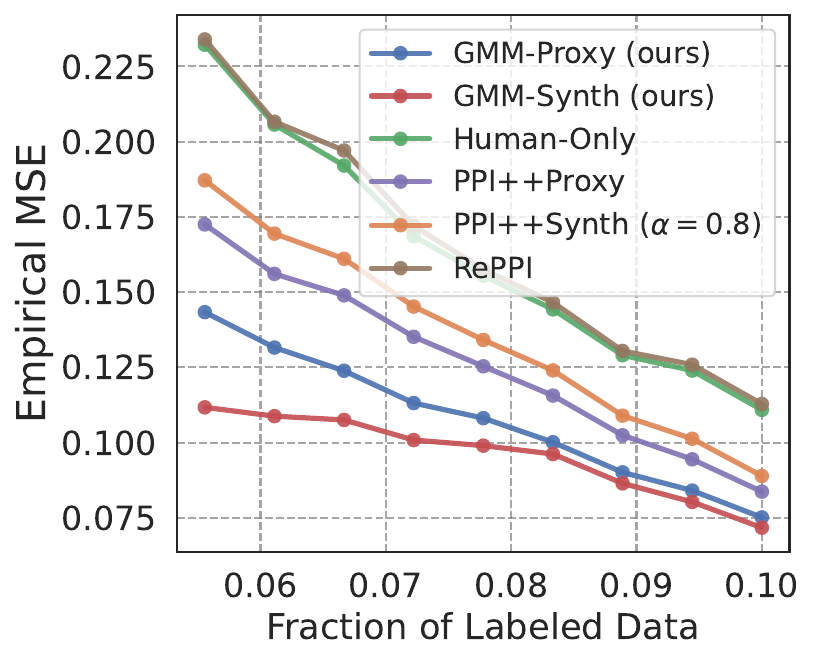}
    \end{subfigure}
    \hfill
    \begin{subfigure}[b]{0.32\textwidth}
        \includegraphics[width=\textwidth]{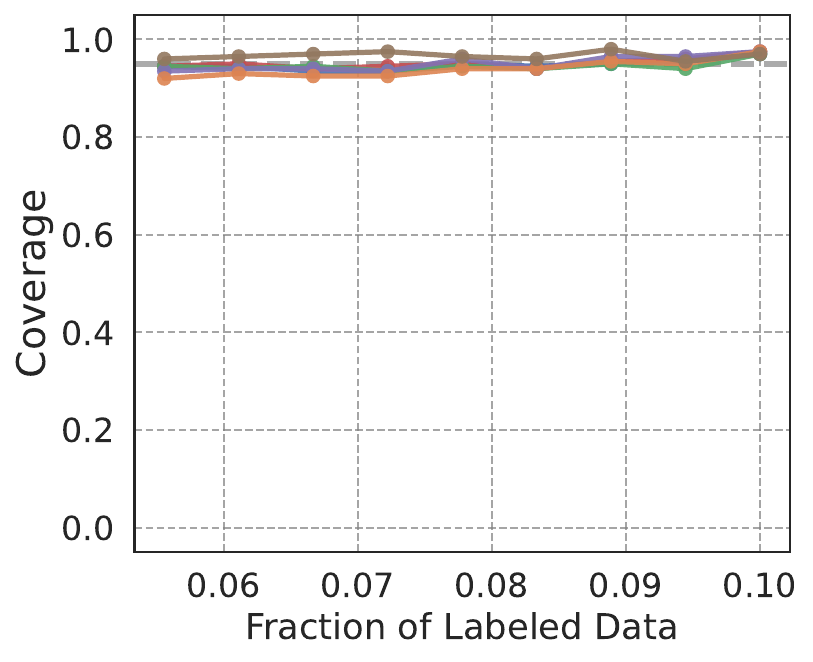}
    \end{subfigure}
    \hfill
    \begin{subfigure}[b]{0.32\textwidth}
        \includegraphics[width=\textwidth]{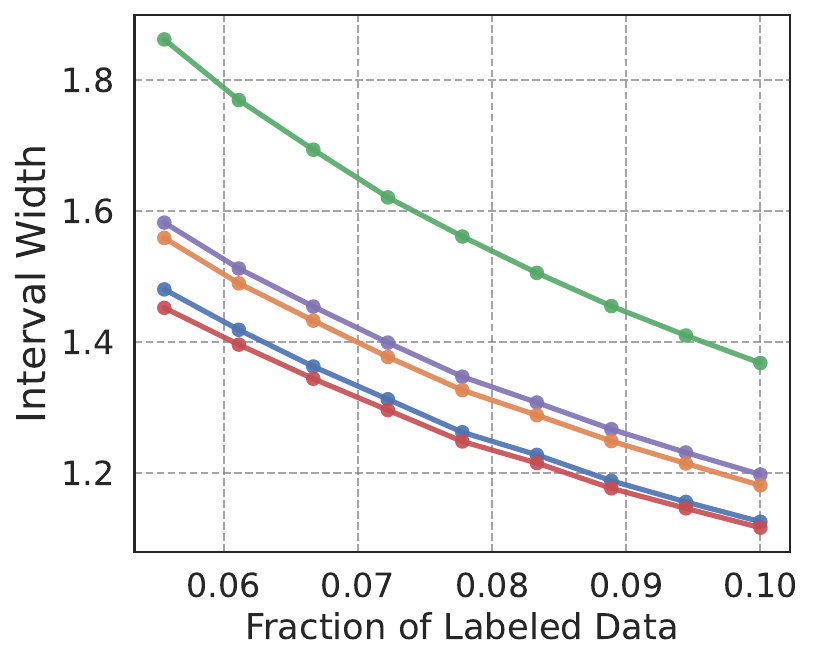}
    \end{subfigure}

    \vspace{0.25cm}
    \begin{subfigure}[b]{0.32\textwidth}
        \includegraphics[width=\textwidth]{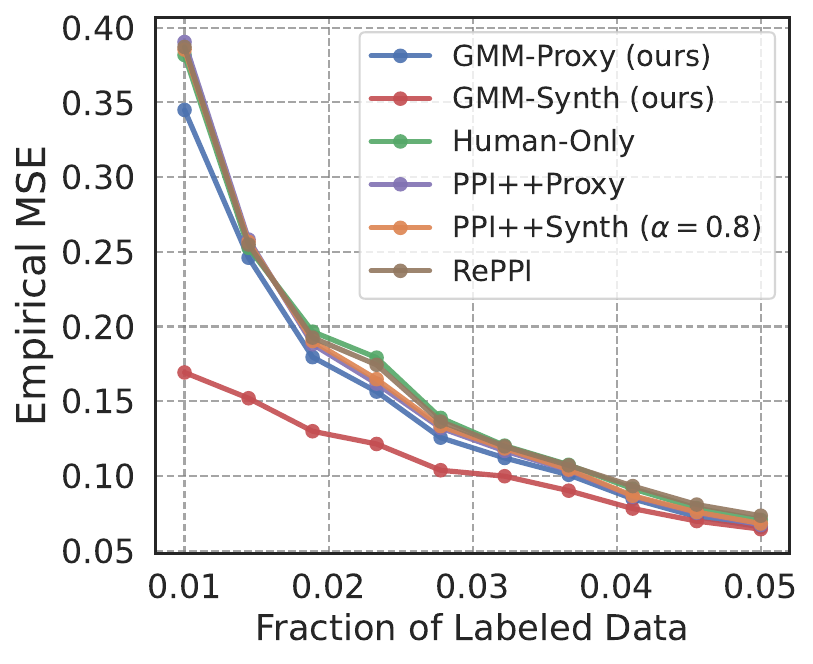}
    \end{subfigure}
    \hfill
    \begin{subfigure}[b]{0.32\textwidth}
        \includegraphics[width=\textwidth]{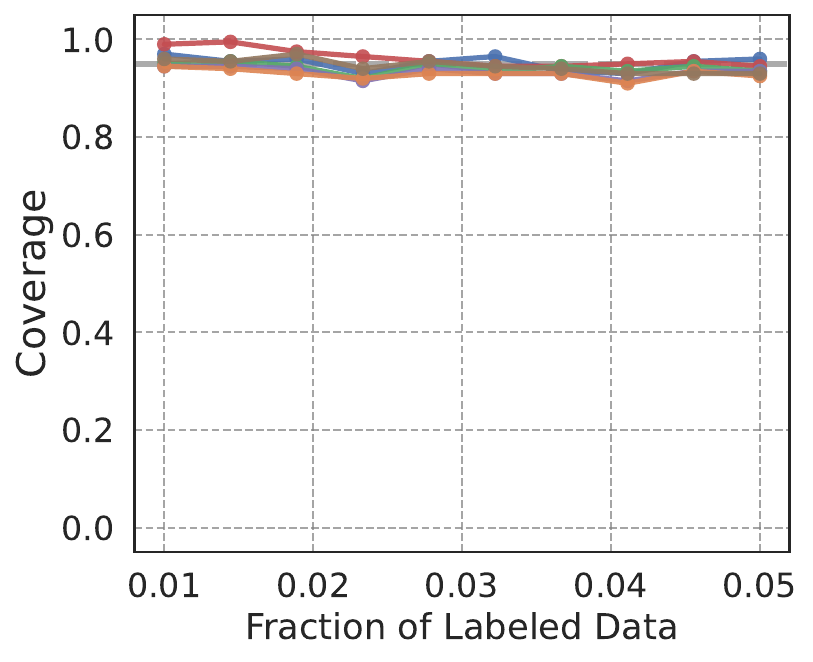}
    \end{subfigure}
    \hfill
    \begin{subfigure}[b]{0.32\textwidth}
        \includegraphics[width=\textwidth]{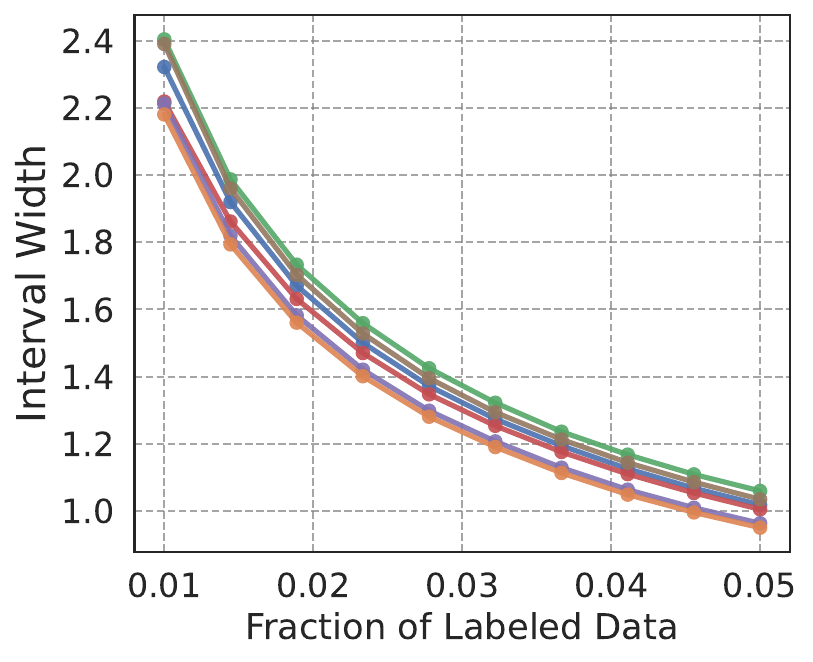}
    \end{subfigure}

    \vspace{0.25cm}

     \begin{subfigure}[b]{0.32\textwidth}
        \includegraphics[width=\textwidth]{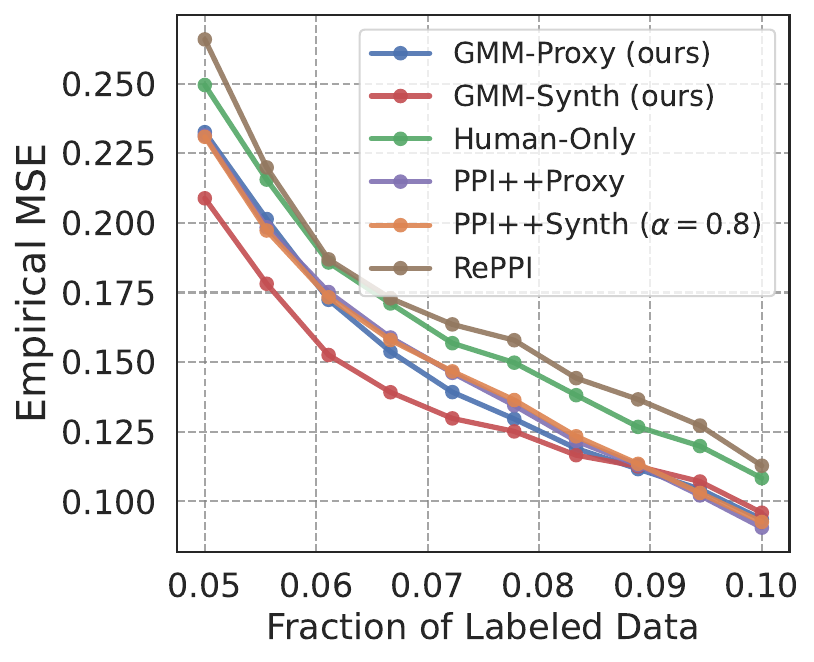}
    \end{subfigure}
    \hfill
    \begin{subfigure}[b]{0.32\textwidth}
        \includegraphics[width=\textwidth]{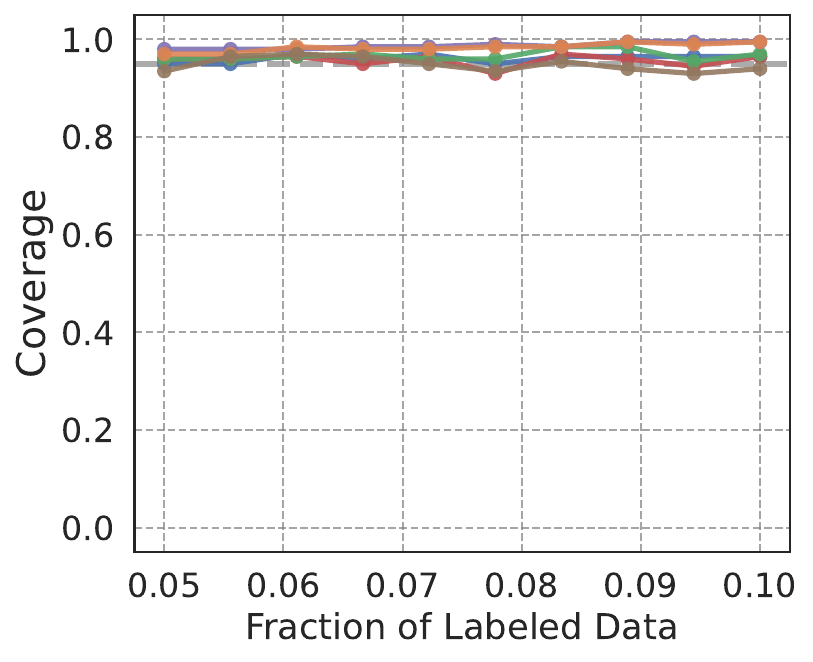}
    \end{subfigure}
    \hfill
    \begin{subfigure}[b]{0.32\textwidth}
        \includegraphics[width=\textwidth]{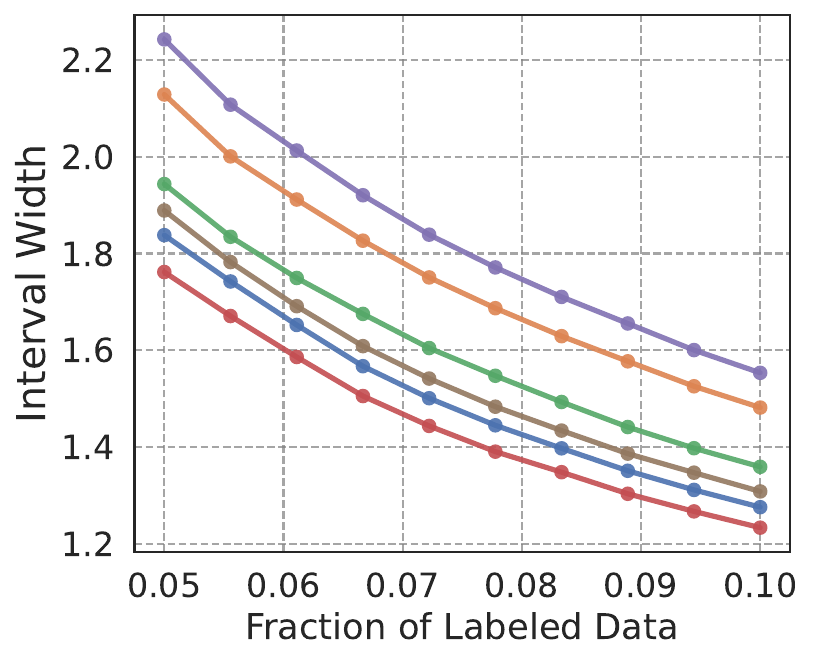}
    \end{subfigure}

    \vspace{0.25cm}

    \begin{subfigure}[b]{0.32\textwidth}
        \includegraphics[width=\textwidth]{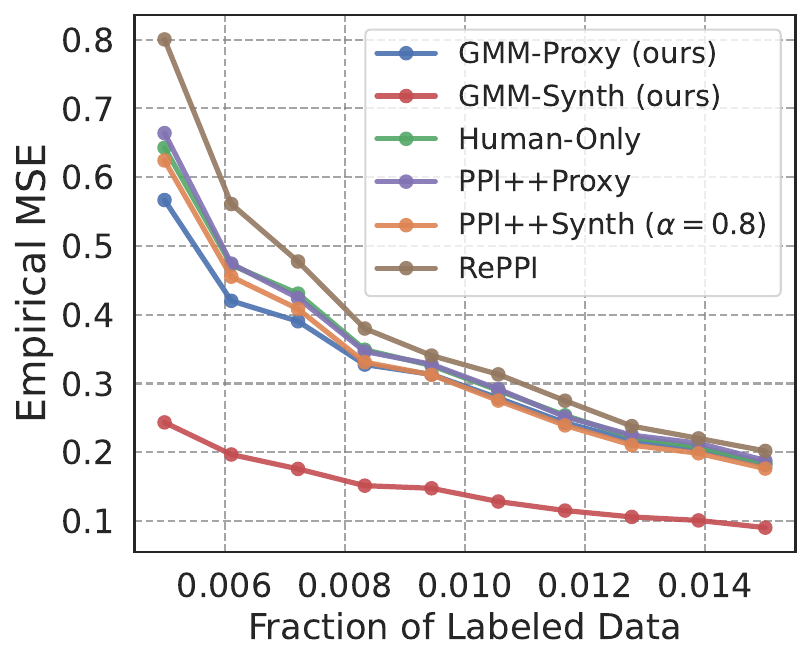}
    \end{subfigure}
    \hfill
    \begin{subfigure}[b]{0.32\textwidth}
        \includegraphics[width=\textwidth]{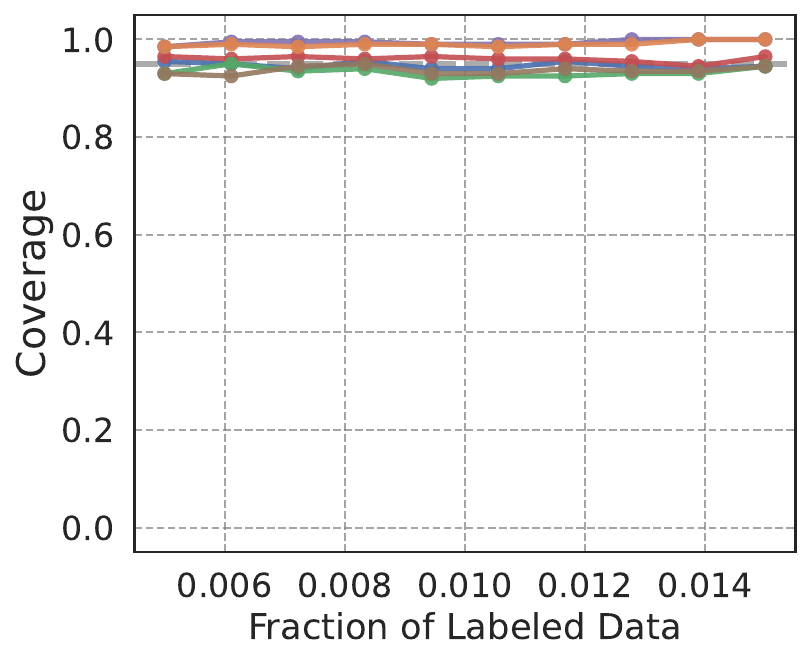}
    \end{subfigure}
    \hfill
    \begin{subfigure}[b]{0.32\textwidth}
        \includegraphics[width=\textwidth]{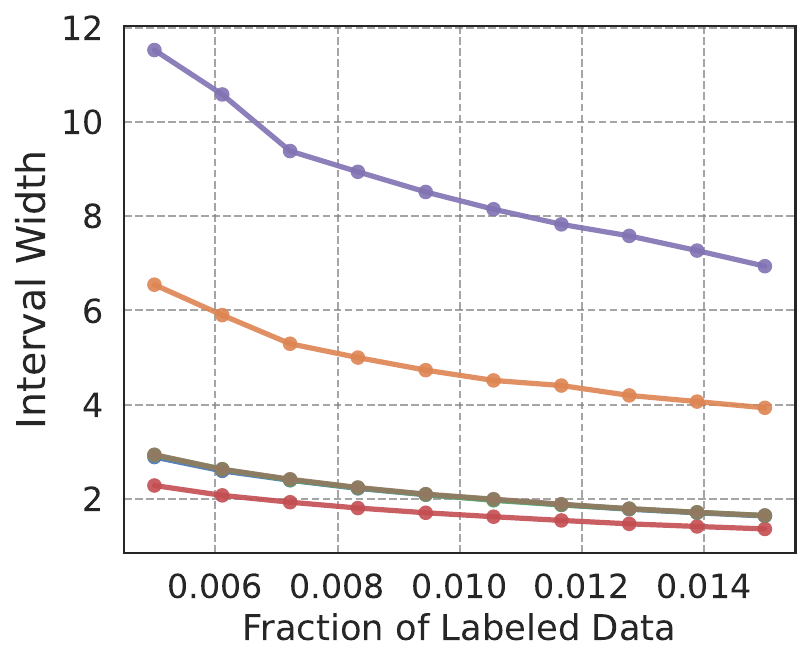}
    \end{subfigure}

    \caption{\textbf{Main Results (Logistic regression)}. We observe large reductions in MSE, especially in very low-label regimes. Each row corresponds to a task (i.e., 1pp, Hedging, Stance, Congressional Bills Data (from top to bottom)); each column corresponds to a metric (i.e., MSE, coverage, confidence interval width (from left to right)). Note that we report the PPI++Synth oracle number for PPI++Synth (see Figure \ref{fig:cross_fitting_lr} for PPI++Synth with cross-fitting results). When the best performing PPI++Synth is equivalent to PPI++Proxy (i.e., $\alpha = 1$), we report the second-best performing PPI++Synth method. See Figure \ref{fig:grid-search-lr} in Appendix \ref{appx:results} for full grid-search results over different $\alpha$ values. Results are averaged over 200 trials.}
    \vspace{-3mm}
    \label{fig:key_results}
\end{figure}

\subsection{Experimental Setup}

\begin{figure} [t]
    \centering
    \begin{subfigure}[b]{0.32\textwidth}
        \includegraphics[width=\textwidth]{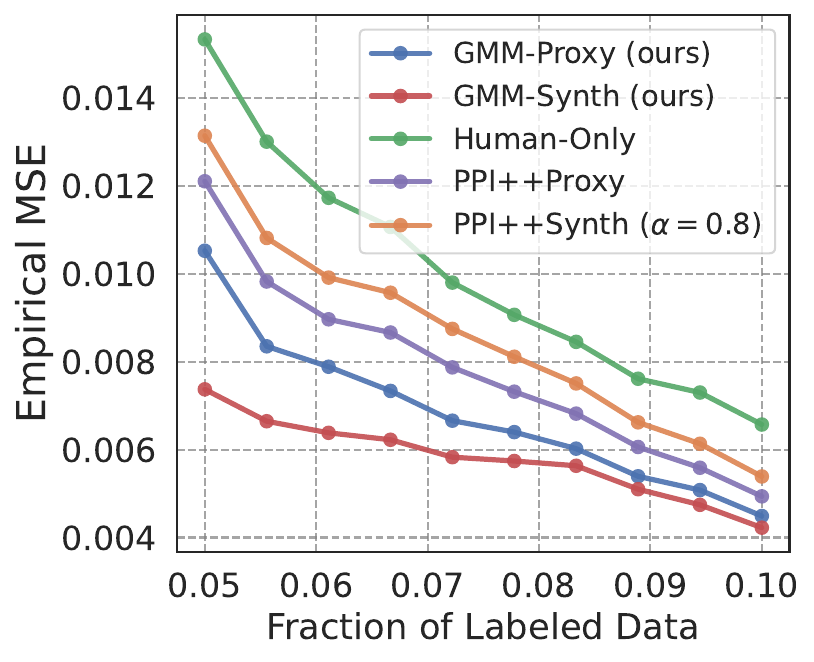}
    \end{subfigure}
    \hfill
    \begin{subfigure}[b]{0.32\textwidth}
        \includegraphics[width=\textwidth]{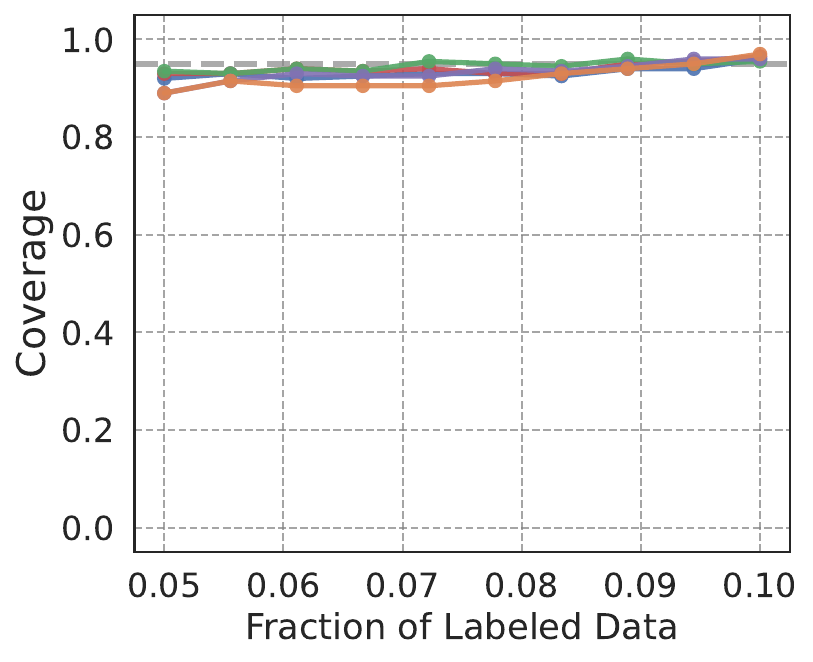}
    \end{subfigure}
    \hfill
    \begin{subfigure}[b]{0.32\textwidth}
        \includegraphics[width=\textwidth]{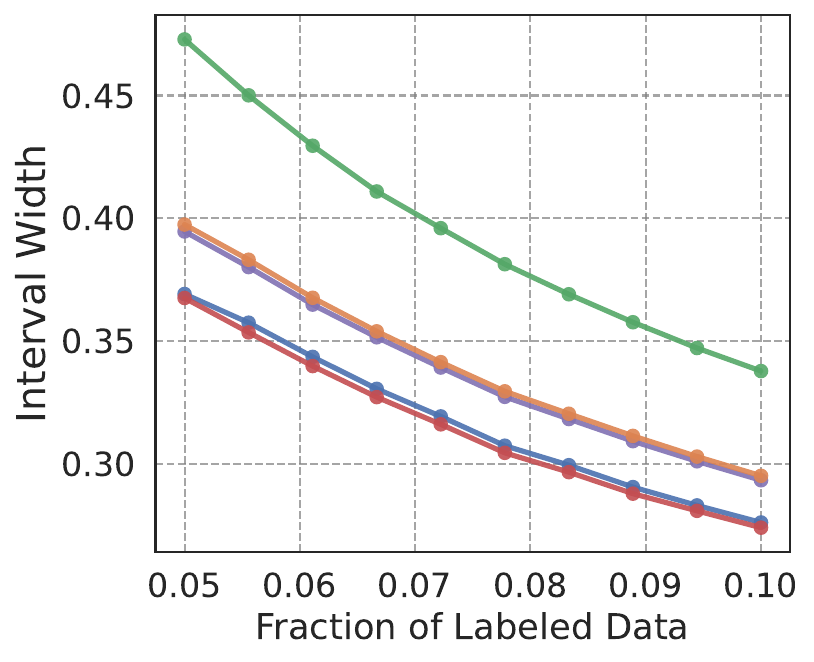}
    \end{subfigure}

    \vspace{0.25cm}
    \begin{subfigure}[b]{0.32\textwidth}
        \includegraphics[width=\textwidth]{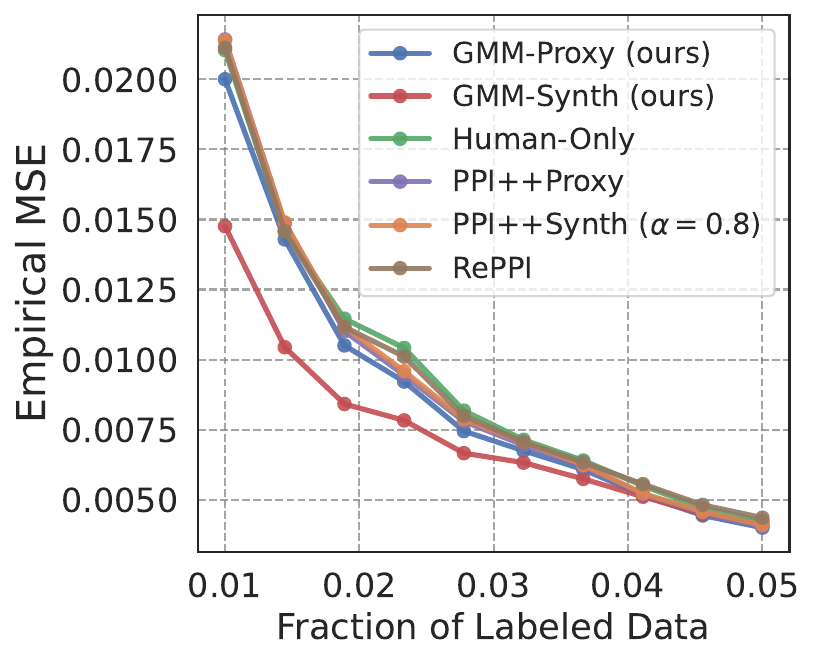}
    \end{subfigure}
    \hfill
    \begin{subfigure}[b]{0.32\textwidth}
        \includegraphics[width=\textwidth]{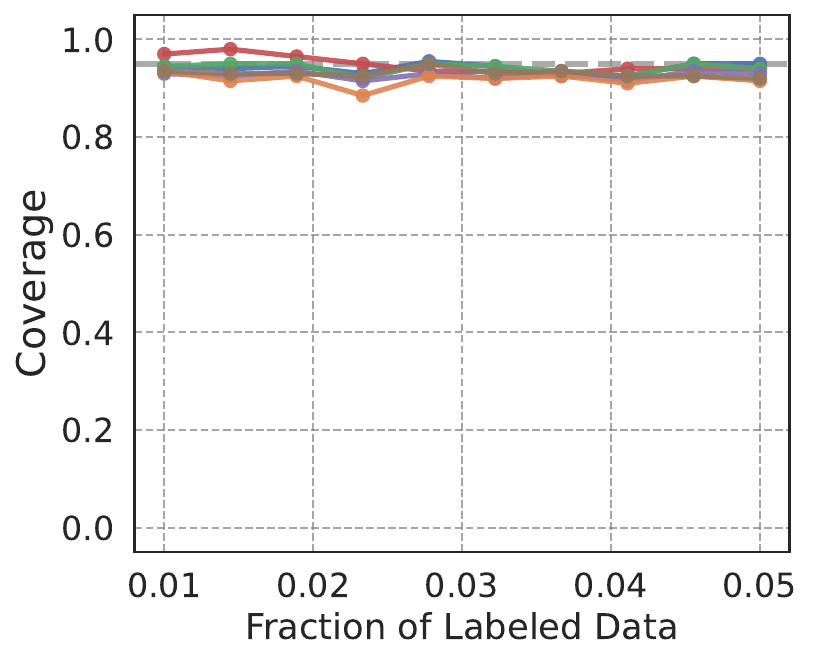}
    \end{subfigure}
    \hfill
    \begin{subfigure}[b]{0.32\textwidth}
        \includegraphics[width=\textwidth]{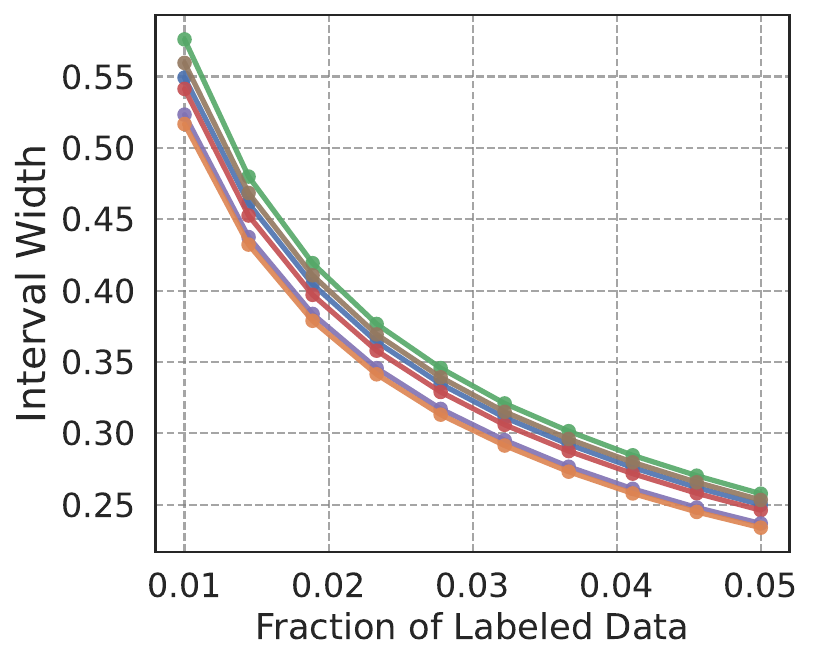}
    \end{subfigure}

    \vspace{0.25cm}

     \begin{subfigure}[b]{0.32\textwidth}
        \includegraphics[width=\textwidth]{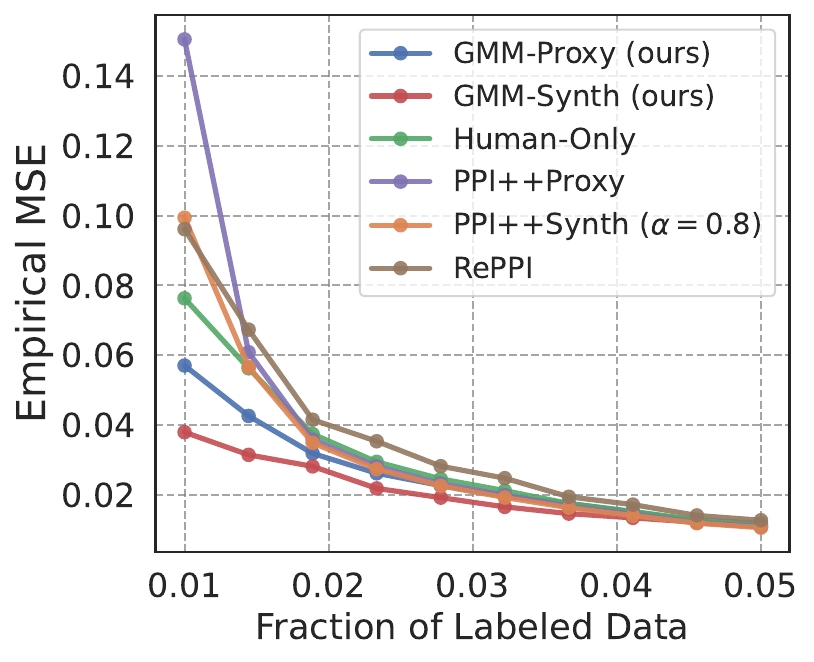}
    \end{subfigure}
    \hfill
    \begin{subfigure}[b]{0.32\textwidth}
        \includegraphics[width=\textwidth]{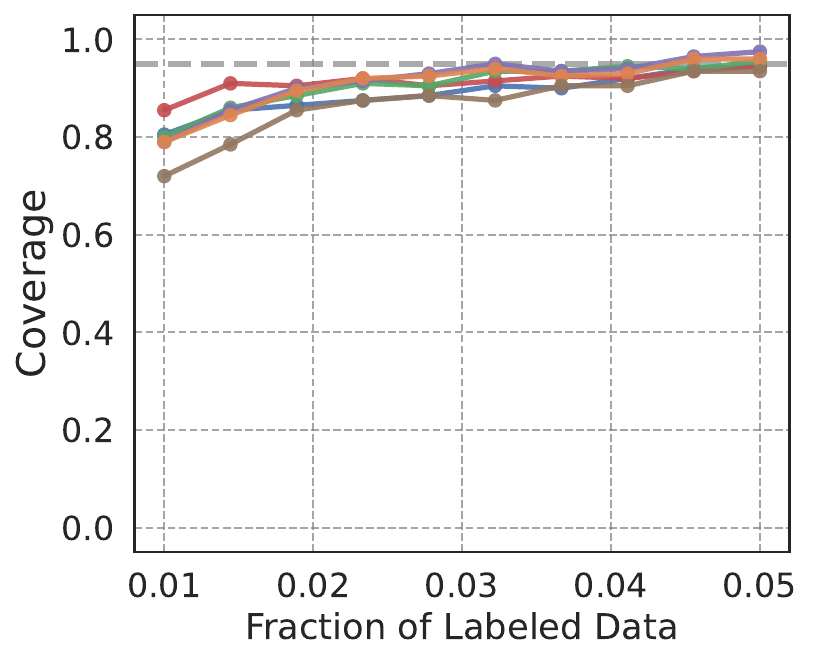}
    \end{subfigure}
    \hfill
    \begin{subfigure}[b]{0.32\textwidth}
        \includegraphics[width=\textwidth]{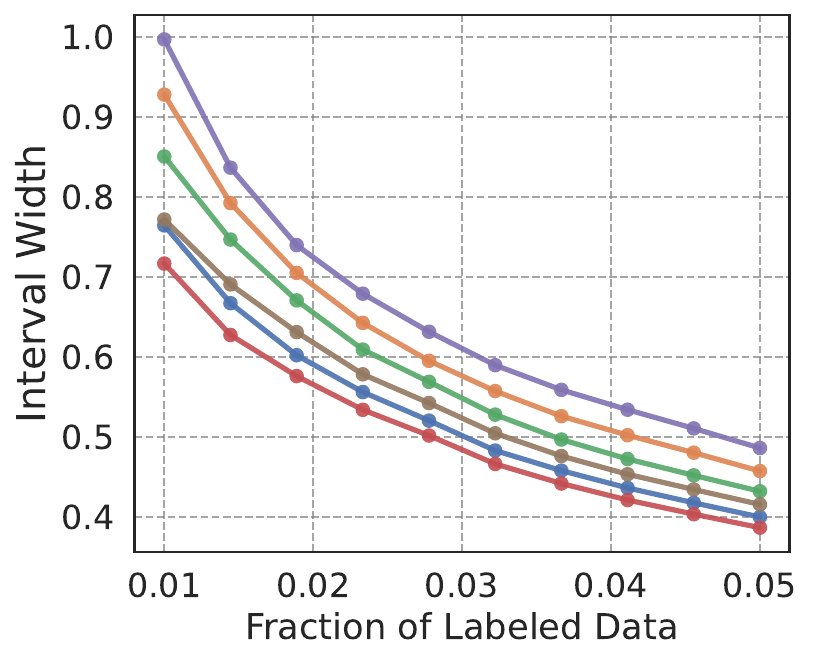}
    \end{subfigure}

    \vspace{0.25cm}

     \begin{subfigure}[b]{0.32\textwidth}
        \includegraphics[width=\textwidth]{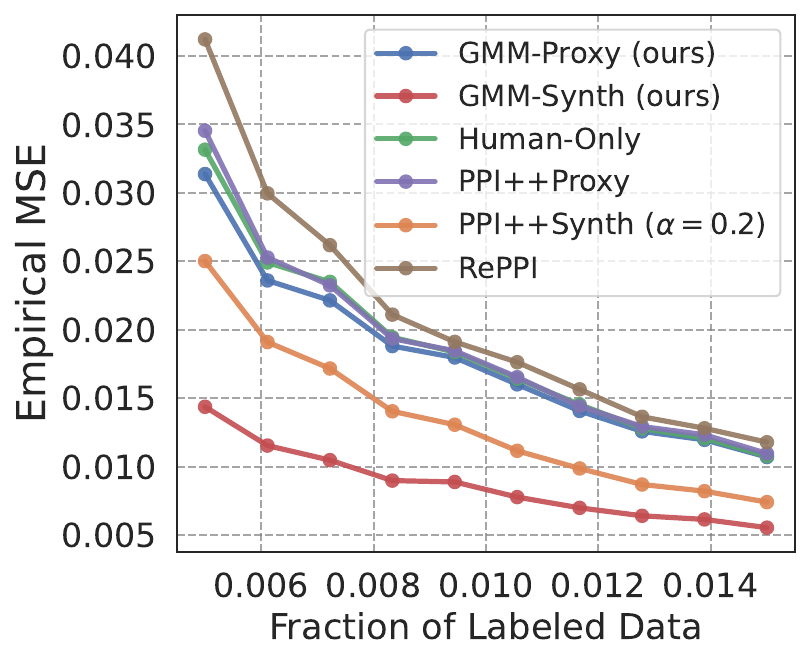}
    \end{subfigure}
    \hfill
    \begin{subfigure}[b]{0.32\textwidth}
        \includegraphics[width=\textwidth]{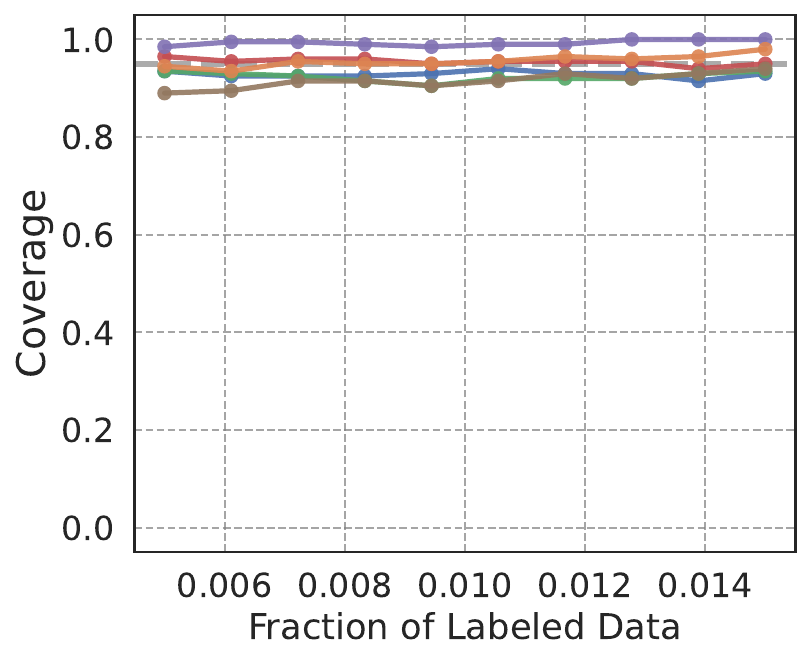}
    \end{subfigure}
    \hfill
    \begin{subfigure}[b]{0.32\textwidth}
        \includegraphics[width=\textwidth]{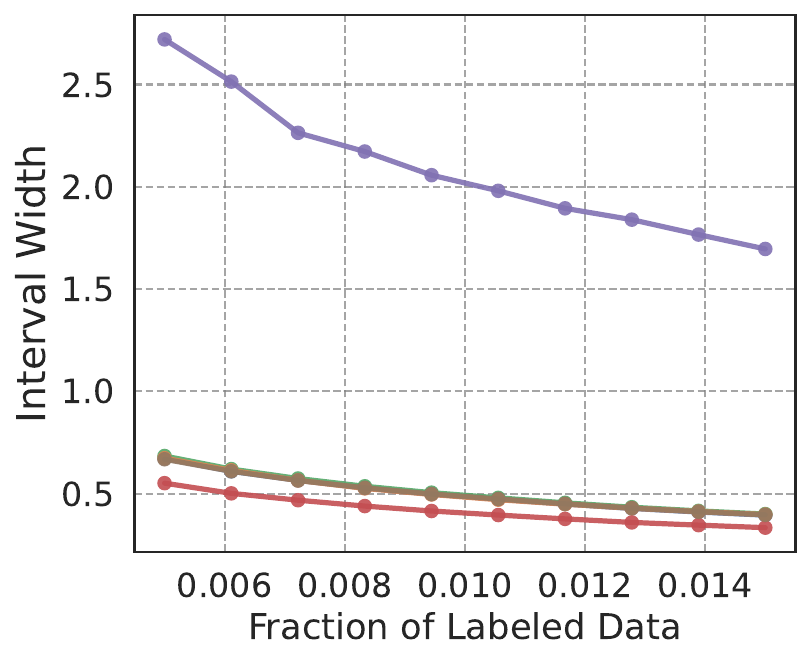}
    \end{subfigure}

    \caption{\textbf{Main Results (OLS)}. We again observe large reductions in MSE, especially in very low-label regimes. Each row corresponds to a task (i.e., 1pp, Hedging, Stance, Congressional Bills Data (from top to bottom)); each column corresponds to a metric (i.e., MSE, coverage, confidence interval width (from left to right)). Note that we report the PPI++Synth oracle number for PPI++Synth (see Figure \ref{fig:cross_fitting_ols} for PPI++Synth with cross-fitting results). When the best performing PPI++Synth is equivalent to PPI++Proxy (i.e., $\alpha = 1$), we report the second-best performing PPI++Synth method. See Figure \ref{fig:grid-search-ols} in Appendix \ref{appx:results} for full grid-search results over different $\alpha$ values. Results are averaged over 200 trials.}
    \vspace{-4mm}
    \label{fig:key_results_ols}
\end{figure}

\paragraph{Datasets.} 
We validate the finite-sample performance of our estimator for logistic regression and ordinary least squares (OLS) regression on the following 4 computational social science tasks: First, we use online requests posted on Stack Exchange and Wikipedia \citep{danescu2013computational} to estimate how the presence of hedging markers (i.e., expressions of uncertainty) affect perceived politeness. Second, we use the same dataset to estimate how the usage of first-person plural pronouns affect perceived politeness. Third, we use a corpus of climate-related news headlines \citep{hmielowski2014attack} to estimate the effect of affirming linguistic devices on media stance toward global warming (i.e., whether the news headline supports or rejects climate change). Lastly,
we use congressional bills texts \citep{adler2011congressional} to estimate the effect of a legislator's DW-Nominate measure \citep{lewis2024congressional} of ideology on the type of bill (whether the bill pertains to macroeconomy). In all tasks, the target parameter is the regression coefficient corresponding to the explanatory variable of interest.

\paragraph{Models and Metrics.}
We use GPT-4o \citep{hurst2024gpt} without any task-specific fine-tuning to generate both proxy and synthetic data. We also include additional results, using open-source, worse quality models (i.e., Llama-3-8b and Qwen-3-8b) in Appendix \ref{appx:results} (Figures \ref{fig:llama_lr}, \ref{fig:llama_ols}, \ref{fig:qwen_lr}, \ref{fig:qwen_ols}). We evaluate our method's performance against the adapted baselines discussed in Section \ref{sec:baselines} using four key metrics: empirical mean-squared error (MSE), coverage, confidence interval width, and effective sample size. The effective sample size represents the number of human-labeled samples that a 
classical estimator would require to achieve the same MSE as our method's estimate. In other words, this metric quantifies how many human annotations the method effectively saves while maintaining equivalent mean squared error. 

\subsection{Results}

The results for our method's performance are shown in Figures $\ref{fig:key_results}$ and \ref{fig:key_results_ols}. We will highlight some key observations here. Observation 1: GMM-Synth achieves the lowest MSE, outperforming all baselines on 8 out of 8 downstream tasks. Notably, performance gains (of more than 50\% reductions in MSE) are most pronounced when the fraction of labeled data is small, precisely the setting where the need for synthetic data is best motivated. Crucially, this does not come at a loss of validity of the parameter estimates; GMM-Synth retains valid coverage across all tasks and results in tighter confidence intervals in 7 out of 8 downstream tasks. In Figures \ref{fig:ess-lr} and \ref{fig:ess-ols} (in Appendix \ref{appx:results}), we further observe that our method substantially improves effective sample size in data-limited settings. In other words, our method effectively saves large amounts of human annotations (up to more than 50\%) while maintaining equivalent mean squared error. Observation 2: 
GMM-Synth consistently exhibits gains over GMM-Proxy across all considered tasks. This demonstrates that synthetic data provides additional benefits beyond those of proxy-labeled data, and that our method effectively integrates these multiple sources while retaining their respective benefits.
In other words, it shows how much additional benefit there is for the practitioner to not only use the model to label unlabeled samples but also use it to generate entirely new unlabeled samples to aid in statistical inference. 
Observation 3: Interestingly, unlike the additional gains demonstrated in GMM-Synth compared to GMM-Proxy, we observe that incorporating synthetic data via PPI++Synth (compared to PPI++Proxy) lead to benefits that are much less pronounced in 3 tasks, and, in fact, result in no gains in MSE in the remaining 5 tasks. This empirically demonstrates that our method is able to incorporate synthetic data much more effectively when compared to adaptations of existing debiasing methods in the literature. Importantly, we note that across all settings, using the proxy data and synthetic data alone yields greatly biased estimates (see Figure \ref{fig:synth-only} in Appendix \ref{appx:results}). Further, we note that the same conclusions hold with even worse quality synthetic data generations from weaker language models; see Appendix \ref{appx:results} for results on open-source models such as LLaMA \citep{grattafiori2024llama} or Qwen \citep{yang2025qwen3}.

\vspace{-2pt}
\section{Discussion}
How synthetic data pipelines should be designed and implemented in practice hinges on reliable mechanisms for integrating information from them.
In this work, we introduce a principled framework for reliably incorporating fully synthetic samples into downstream statistical analyses. We provide practical guidance for constructing synthetic samples from text-based foundation models in ways that support valid inference, and propose a new estimator based on generalized method of moments (GMM) estimation, where the key intuition is that synthetic data will improve performance when the synthetic-data residuals are predictive of the real-data residuals.
Across the studied inferential tasks, we indeed observe a large degree of improvement in estimation, especially in very low-label regimes.
More broadly, this work takes a first step toward understanding how imperfect synthetic data from foundation models can systematically be leveraged to support valid inference and to make reliable downstream conclusions. 
With the increased adoption and growing capabilities of foundation models, pipelines that incorporate their outputs will only become more complex. 
Our method provides an easily extensible estimation framework that can safely integrate the increasing variety and quality of synthetic data sources.

\paragraph{Limitations and future directions.}  A potential limitation of our framework is its reliance on the quality of the generative model (e.g., an LLM), as expected. As with other debiasing approaches, very poor-quality synthetic data would yield little-to-no benefits in statistical efficiency. 
Moreover, our theoretical guarantees, like those of debiasing methods, hold asymptotically and thus may fail to hold in extremely low-data regimes, potentially leading to undercoverage of the target parameter.

\begin{ack}
We thank Michael Oberst and Gati Aher for helpful discussions in developing this work. We also thank Santiago Cortes-Gomez for thoughtful comments on the manuscript. YB was supported in part by the AI2050 program at Schmidt Sciences (Grant G-22-64474).
\end{ack}

\bibliography{main}
\bibliographystyle{plainnat} 

\appendix

\section{Conditions for Consistency and Asymptotic Normality}\label{appx:conditions}

We provide a discussion about the necessary conditions for a GMM estimator to be consistent and asymptotically normal, showing that these conditions are indeed met for our augmented GMM. 

As mentioned in the construction of our estimator, we define one moment condition for each parameter on the observed data $D$. 
We also define two moments for each parameter on the proxy and synthetic data. 
This leads to an overidentified system, with more moments than parameters, ensuring that the target parameter is identifiable.  

Next, we establish a few conditions for valid asymptotic properties of our GMM estimator, specifically about the convergence and distributions of the stacked vector of the sample moments $g_t(\theta^*, \eta^*)$ at its optimum, which we will refer to as $g_t$ for brevity.
First, we require that this vector of moments converges to its expectation, or that
$$
\frac{1}{N}\sum_{t=1}^{N} g_t \;\to\; \mathbb{E}[g_t],
$$
where $N = n + m$ is our total amount of data. 
Next, all moments must jointly respect the central limit theorem, or that
$$\sqrt{N} \left(\frac{1}{N}\sum_{t=1}^{N} g_t \right) \xrightarrow[]{d}  \mathcal{N}(0, F),$$
where $F$ is some finite covariance matrix of all the moments $g_t$.

Under these standard regularity conditions on the moment vector $g_t$ \citep{newey1994large}, these conditions are immediately satisfied for the moments defined on observed data, as each observation of the moments is independent. The same holds for the moments defined on proxy data, since $\hat{X}, \hat{Y}$ are functions of independent inputs $T$, and are therefore also independent across observations. 
The case of synthetic data is slightly more nuanced, but we show that the required conditions still hold, through the following lemma.

\begin{restatable}{lemma}{synthetic_moments}
\label{lemma:synthetic_moments}
Let \(\{\phi_j \}_{j=1}^{N}\) represent observations of the subset of moments corresponding to parameters of the synthetic data, and assume \(\mathbb{E}\|\phi_j\|^2<\infty\). Then, they are i.i.d., and consequently
\begin{equation*}
  \frac{1}{N}\sum_{j=1}^{N} \phi_j
    \; \xrightarrow{}\;\mathbb{E}[\phi_j]
  \quad\text{and}\quad
  \sqrt{N}\left(\frac{1}{N}\sum_{j=1}^{N} \phi_j\right)
    \;\xrightarrow{d}\;
  \mathcal{N}(0, \Sigma(\phi_j)),    
\end{equation*}
where $\Sigma(\phi_j)$ is the covariance matrix of $\phi_j$. 
\end{restatable}

\begin{proof}
We begin by noting that texts \(\{T_j\}_{j=1}^{N}\) are drawn i.i.d. from the marginal distribution \(\mathcal{D}_T\). For each \(T_j\), a synthetic text \(\tilde T_j\) is produced by a generative model (i.e., by an LLM), which uses independent randomness for each call. The model is conditioned only on an individual sample \((T_j, X_j)\) if \(j\) is labeled or \((T_j, \hat X_j)\) otherwise. Since the generative process for each \(T_j\) is independent and the mapping \(\tilde T_j\mapsto (\tilde X_j,\tilde Y_j)\) is applied identically to each sample, the resulting pairs \((\tilde X_j, \tilde Y_j)\) are also i.i.d. As these pairs are drawn i.i.d., then the conditions are met via the central limit theorem.
\end{proof}

This result shows that the required conditions on the sample moments hold in our setting of proxy and synthetic samples; under the regularity conditions of \citet{newey1994large} Theorem 3.2, one immediately obtains Proposition \ref{prop:asymptotics} on the asymptotic behavior of our GMM estimator.

\section{Asymptotic Efficiency} \label{appx:optimality}
From \citet{chamberlain1987asymptotic}, we know that the lower bound on asymptotic variance among all regular estimators based on the moment restrictions is precisely the asymptotic variance that can be achieved in the general case by using the GMM estimator. Specifically, it achieves the semiparametric efficiency bound--the smallest possible asymptotic variance attainable by any regular estimator using these moment conditions. This corresponds to the local asymptotic minimax risk over all statistical models such that these moment equalities hold, in the sense that for any $\nu\in\mathbb{R}^d$

\begin{equation*}
\underset{N\to\infty}{\lim}\underset{\tilde{\theta}\text{\ measurable}}{\inf}\underset{(\theta,\tilde{\mathcal{D}})\in\Gamma(\theta^*,\mathcal{D})}{\sup}\mathbb{E}_{\tilde{\mathcal{D}}}[[\nu^{T}\sqrt{N}(\tilde{\theta}-\theta)]^2]\geq \nu^{T}G(\theta^*,\eta^*)^{T}F^{-1}G(\theta^*,\eta^*)\nu
\end{equation*}

where $\Gamma(\theta^*,\mathcal{D})$ is any local neighborhood of true parameter $\theta^*$ and data-generating distribution $\mathcal{D}$ satisfying the moment conditions and regularity conditions $C_1$ in \citet{chamberlain1987asymptotic} for parameter $\theta$. 

\begin{proof}
    Follows directly from \citet{chamberlain1987asymptotic} Theorem 2.
\end{proof}

In words, this indicates that no estimator can achieve lower asymptotic variance than GMM uniformly over all local distributions satisfying the same moment conditions.

\section{Moment Conditions}\label{appx:moments}

We provide a concrete example of our moment construction for the case of generalized linear models (GLMs) in two-dimensions. 

\subsection{Example 1. Generalized Linear Models}

Recall that the standard GLM formulation optimizes the objective function,
\begin{align*}
    \ell_\theta(x, y) = -yx^T \theta + f(x^T \theta),
\end{align*}

where $f$ is a function that is convex and infinitely differentiable. We remark that this recovers the setting of logistic regression when $f(z) = \log(1 + \exp({z}))$. Let us assume a two-dimensional setting for illustration. This translates to the population moment conditions of 
\begin{align*}
    \mathbb{E}\left[X_1\left(Y - \frac{\partial f}{\partial \theta_1}(X^T \theta^*)\right)\right] & = 0, \quad 
    \mathbb{E}\left[X_2\left(Y - \frac{\partial f}{\partial \theta_2}(X^T \theta^*)\right)\right]  = 0 
\end{align*}

We have similar moments for proxy and synthetic data, where we use parameters $\eta = (\eta^{(1)}, \eta^{(2)})$, which are also two-dimensional. Within our GMM framework, we construct the following set of moment conditions across the observed, proxy, and synthetic data. 

\begin{align*}
g_t(\theta, \eta) =
\left[
\begin{array}{c}
\vphantom{X_t ( Y_t - \frac{\partial f}{\partial \theta_1}(X_t^T \theta) )} s_t \\
\vphantom{X_t ( Y_t - \frac{\partial f}{\partial \theta_1}(X_t^T \theta) )} s_t \\
\vphantom{\hat{X}_t ( \hat{Y}_t - \frac{\partial f}{\partial \eta^{(1)}_1}(\hat{X}_t^T \eta^{(1)}) )} s_t \\
\vphantom{\hat{X}_t ( \hat{Y}_t - \frac{\partial f}{\partial \eta^{(1)}_1}(\hat{X}_t^T \eta^{(1)}) )} s_t \\
\vphantom{\tilde{X}_t ( \tilde{Y}_t - \frac{\partial f}{\partial \eta^{(2)}_1}(\hat{X}_t^T \eta^{(2)}) )} s_t \\
\vphantom{\tilde{X}_t ( \tilde{Y}_t - \frac{\partial f}{\partial \eta^{(2)}_1}(\hat{X}_t^T \eta^{(2)}) )} s_t \\
\vphantom{\hat{X}_t ( \hat{Y}_t - \frac{\partial f}{\partial \eta^{(1)}_1}(\hat{X}_t^T \eta^{(1)}) )} 1 \\
\vphantom{\hat{X}_t ( \hat{Y}_t - \frac{\partial f}{\partial \eta^{(1)}_1}(\hat{X}_t^T \eta^{(1)}) )} 1 \\
\vphantom{\tilde{X}_t ( \tilde{Y}_t - \frac{\partial f}{\partial \eta^{(2)}_1}(\hat{X}_t^T \eta^{(2)}) )} 1 \\
\vphantom{\tilde{X}_t ( \tilde{Y}_t - \frac{\partial f}{\partial \eta^{(2)}_1}(\hat{X}_t^T \eta^{(2)}) )} 1 \\
\end{array}
\right]
\odot
\left[
\begin{array}{c}
X_{t,1} ( Y_t - \frac{\partial f}{\partial \theta_1}(X_t^T \theta) ) \\
X_{t,2} ( Y_t - \frac{\partial f}{\partial \theta_2}(X_t^T \theta) ) \\
\hat{X}_{t,1} ( \hat{Y}_t - \frac{\partial f}{\partial \eta^{(1)}_1}(\hat{X}_t^T \eta^{(1)}) ) \\
\hat{X}_{t,2} ( \hat{Y}_t - \frac{\partial f}{\partial \eta^{(1)}_2}(\hat{X}_t^T \eta^{(1)}) ) \\
\tilde{X}_{t,1} ( \tilde{Y}_t - \frac{\partial f}{\partial \eta^{(2)}_1}(\tilde{X}_t^T \eta^{(2)}) ) \\
\tilde{X}_{t,2} ( \tilde{Y}_t - \frac{\partial f}{\partial \eta^{(2)}_2}(\tilde{X}_t^T \eta^{(2)}) ) \\
\hat{X}_{t,1} ( \hat{Y}_t - \frac{\partial f}{\partial \eta^{(1)}_1}(\hat{X}_t^T \eta^{(1)}) ) \\
\hat{X}_{t,2} ( \hat{Y}_t - \frac{\partial f}{\partial \eta^{(1)}_2}(\hat{X}_t^T \eta^{(1)}) ) \\
\tilde{X}_{t,1} ( \tilde{Y}_t - \frac{\partial f}{\partial \eta^{(2)}_1}(\tilde{X}_t^T \eta^{(2)}) ) \\
\tilde{X}_{t,2} ( \tilde{Y}_t - \frac{\partial f}{\partial \eta^{(2)}_2}(\tilde{X}_t^T \eta^{(2)}) ) \\
\end{array}
\right]
\end{align*}

\section{Partitioned GMM Asymptotic Variance}\label{appx:partitioned}

We now derive the asymptotic variance of our GMM estimator for specifically the target parameter $\hat{\theta}_T$.

\targetvariance*

\begin{proof}
    With the optimal choice of weight matrix for the full GMM estimation problem, the asymptotic variance of the vector $(\hat{\theta},\hat{\eta})$ converges to 
$(G^{T}F^{-1}G)^{-1}$.
To obtain the variance for $\hat{\theta}$ specifically, partition the moments into $g_t(\theta,\eta)=(m_t(\theta)^{\prime},h_t(\eta)^{\prime})^\prime$, where  $m_t(\theta)=S_t\odot \psi(\theta)$, and

\begin{equation*}
h_t(\eta) = \left[
\begin{array}{c}
S_t \\
S_t \\
\vdots \\
S_t \\
1 \\
\vdots \\
1
\end{array}
\right] \odot
\left[
\begin{array}{c}
\psi(\eta^{(1)}) \\
\vdots \\
\psi(\eta^{(M)}) \\
\psi(\eta^{(1)}) \\
\vdots \\
\psi(\eta^{(M)})
\end{array}
\right]
\end{equation*}

Given this partitioning, we can express

$$G(\theta,\eta)=\left[
\begin{array}{cc}
\frac{d \mathbb{E}[m(\theta)]}{d\theta} & 0 \\
0 & \frac{d \mathbb{E}[h(\eta)]}{d\eta}
\end{array}
\right]$$

$$F=\left[
\begin{array}{cc}
\mathbb{E}[m_t(\theta)m_t(\theta)^{\prime}] & \mathbb{E}[m_t(\theta)h_t(\eta)^{\prime}] \\
\mathbb{E}[h_t(\eta)m_t(\theta)^{\prime}] & \mathbb{E}[h_t(\theta)h_t(\theta)^{\prime}]
\end{array}
\right]$$

By the partitioned inverse formula, we can express $F^{-1}$ as 

$$\left[
\begin{array}{cc}
A & B \\
B^{\top} & D
\end{array}
\right]$$

where the upper left block $A$ is 

$$(\mathbb{E}[m_t(\theta)m_t(\theta)^{\prime}]-\mathbb{E}[m_t(\theta)h_t(\eta)^{\prime}]\mathbb{E}[h_t(\theta)h_t(\theta)^{\prime}]^{-1}\mathbb{E}[h_t(\eta)m_t(\theta)^{\prime}])^{-1}$$

This term can be interpreted as the inverse of the asymptotic residual variance of a regression of $m_t(\theta)$ on the span of the vector $h_t(\eta)$. 

The lower right block $D$ is, symmetrically, the asymptotic residual variance of a regression of $h_t(\theta)$ on the span of the vector $m_t(\eta)$: 

$$(\mathbb{E}[h_t(\theta)h_t(\theta)^{\prime}]-\mathbb{E}[h_t(\theta)m_t(\eta)^{\prime}]\mathbb{E}[m_t(\theta)m_t(\theta)^{\prime}]^{-1}\mathbb{E}[m_t(\eta)h_t(\theta)^{\prime}])^{-1}$$

Finally, the off-diagonal term multiplies $A$ by the coefficient in a regression of $m$ on $h$: 

$$B=-A\mathbb{E}[m_t(\theta)h_t(\eta)^{\prime}]\mathbb{E}[h_t(\theta)h_t(\theta)^{\prime}]^{-1}$$

For the full variance,

$$G^{\top}F^{-1}G=\left[
\begin{array}{cc}
\frac{d \mathbb{E}[m(\theta)]}{d\theta^\prime}A\frac{d \mathbb{E}[m(\theta)]}{d\theta} & \frac{d \mathbb{E}[m(\theta)]}{d\theta^\prime}B\frac{d \mathbb{E}[h(\eta)]}{d\eta} \\
\frac{d \mathbb{E}[h(\eta)]}{d\eta^\prime}B^{\top}\frac{d \mathbb{E}[m(\theta)]}{d\theta} & \frac{d \mathbb{E}[h(\eta)]}{d\eta^\prime}D\frac{d \mathbb{E}[h(\eta)]}{d\eta}
\end{array}
\right]$$

Applying the partitioned inverse formula again, the upper left block of $(G^{\top}F^{-1}G)^{-1}$, which gives exactly the asymptotic variance of $\sqrt{T}(\hat{\theta}_T-\theta)$, is equal to 
{\small
$$(\frac{d \mathbb{E}[m(\theta)]}{d\theta^\prime}A\frac{d \mathbb{E}[m(\theta)]}{d\theta}-\frac{d \mathbb{E}[h(\eta)]}{d\eta^\prime}B^{\top}\frac{d \mathbb{E}[m(\theta)]}{d\theta}(\frac{d \mathbb{E}[h(\eta)]}{d\eta^\prime}D\frac{d \mathbb{E}[h(\eta)]}{d\eta})^{-1}\frac{d \mathbb{E}[m(\theta)]}{d\theta^\prime}B\frac{d \mathbb{E}[h(\eta)]}{d\eta})^{-1}$$
}
This can be interpreted similarly as the inverse of the asymptotic variance of the residual prediction error from a regression of $A^{-1/2}\frac{d m(\theta)}{d\theta}$ onto the span of a weighted linear combination of terms in $\frac{d h(\eta)}{d\eta}$. 
\end{proof}

We remark that a lower bound on the total variance is given by $(\frac{d \mathbb{E}[m(\theta)]}{d\theta^\prime}A\frac{d \mathbb{E}[m(\theta)]}{d\theta})^{-1}$, which is minimized when $A$ is maximized. Among choices of moment functions $h_t(\eta)$ that depend solely on $T_t$, $A$ is maximized in the positive semi-definite order when the span of $h_t(\eta)$ contains $\mathbb{E}[m(\theta)|T_t]$. A sufficient but not necessary condition for this is that for some $j\in 1\ldots M$, the conditional moments of the simulation are identical to those of the real data:
$$E[\psi(\eta_j)|T_i]=E[\psi(\theta)|T_i]$$ 

This calibration condition is satisfied when the conditional distribution of the simulated data given $T$ equals that of the real data, which is a natural simulation target, though not required for valid inference.

\section{Experimental Details}\label{appx:experiments}

\subsection{Baseline Details}
\subsubsection{RePPI Implementation}
\label{appx:reppi}

In adapting RePPI \citep{ji2025predictions} to our setting, we can model the imputed loss function in PPI with a ML-based approach. While in their paper, they choose a particular form of $$g_\theta(X, \hat{Y}) = \frac{1}{1 + r}\theta^T s^*(X, \hat{Y}),$$ 
where $s^*$ is the conditional score function. In our setting, we do not have access to $X$ on unlabeled instances, meaning that our model of the conditional score must take in inputs of $s^*(\hat{X}, \hat{Y}, \tilde{X}, \tilde{Y})$. In the case for GLMs and if we have access to unlabeled instances $X$, we know that the score is given by $\nabla \ell_\theta (X, Y) = X (f'(X^T \theta) - \mathbb{E}[Y | X, \hat{Y}])$, where $f$ is as defined in Section \ref{sec:glm}. In that setting, we would only need to model $\mathbb{E}[Y | X, \hat{Y}]$. However, in our setting where we only have access to proxy and synthetic data, we need to directly model $\nabla \ell_\theta(X, Y)$ or try to learn $\mathbb{E}[\nabla \ell_\theta(X, Y) | \hat{X}, \hat{Y}, \tilde{X}, \tilde{Y}]$, as we do not observe $X$ to use in our predictions on unlabeled data.

We note that in our experiments, the sample splitting approach proposed in RePPI performs poorly due to cross-fitting; there simply is not enough data to accurately estimate (i) the ground truth parameter on one fold, (ii) learn the ML model on the second fold, and (iii) have an accurate target parameter estimate on the final fold, given each split has a size of only $\frac{1}{3}$ of the number of labeled data. We observe similar poor behavior when doing PPI++ Synth Cross-Fitting with limited data, but RePPI is even more intensive, given that we need to do 3 splits of data rather than 2.
As such, to learn the ML model for the imputed loss in RePPI, we choose to adopt a linear regression model (defined over a small number of covariates), which satisfies the required Donsker conditions to enable us to avoid any requirements on sample splitting as in standard DML approaches \citep{van1996weak, chernozhukov2018double}. At a high level, a linear regression model is sufficiently simple that it cannot overfit to noise in the original data, which is the main goal sample splitting aims to address.

Therefore, our implementation of RePPI is as follows: we (1) fit $\hat{\theta}$ by optimizing the human-only loss, (2) we optimize an linear regression model that learns to map $h: (\hat{X}, \hat{Y}, \tilde{X}, \tilde{Y}) \to \nabla \ell_{\hat{\theta}}(X, Y)$, and (3) we perform power tuning and produce our parameter estimate by minimizing the imputed loss that incorporates $\hat{\theta}$ and $h$, all on the full available data. We use the same linear regression in estimating the imputed loss; the exception to this is on Congressional Bills, where we use XGBoost as linear regression performs very poorly in estimating the score function.

\begin{restatable}{proposition}{reppi}
\label{prop:reppi}
The RePPI objective with multiple predicted covariates and outcomes is given by
\begin{align}
    L^{\text{RePPI}}(\theta) & := \frac{1}{n}\sum_{i=1}^{n} \ell_\theta (X_{i}, Y_{i}) - \left(\frac{1}{n}\sum_{i=1}^n g_\theta(\hat{X}_i, \hat{Y}_i, \tilde{X}_i, \tilde{Y}_i) - \frac{1}{N}\sum_{i=1}^N g_\theta(\hat{X}_i, \hat{Y}_i, \tilde{X}_i, \tilde{Y}_i)\right).
\end{align}
where 
\begin{align*}
    g_\theta(\hat{X}_i, \hat{Y}_i, \tilde{X}_i, \tilde{Y}_i) = \frac{1}{1 + r} \theta^T s^*(\hat{X}_i, \hat{Y}_i, \tilde{X}_i, \tilde{Y}_i), \; s^*(\hat{X}_i, \hat{Y}_i, \tilde{X}_i, \tilde{Y}_i) = \mathbb{E}[\nabla \ell_{\theta^*}(X, Y)| \hat{X}_i, \hat{Y}_i, \tilde{X}_i, \tilde{Y}_i],
\end{align*}
and $\theta^*$ is the target parameter estimate. The resulting estimate retains asymptotic normality conditions.
\end{restatable}

\subsubsection{PPI++Proxy and PPI++Synth Implementation} \label{appx:ppi-multi}

We now present a discussion on our adapted debiasing-based approach from Proposition \ref{prop:ppi-multiple}. 

\begin{restatable}{proposition}{ppimulticov}
\label{prop:ppi-multiple}
The adapted PPI++ objective with multiple predicted covariates and outcomes is given by
\begin{align}
    L^{PP}(\theta) & := \frac{1}{N}\sum_{i=1}^N [(1-\alpha) \cdot \ell_{\theta}(\tilde{X}_{i}, \tilde{Y}_i) + \alpha \cdot \ell_{\theta}(\hat{X}_{i}, \hat{Y}_i)] \\
    & + \frac{1}{n}\sum_{i=1}^{n} (\ell_\theta (X_{i}, Y_{i}) -  [(1 - \alpha) \cdot \ell_{\theta}(\tilde{X}_{i}, \tilde{Y}_i) + \alpha \cdot \ell_{\theta}(\hat{X}_{i}, \hat{Y}_i)]).
\end{align}
where the estimate retains asymptotic normality conditions (see Appendix \ref{appx:ppi-multi} for the proof and algorithm details).
\end{restatable}

\paragraph{Asymptotic Normality}

First, it is relatively straightforward to show that this is an unbiased estimate of the true objective.
\begin{align*}
    \mathbb{E}[L^{PP}(\theta)] & = (1-\alpha) \cdot \mathbb{E}[\ell_{\theta}(\tilde{X}, \tilde{Y})] + \alpha \cdot \mathbb{E}[\ell_{\theta}(\hat{X}, \hat{Y})] \\
    & \quad \; + \; \mathbb{E}[\ell_\theta (X, Y)] -  \mathbb{E}[(1 - \alpha) \cdot \ell_{\theta}(\tilde{X}, \tilde{Y})] - \alpha \cdot \mathbb{E}[\ell_{\theta}(\hat{X}, \hat{Y})])] \\
    & = \mathbb{E}[\ell_\theta(X, Y)].
\end{align*}
Note that this holds for any choice of the hyperparameter $\alpha$. 

Under the same assumptions as in the PPI++ paper \citep{angelopoulos2023ppi++} (e.g., that $\frac{n}{n+m} \to c$ for some constant $c$ and, in the case of generalized linear models, the Hessian is non-singular, we perform their same approach to power tuning), we recover the asymptotic normality guarantees of the parameter estimate (as in Corollary 1 from \citet{angelopoulos2023ppi++}).

\paragraph{Hyperparameter Selection via Cross-fitting}

The added complexity from these modified debiasing-based approaches arises from the hyperparameter $\alpha$. We now discuss an approach for selecting $\alpha$ by performing cross-fitting. As previously mentioned, we can treat $\alpha$ as a simple version of RePPI \citep{ji2025predictions} where we fit a convex combination of proxy and synthetic losses.

Namely, we partition our available data into two splits. We select $\alpha$ on one fold by minimizing:
\begin{align*}
    \arg\min_{\alpha \in [0, 1]} L^{PP}(\theta_1),
\end{align*}
where $\theta_1$ is defined as the solution to the naive minimzation of $\mathbb{E}[\ell_\theta(X, Y)]$ on the same split. This essentially captures picking the $\alpha$ that best combines the proxy and synthetic losses to best mimic the behavior of the standard loss function.

We then take this optimal $\alpha$ and use it to produce a parameter estimate on the held-out fold. We aggregate these estimates as is standard in cross-fitting approaches. We outline this process in Algorithm \ref{alg:ppi-crossfit}.

\begin{algorithm}[t]
\caption{Cross‐Fitting for PPI\textsuperscript{++}Synth}
\label{alg:ppi-crossfit}
\begin{algorithmic}[1]
  \Require \\
    Labeled data 
      \(\mathcal{D}=\{(T_i,X_i,Y_i)\}_{i=1}^n\),\\
    Proxy data 
      \(\widehat{\mathcal{D}}=\{(T_j,\widehat X_j,\widehat Y_j)\}_{j=1}^{n+m}\),\\
    Synthetic data 
      \(\widetilde{\mathcal{D}}=\{(\widetilde T_j,\widetilde X_j,\widetilde Y_j)\}_{j=1}^{n+m}\),\\
    K folds
  \Ensure Debiased estimate \(\hat\theta_{\mathrm{CF}}\)
  \State Split \(\mathcal{D}\) into folds \(\{\mathcal{I}_1,\dots,\mathcal{I}_K\}\) \\
  \For{\(k=1,\dots,K\)}
    \State define train‐fold \(\mathcal{I}_{\mathrm{train}}=\bigcup_{r\neq k}\mathcal{I}_r\)
    \State $
      \hat\theta_1^{-k}
      \;\gets\;
      \arg\min_{\theta}
      L_{\mathrm{PP}}^{-k}(\theta;0)
    $\Comment{(1) initial fit on train‐fold} \\
    \State $
      \hat\alpha^{-k}
      \;\gets\;
      \arg\min_{\alpha\in[0,1]}
      L_{\mathrm{PP}}^{-k}\bigl(\hat\theta_1^{-k};\alpha\bigr)
    $ \Comment{(2) select mixture weight \(\alpha\) on train‐fold)} \\
    \State $
      \hat\theta^{k}
      \;\gets\;
      \arg\min_{\theta}
      L_{\mathrm{PP}}^{k}\bigl(\theta;\hat\alpha^{-k}\bigr)
    $ \Comment{(3) final fit on held‐out fold with chosen \(\alpha\))} \\
  \EndFor
  \State \Return 
    \(\displaystyle
      \hat\theta_{\mathrm{CF}}
      = \frac{1}{K}\sum_{k=1}^K \hat\theta^{k}
    \)
\end{algorithmic}
\end{algorithm}

\subsection{Prompt Texts}

We present the full text prompts that were used to generate proxy covariates and labels (for the proxy data) and synthetic data. Note that the prompts used to extract covariates and labels from the synthetic text are identical to those used for the proxy data.

\begin{tcolorbox}[title=Proxy Data Generation Prompts]
\textbf{Politeness (First Plural Pronouns) - Covariates:}\\
Does the following text contain first person plural pronouns (e.g., we, us, our, ourselves)? Output either yes or no.\\
Text:  
\texttt{"""\\\{content\}\\ """}  \\
\textbf{Answer:} 
\vspace{1em}

\textbf{Politeness (First Plural Pronouns) - Labels:}\\
Is the following text polite? Output either A or B. Output a letter only. \\
A) Polite\\
B) Impolite\\
Text:  
\texttt{"""\\\{content\}\\ """}  \\
\textbf{Answer:} 

\vspace{2em}

\textbf{Politeness (Hedging) - Covariates:}\\
Does the following text contain hedging devices---expressions that indicate uncertainty, caution, or a lack of full commitment to a claim (e.g., may, might, could, would, possibly, probably, perhaps, apparently, suggest, indicate, seem, appear, it is likely that, it seems that)?
Respond with yes or no only.\\
Text:  
\texttt{"""\\\{content\}\\ """}  \\
\textbf{Answer:} 
\vspace{1em}

\end{tcolorbox}

\begin{tcolorbox}[title=Proxy Data Generation Prompts (continued)]

\textbf{Politeness (Hedging) - Labels:}\\
Is the following text polite? Output either A or B. Output a letter only. \\
A) Polite\\
B) Impolite\\
Text:  
\texttt{"""\\\{content\}\\ """}  \\
\textbf{Answer:} 

\vspace{2em}

\textbf{Stance Dataset - Covariates:}\\
Does the following text contain any affirmative device words?
Output either yes or no.\\
Text:  
\texttt{"""\\\{content\}\\ """}  \\
\textbf{Answer:} 

\textbf{Stance Dataset - Labels:}\\
A statement can agree, be neutral, or disagree with the statement: “Climate change/global warming is a serious concern”. Classify the following statement into one of the three categories. Output either A, B, or C. Output a letter only. \\
A) Agree\\
B) Neutral\\
C) Disagree \\
Statement:  
\texttt{"""\\\{content\}\\ """}  \\
\textbf{Answer:} 

\vspace{2em}

\textbf{Congressional Bills Dataset - Covariates:}\\
You are a political scientist familiar with the U.S. Congress and the DW-NOMINATE scoring system, which places legislators and legislation on a left-right ideological spectrum ranging approximately from -1 (most liberal) to +1 (most conservative).
Below is the text of a proposed bill. Based on the policy content, language, and framing of the bill, estimate the DW-NOMINATE score that best represents its ideological position. Output a single nonzero float between -1 and +1 representing the estimated DW-NOMINATE score of the bill.

Bill: 
\texttt{"""\\\{content\}\\ """}  \\
\textbf{Answer:} 

\vspace{1em}

\textbf{Congressional Bills Dataset - Labels:}\\
Does the following text relate to the economy? Output either true or false.

Text: \texttt{"""\\\{content\}\\ """}  \\
\textbf{Label:}

\end{tcolorbox}

\begin{tcolorbox}[title=Synthetic Data Generation Prompts, colback=gray!5, colframe=black, fonttitle=\bfseries]
\textbf{Politeness (First Plural Pronouns)}\\
Consider texts taken from user requests on Stack Exchange or Wikipedia.  
Each text is labeled as either polite or impolite, and either contains or does not contain first-person plural pronouns. Below is an example that \texttt{\{x\}}:  \\
\textbf{Example:}  
\texttt{"""\\\{example\}\\ """}  

Now, generate a new example of a request that also \texttt{\{x\}}.  

\vspace{1em}
\textbf{Politeness (Hedging)}\\
Consider texts taken from user requests on Stack Exchange or Wikipedia. Each text can be labeled as either polite or impolite, and as either containing a hedging device or not containing one. 
Hedging devices are expressions that indicate uncertainty, caution, or a lack of full commitment to a claim (e.g., may, might, could, would, possibly, probably, perhaps, apparently, suggest, indicate, etc.). 
Below is an example that \texttt{\{x\}}:  \\
\textbf{Example:}  
\texttt{"""\\\{example\}\\ """}  

Now, generate a new example of a request that also \texttt{\{x\}}.

\vspace{1em}

\end{tcolorbox}

\begin{tcolorbox}[title=Synthetic Data Generation Prompts (continued), colback=gray!5, colframe=black, fonttitle=\bfseries]
\textbf{Stance}  \\

Consider news headlines that take a stance — agree, disagree, or neutral — on the statement:
“Climate change/global warming is a serious concern."

Each headline also either contains or does not contain an affirmative device.

Below is an example of a headline.\\
\textbf{Example:}  
\texttt{"""\\\{example\}\\ """}  \\
Affirmative device: \texttt{\{x\}}

Now, generate a new news headline about global warming that also \texttt{\{x\}}.

\vspace{1.5em}
\textbf{Congressional Bills Data}\\
You are a political language model trained to generate realistic examples of U.S. congressional bills.  
Each bill is labeled as either ``related to the economy" or ``not related to the economy", and is associated with a DW-NOMINATE score representing ideological position (ranging from $-1$ liberal to $+1$ conservative).

\textbf{Example:}  \\
Bill Text:  
\texttt{"""\\\{example\}\\ """}  \\
DW-NOMINATE Score: \texttt{\{dw\_nominate\_score\}}

Now, generate a new example of a bill that also has a DW-NOMINATE score of \texttt{\{dw\_nominate\_score\}}.  
Output only the new bill text:  
\texttt{"""}
\end{tcolorbox}

\section{Additional Results} \label{appx:results}

We present additional experimental results consisting of: 
\begin{itemize}
    \item Prediction accuracy of GPT-4o, Llama-3-8b, Qwen-3-8b (Think) for the covariates and outcomes of interest (Table \ref{tab:accuracy})
    \item Performance of a naive estimator that \textit{only} uses synthetic data (Figure \ref{fig:synth-only})
    \item Effective sample size results (Figures \ref{fig:ess-lr} and \ref{fig:ess-ols})
    \item Grid search results for PPI++Synth (Oracle) across different $\alpha$ values (Figures \ref{fig:grid-search-lr} and \ref{fig:grid-search-ols})
    \item Cross-fitting results for PPI++Synth (Figures \ref{fig:cross_fitting_lr} and \ref{fig:cross_fitting_ols})
    \item Llama-3-8b results for logistic regression (Figure \ref{fig:llama_lr}) and OLS (Figure \ref{fig:llama_ols})
    \item Qwen-3-8b results for logistic regression (Figure \ref{fig:qwen_lr}) and OLS (Figure \ref{fig:qwen_ols})
    
\end{itemize}

\begin{table}[h]
\centering
\caption{Accuracy of LLMs for prediction tasks across datasets. This represents the quality of the proxy covariates and proxy labels ($\hat{X}, \hat{Y}$).}
\begin{tabular}{l cc cc cc cc}
\toprule
& \multicolumn{2}{c}{Hedging} & \multicolumn{2}{c}{1pp} & \multicolumn{2}{c}{Stance} & \multicolumn{2}{c}{Congressional Bills} \\
\cmidrule(lr){2-3} \cmidrule(lr){4-5} \cmidrule(lr){6-7} \cmidrule(lr){8-9}
\textbf{Model} & X Pred. & Y Pred. & X Pred. & Y Pred. & X Pred. & Y Pred. & X Pred. & Y Pred. \\
\midrule
GPT-4o  & 0.764 & 0.785 & 0.994 & 0.785 & 0.864 & 0.743 & 0.184 & 0.827 \\
Llama-3-8b & 0.485 & 0.335 & 0.878 & 0.501 & 0.619 & 0.637 & 0.688 & 0.564 \\
Qwen-3-8b (Think) & 0.723 & 0.325 & 0.961 & 0.325 & 0.904 & 0.684 & 0.253 & 0.757 \\
\bottomrule
\end{tabular}
\label{tab:accuracy}
\end{table}

\begin{figure}[t]
    \centering
    \begin{subfigure}[b]{0.32\textwidth}
        \includegraphics[width=\textwidth]
        {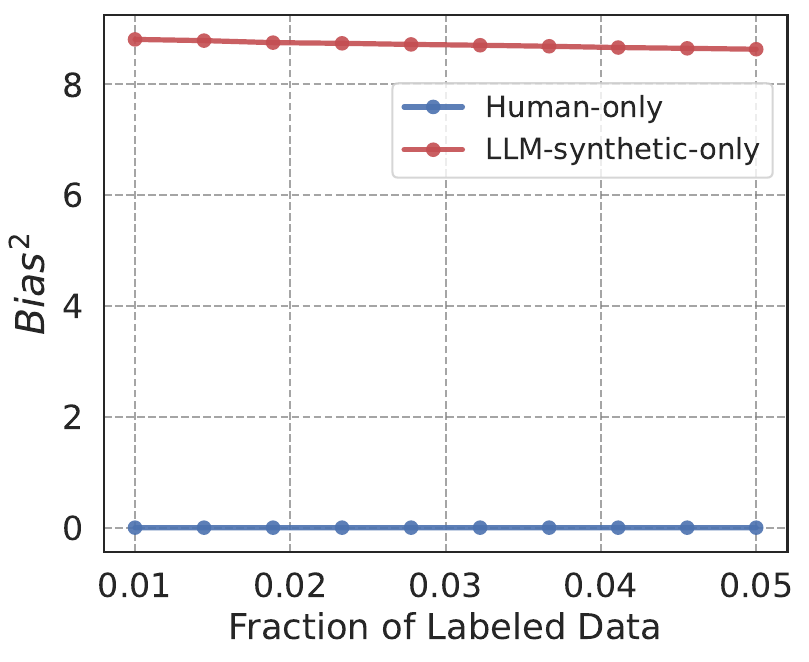}
    \end{subfigure}
    \hfill
    \begin{subfigure}[b]{0.32\textwidth}
        \includegraphics[width=\textwidth]{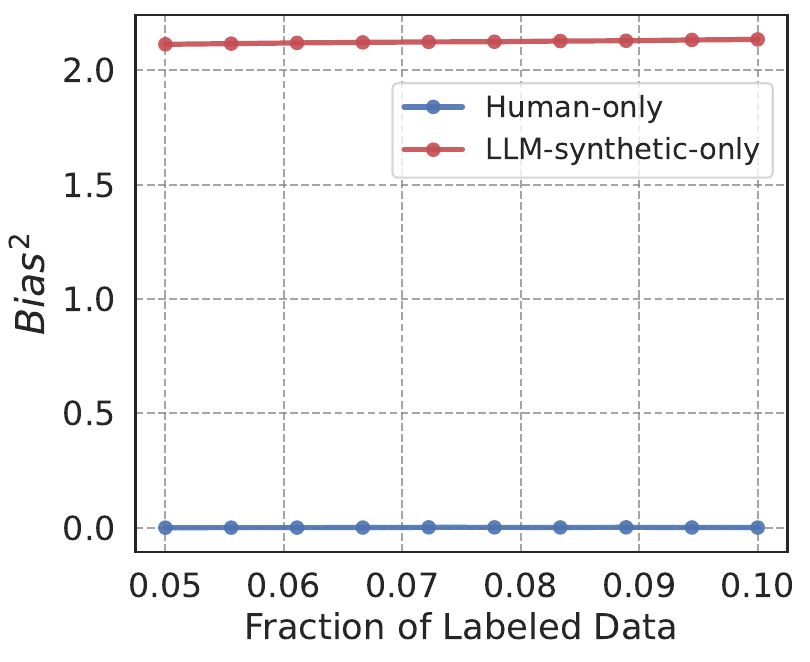}
    \end{subfigure}
    \hfill
    \begin{subfigure}[b]{0.32\textwidth}
        \includegraphics[width=\textwidth]{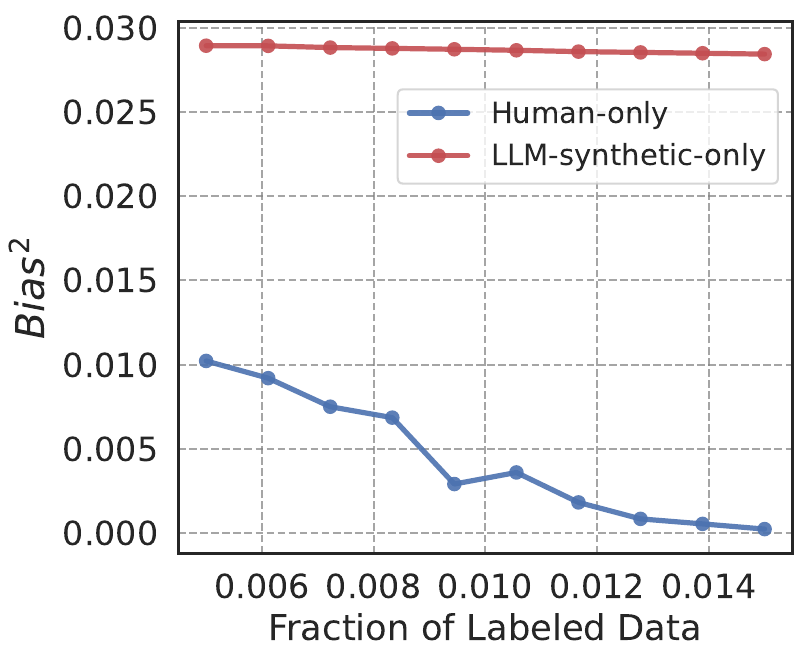}
    \end{subfigure}

    \begin{subfigure}[b]{0.32\textwidth}
        \includegraphics[width=\textwidth]{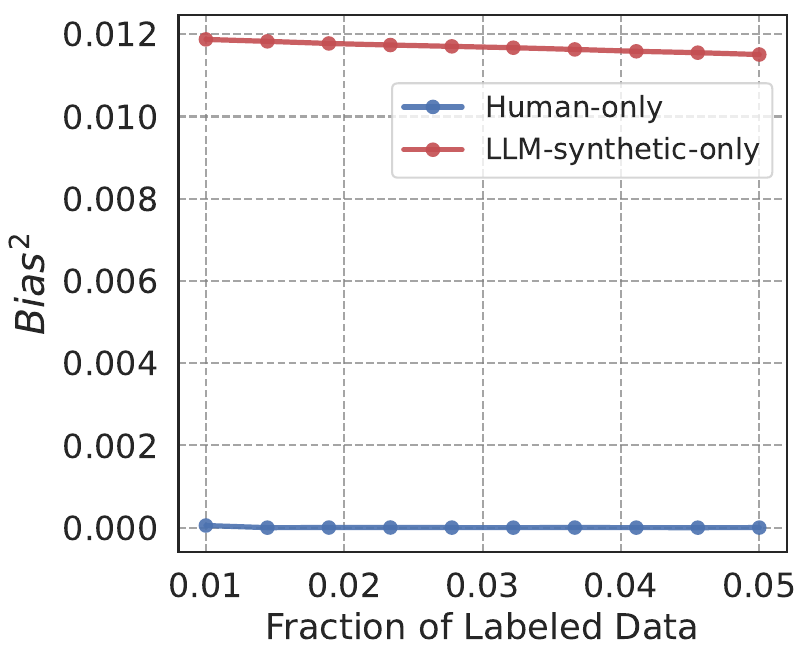}
    \end{subfigure}
    \hfill
    \begin{subfigure}[b]{0.32\textwidth}
        \includegraphics[width=\textwidth]{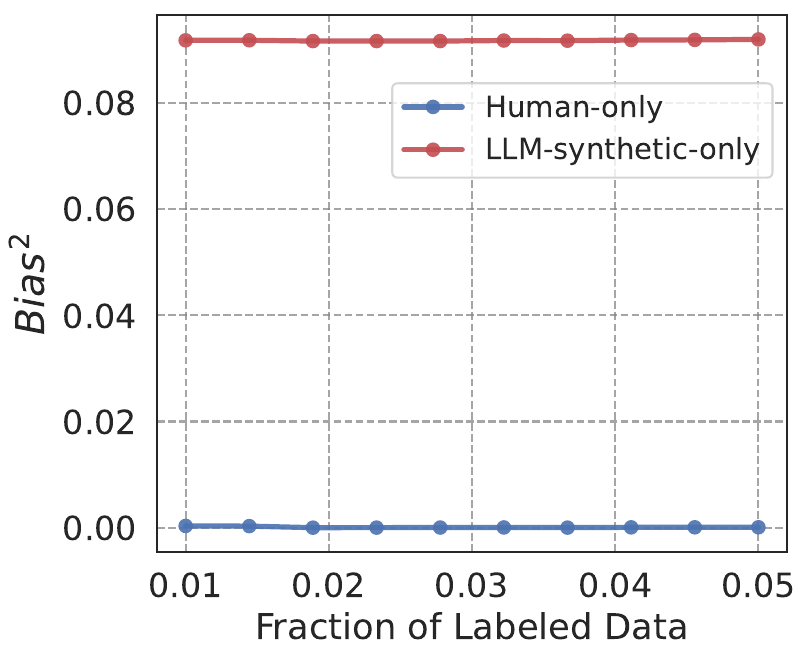}
    \end{subfigure}
    \hfill
    \begin{subfigure}[b]{0.32\textwidth}
        \includegraphics[width=\textwidth]{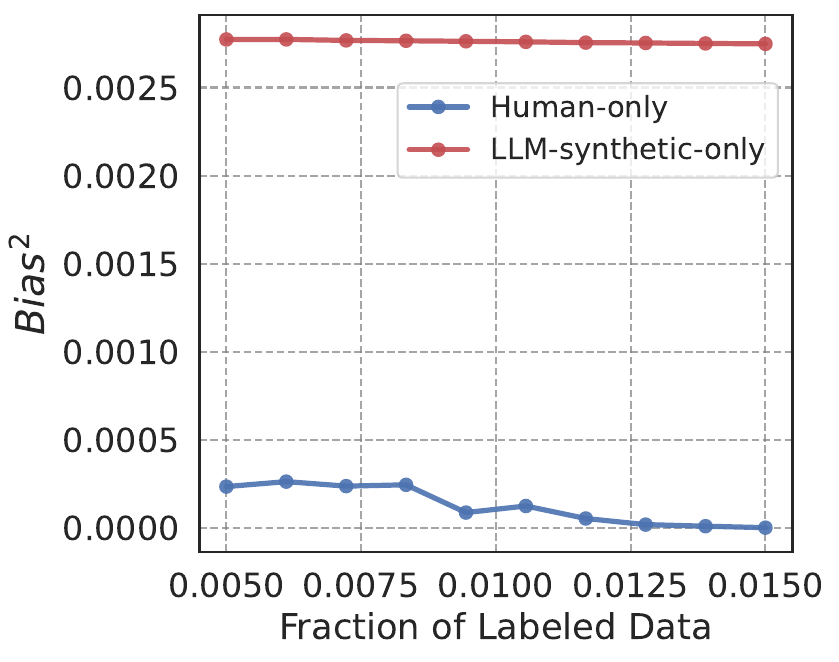}
    \end{subfigure}

    \caption{Performance of a naive estimator for logistic regression (top) and OLS (bottom) using synthetic data only (Politeness (Hedging), Stance, Congressional Bills (from left to right)). We clearly observe that naively using only synthetic data for the estimation task leads to largely biased estimates, as expected.}
    \label{fig:synth-only}
\end{figure}

\begin{figure}[t]
    \centering
    \begin{subfigure}[b]{0.4\textwidth}
        \includegraphics[width=\textwidth]{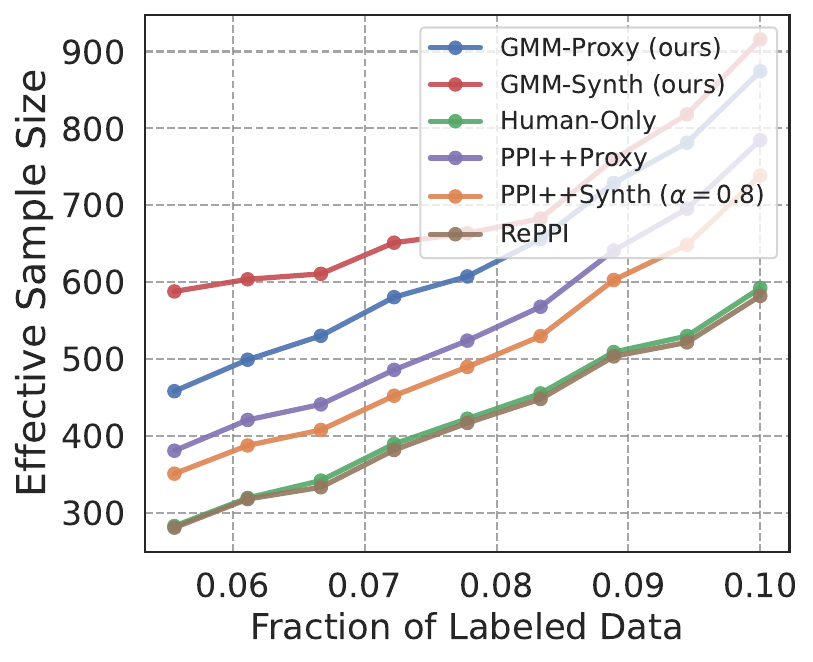}
    \end{subfigure}
    \begin{subfigure}[b]{0.4 \textwidth}
        \includegraphics[width=\textwidth]{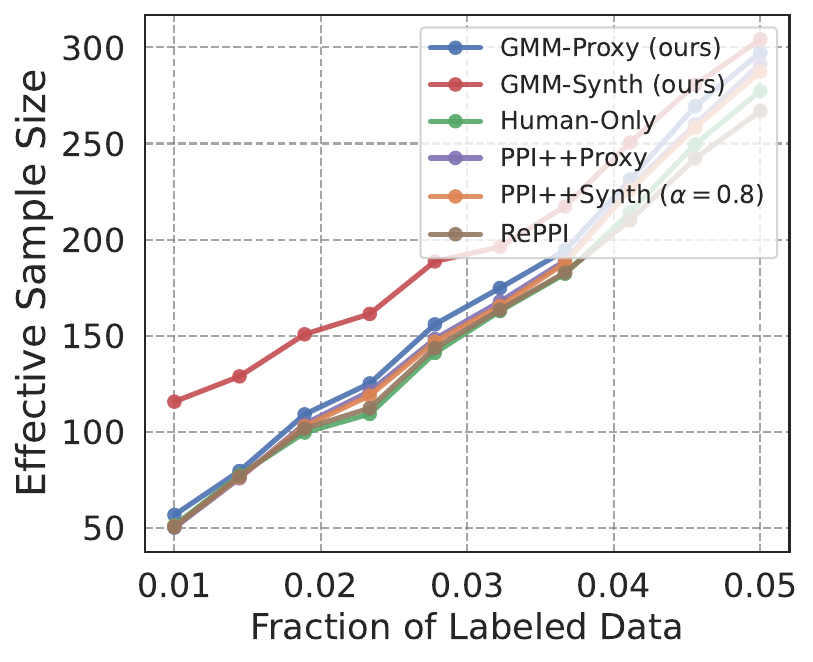}
    \end{subfigure}
    \begin{subfigure}[b]{0.4\textwidth}
        \includegraphics[width=\textwidth]{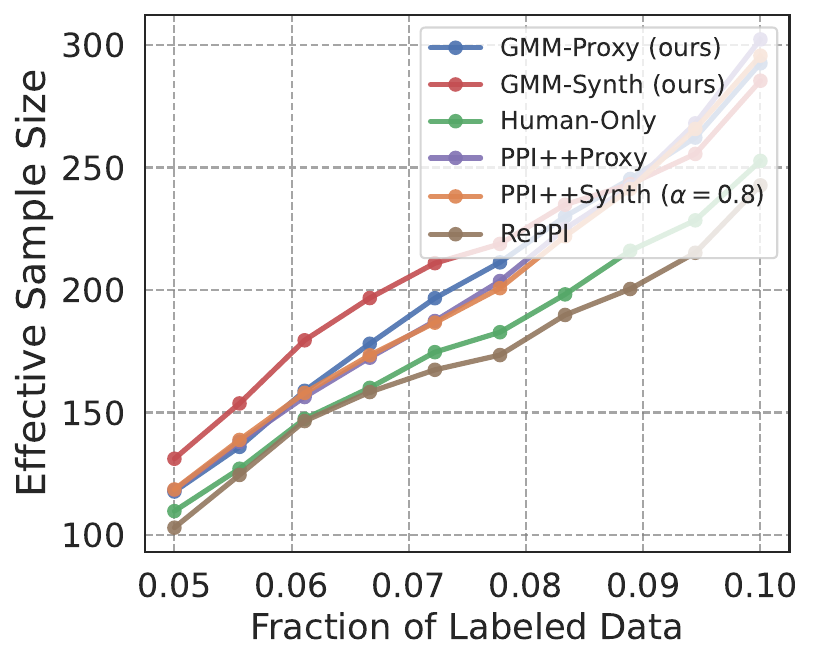}
    \end{subfigure}
    \begin{subfigure}[b]{0.4\textwidth}
        \includegraphics[width=\textwidth]{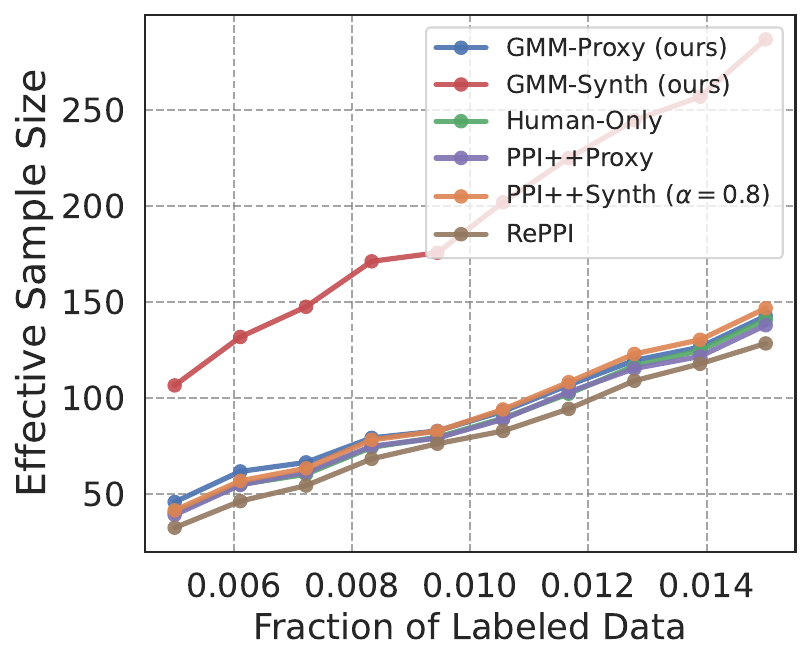}
    \end{subfigure}

    \caption{Effective sample size for logistic regression (Politeness (1pp), Politeness (Hedging), Stance, Congressional Bills (from left to right)). We observe large gains in effective sample size, up to more than 50\%. This represents how many human annotations the method effectively saves while maintaining the same performance (in terms of mean squared error).}
    \label{fig:ess-lr}
\end{figure}

\begin{figure}[t]
    \centering
    \begin{subfigure}[b]{0.4\textwidth}
        \includegraphics[width=\textwidth]{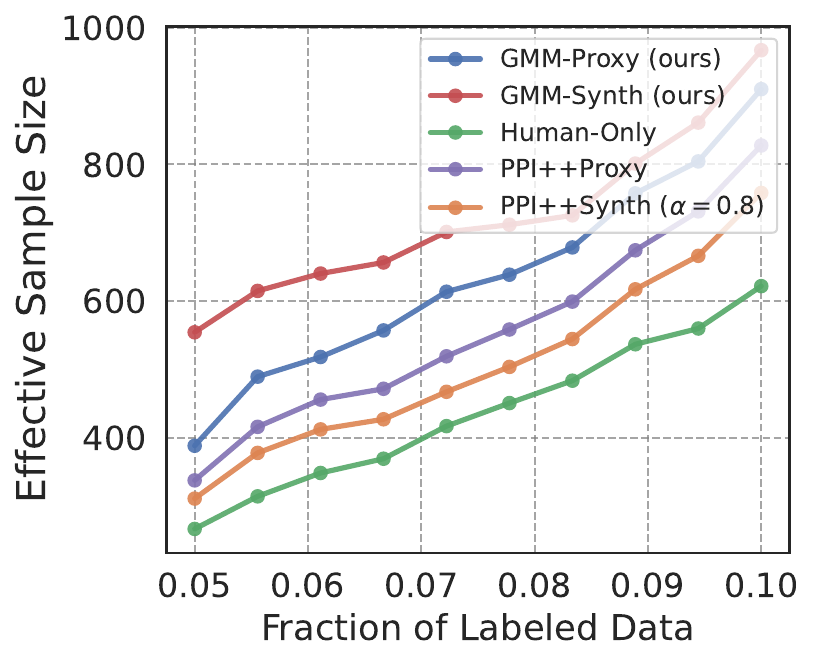}
    \end{subfigure}
    \begin{subfigure}[b]{0.4 \textwidth}
        \includegraphics[width=\textwidth]{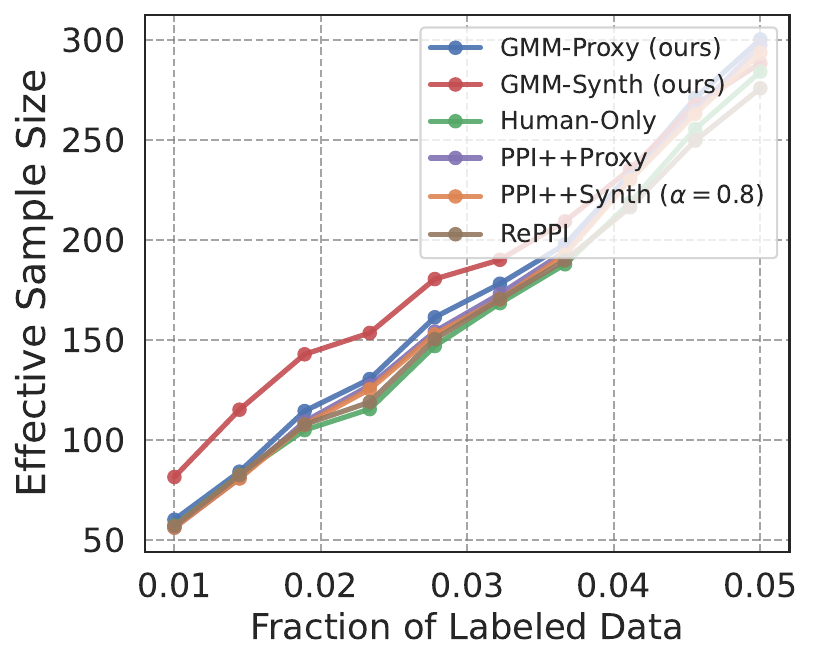}
    \end{subfigure}
    \begin{subfigure}[b]{0.4\textwidth}
        \includegraphics[width=\textwidth]{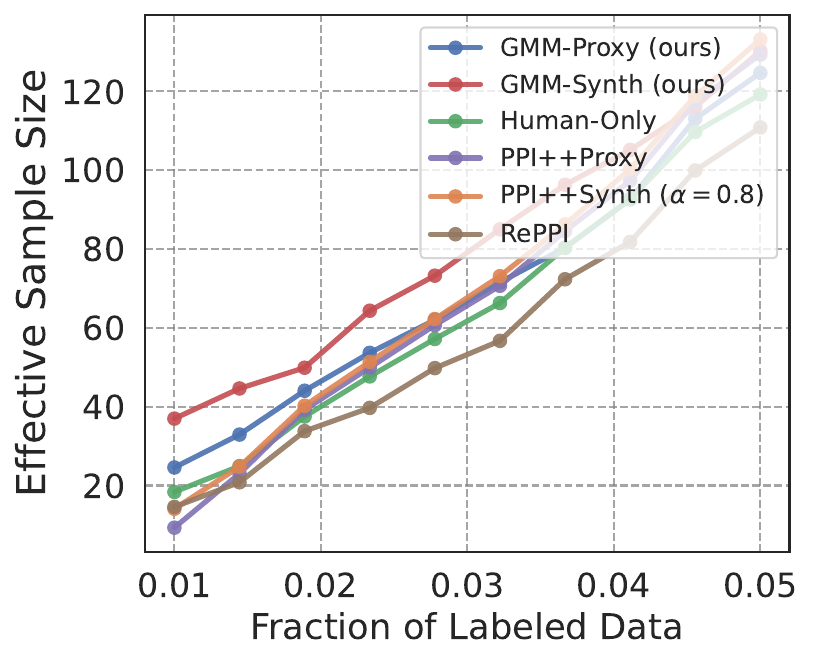}
    \end{subfigure}
    \begin{subfigure}[b]{0.4\textwidth}
        \includegraphics[width=\textwidth]{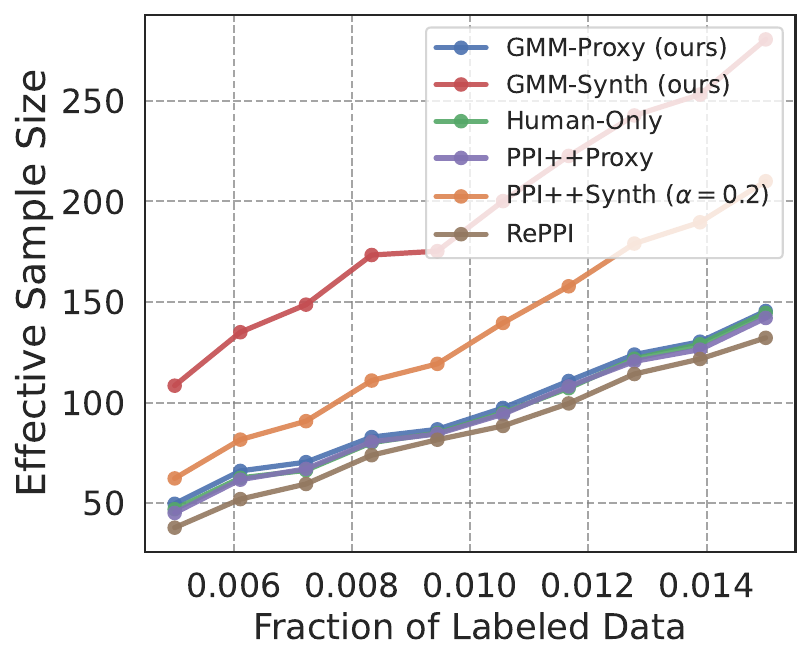}
    \end{subfigure}

    \caption{Effective sample size for OLS (Politeness (1pp), Politeness (Hedging), Stance, Congressional Bills (from left to right)). The RePPI method is omitted from the 1pp plot because its effective sample size drops too low.}
    \label{fig:ess-ols}
\end{figure}

\begin{figure}[t]
    \centering
    \begin{subfigure}[b]{0.4\textwidth}
        \includegraphics[width=\textwidth]{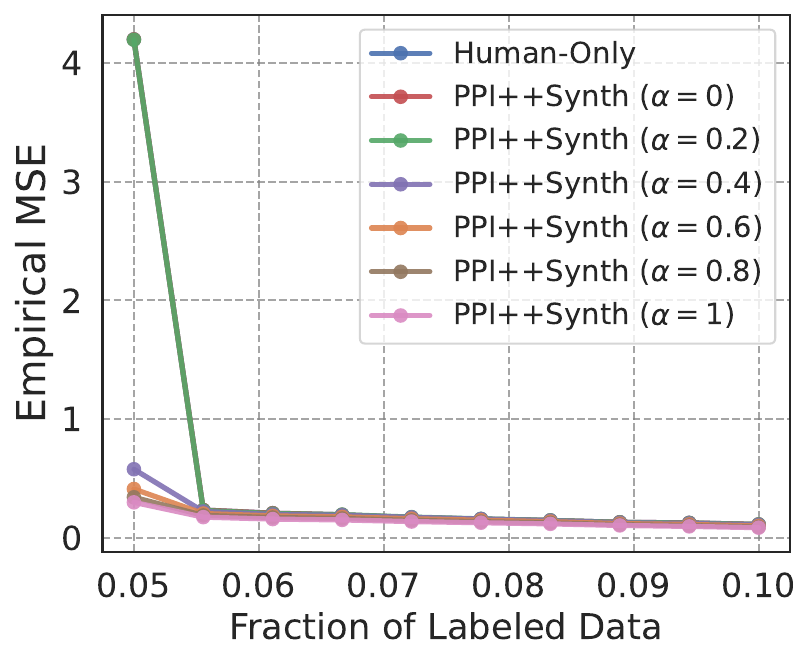}
    \end{subfigure}
    \begin{subfigure}[b]{0.4\textwidth}
        \includegraphics[width=\textwidth]{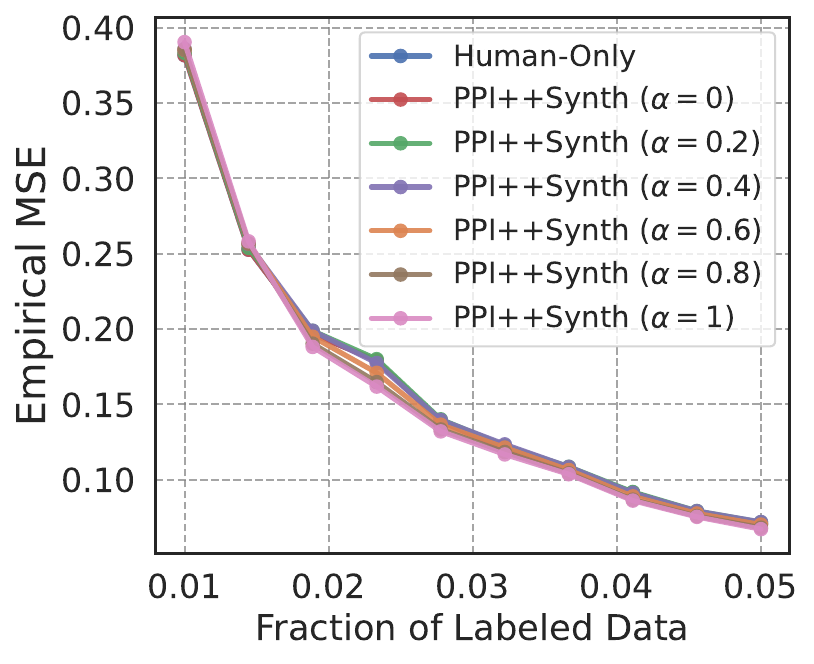}
    \end{subfigure}
    \begin{subfigure}[b]{0.4\textwidth}
        \includegraphics[width=\textwidth]{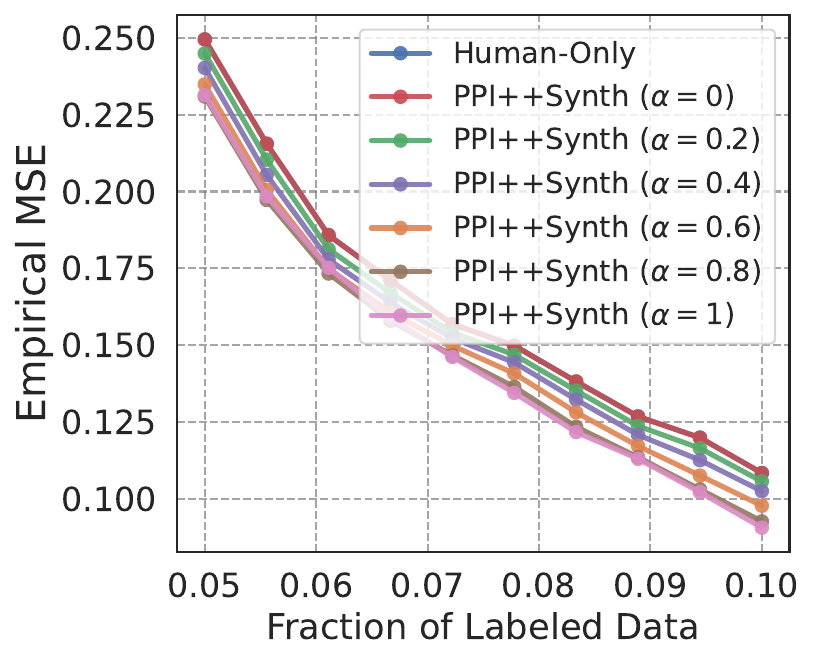}
    \end{subfigure}
    \begin{subfigure}[b]{0.4\textwidth}
        \includegraphics[width=\textwidth]{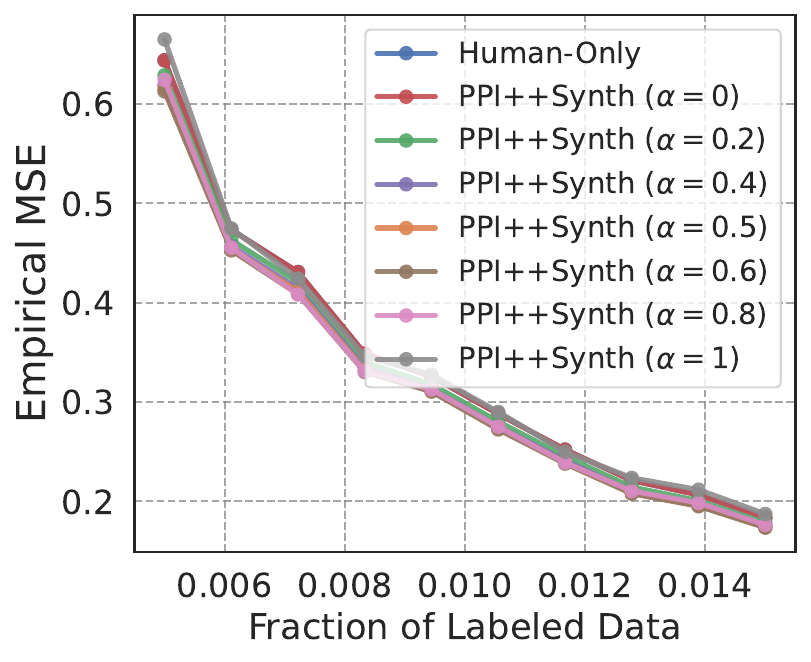}
    \end{subfigure}

    \caption{Grid search results for logistic regression (Politeness (1pp), Politeness (Hedging), Stance, Congressional Bills (from left to right)). This plot shows the grid search over different possible $\alpha$ values \textit{without} cross-fitting. Note, this is not a valid solution in our setup, as it requires peeking in hyperparameter selection, but it provides an oracle version of the baseline, which we term as PPI++Synth (Oracle). The $\alpha$ value that leads to the smallest MSE is the one reported in Figure \ref{fig:key_results} in the main text.} 
    \label{fig:grid-search-lr}
\end{figure}

\begin{figure}[t]
    \centering
    \begin{subfigure}[b]{0.4\textwidth}
        \includegraphics[width=\textwidth]{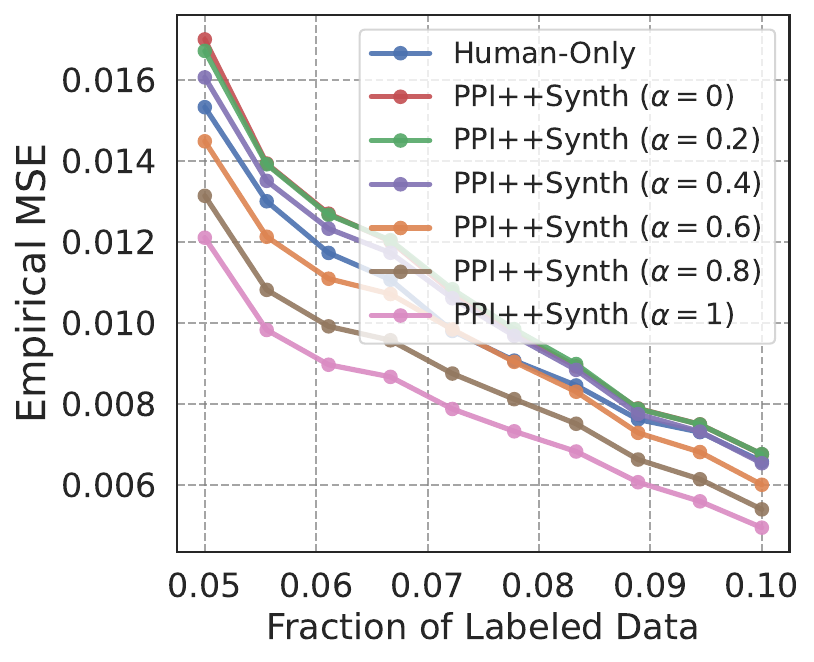}
    \end{subfigure}
    \begin{subfigure}[b]{0.4\textwidth}
        \includegraphics[width=\textwidth]{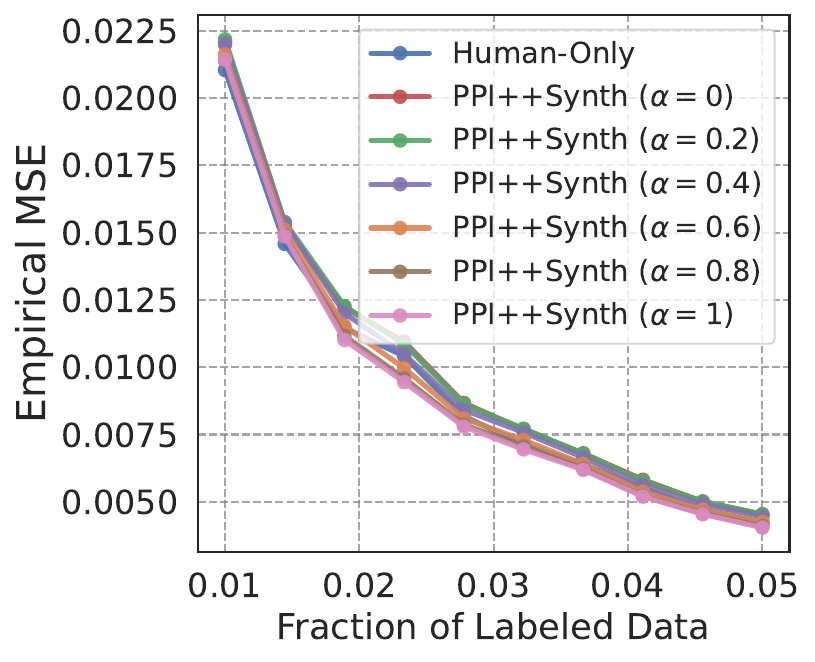}
    \end{subfigure}
    \begin{subfigure}[b]{0.4\textwidth}
        \includegraphics[width=\textwidth]{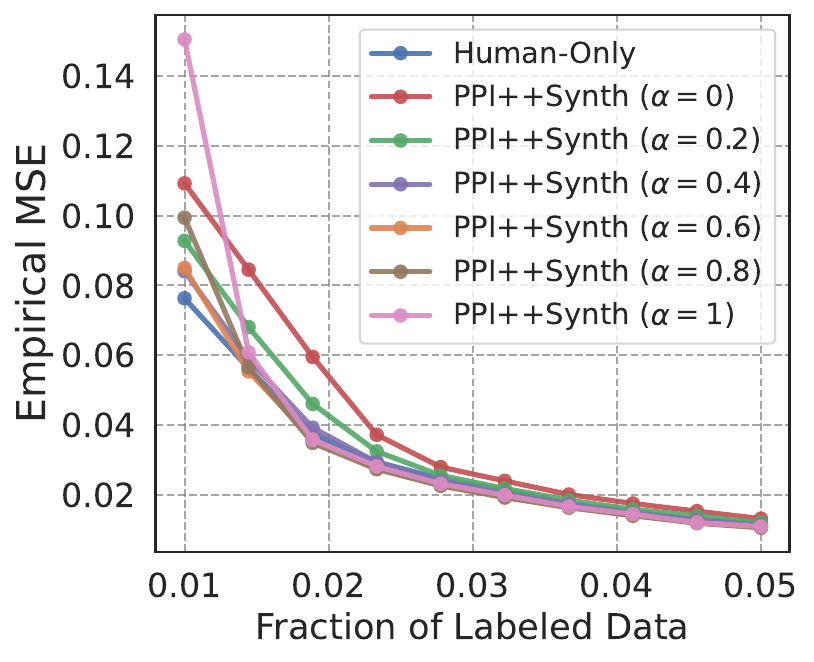}
    \end{subfigure}
    \begin{subfigure}[b]{0.4\textwidth}
        \includegraphics[width=\textwidth]{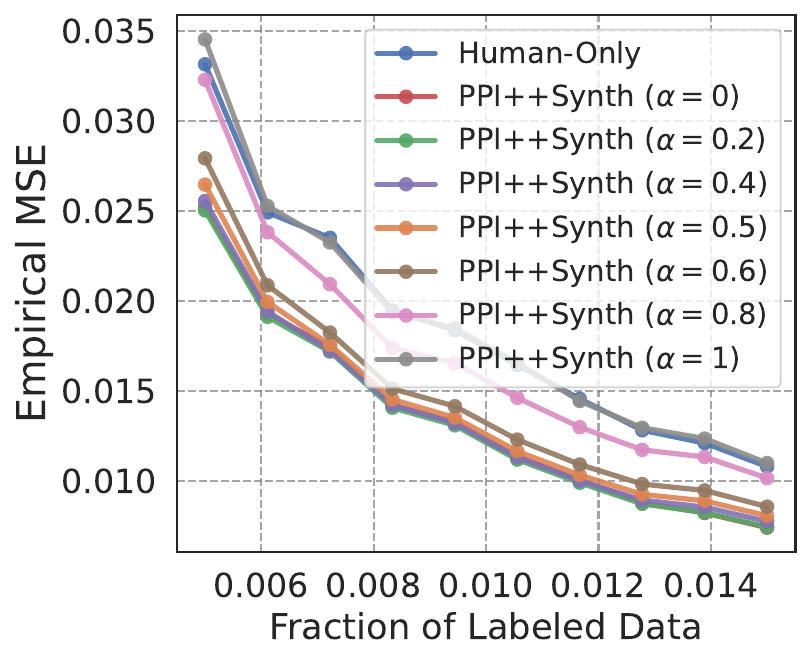}
    \end{subfigure}

    \caption{Grid search results for OLS (Politeness (1pp), Politeness (Hedging), Stance, Congressional Bills (from left to right)). The $\alpha$ value that leads to the smallest MSE is the one reported in Figure \ref{fig:key_results_ols} in the main text.}
    \label{fig:grid-search-ols}
\end{figure}

\begin{figure} [t]
    \centering
    \begin{subfigure}[b]{0.32\textwidth}
        \includegraphics[width=\textwidth]{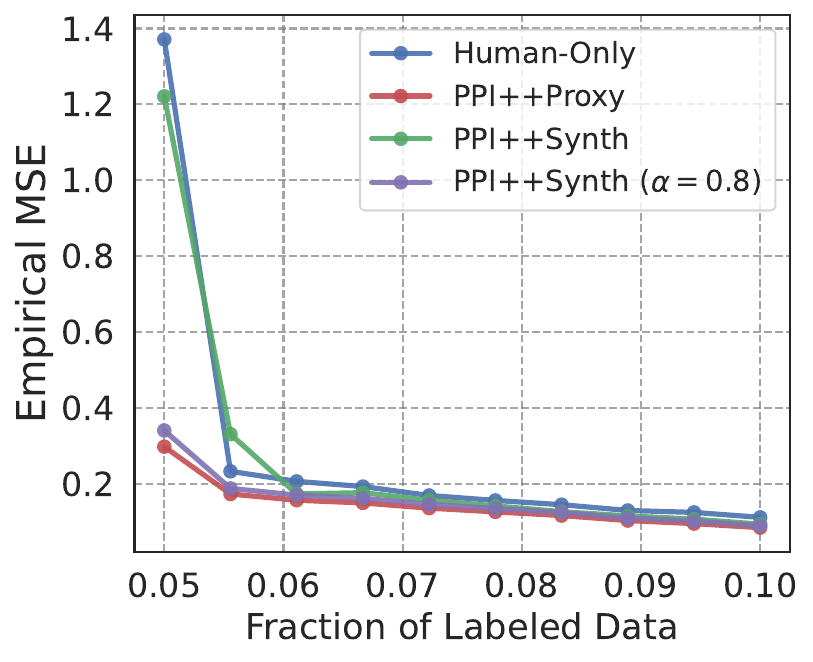}
    \end{subfigure}
    \hfill
    \begin{subfigure}[b]{0.32\textwidth}
        \includegraphics[width=\textwidth]{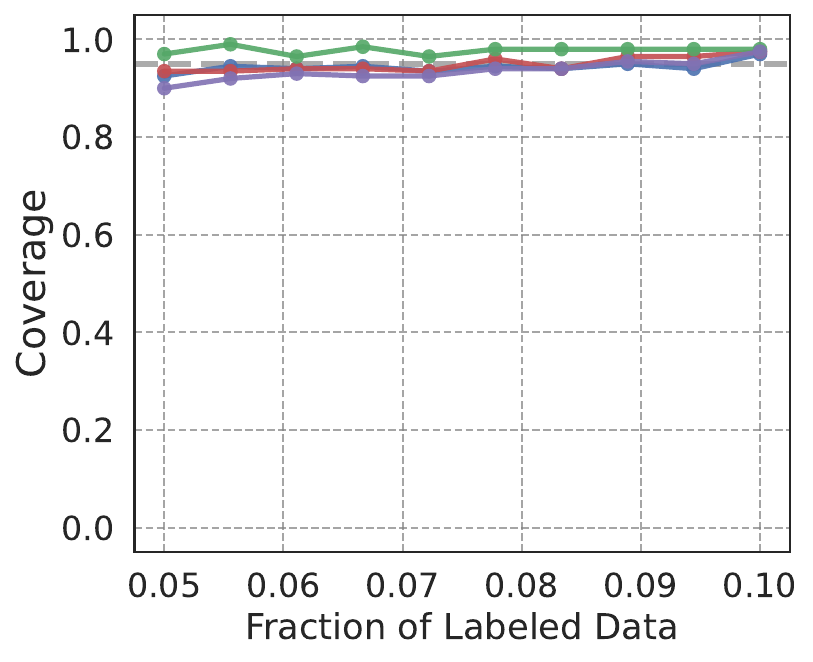}
    \end{subfigure}
    \hfill
    \begin{subfigure}[b]{0.32\textwidth}
        \includegraphics[width=\textwidth]{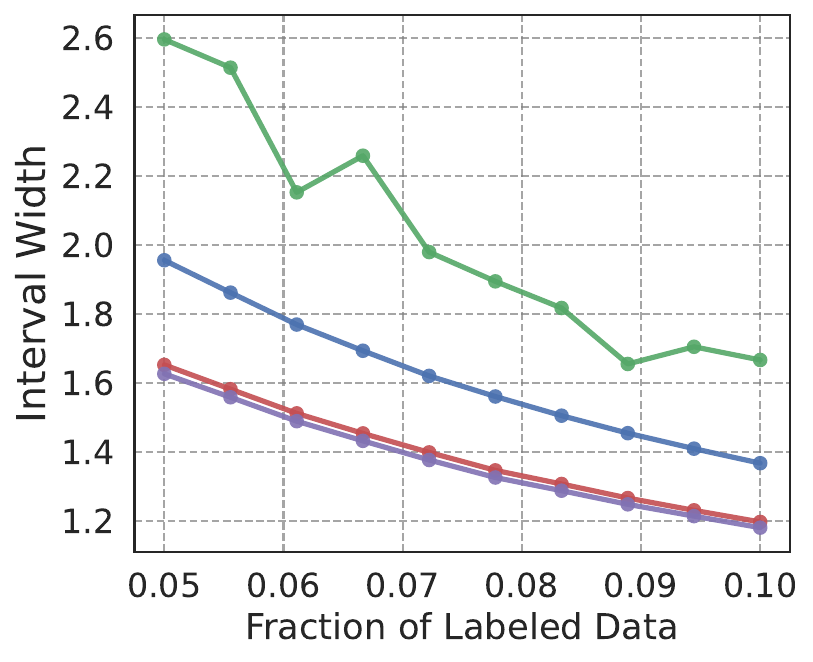}
    \end{subfigure}

    \vspace{0.25cm}
    \begin{subfigure}[b]{0.32\textwidth}
        \includegraphics[width=\textwidth]{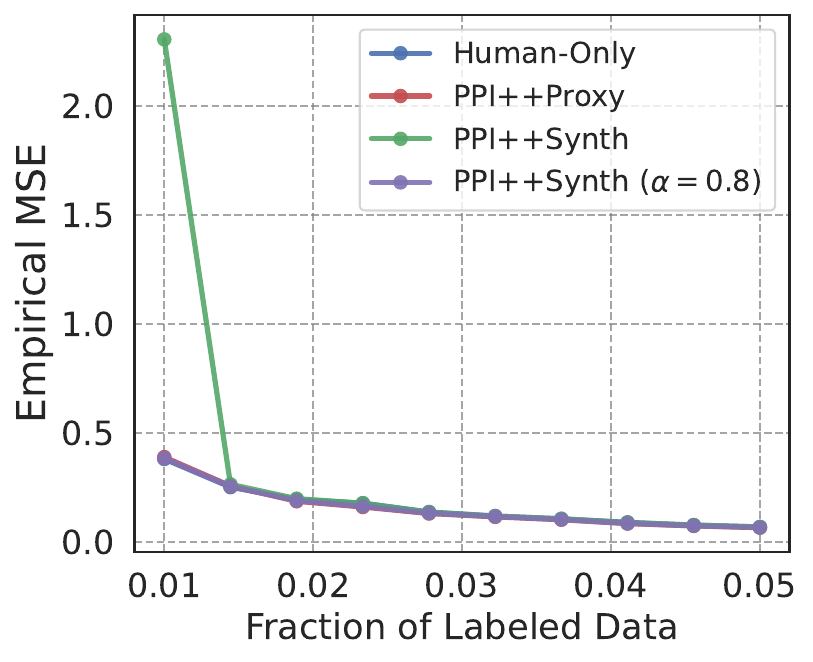}
    \end{subfigure}
    \hfill
    \begin{subfigure}[b]{0.32\textwidth}
        \includegraphics[width=\textwidth]{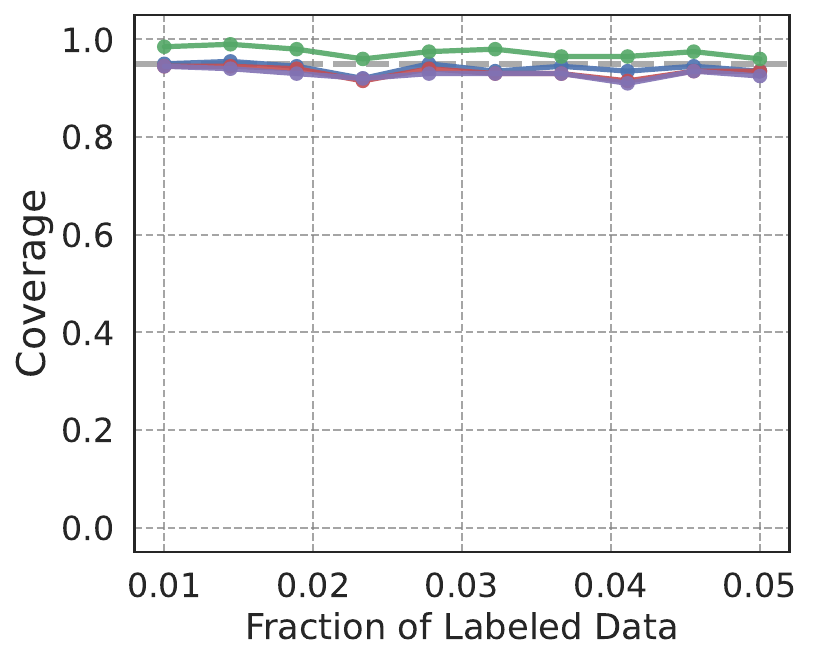}
    \end{subfigure}
    \hfill
    \begin{subfigure}[b]{0.32\textwidth}
        \includegraphics[width=\textwidth]{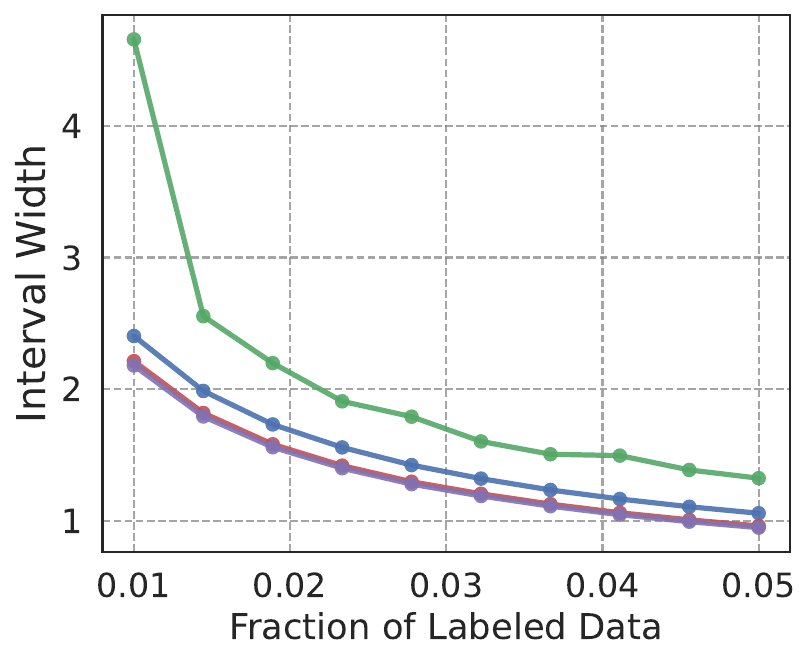}
    \end{subfigure}

    \vspace{0.25cm}

     \begin{subfigure}[b]{0.32\textwidth}
        \includegraphics[width=\textwidth]{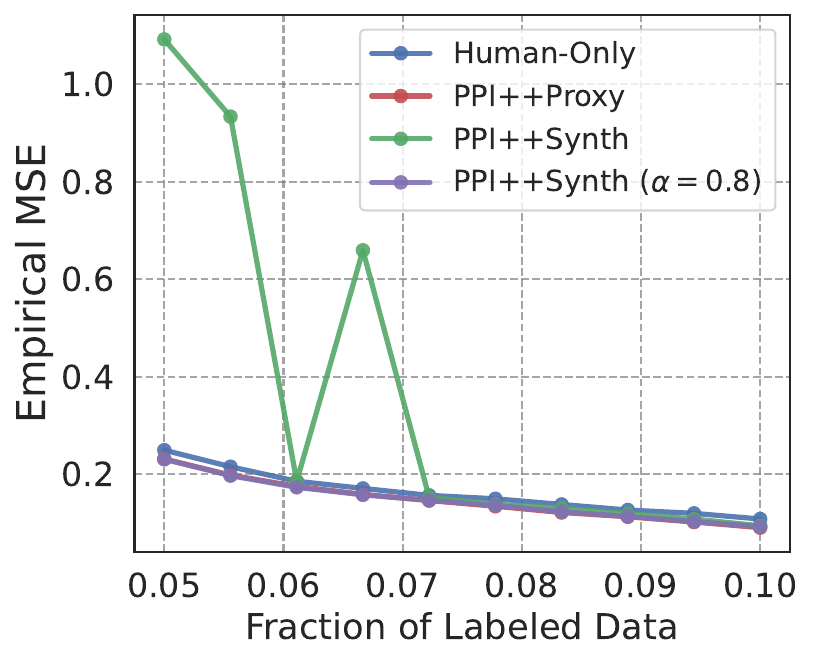}
    \end{subfigure}
    \hfill
    \begin{subfigure}[b]{0.32\textwidth}
        \includegraphics[width=\textwidth]{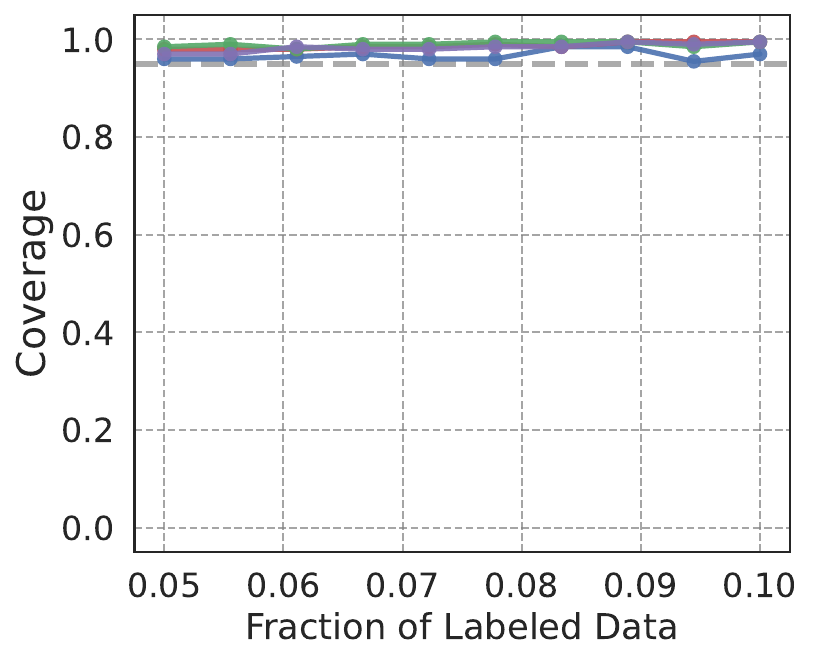}
    \end{subfigure}
    \hfill
    \begin{subfigure}[b]{0.32\textwidth}
        \includegraphics[width=\textwidth]{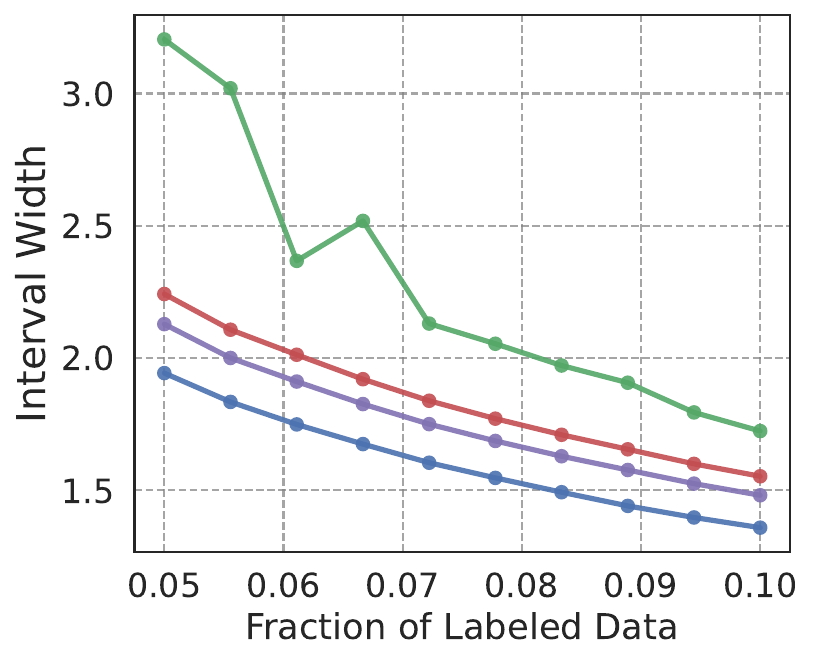}
    \end{subfigure}

    \vspace{0.25cm}

     \begin{subfigure}[b]{0.32\textwidth}
        \includegraphics[width=\textwidth]{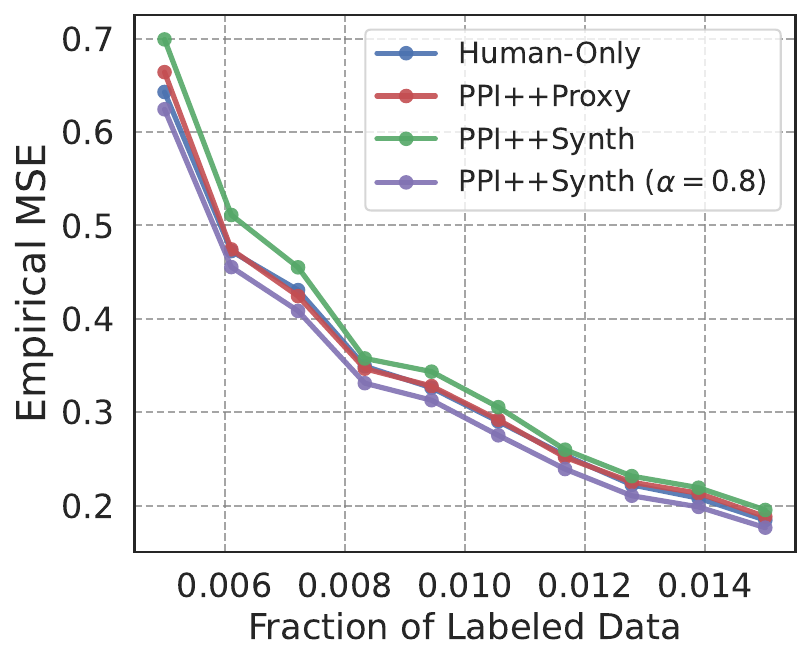}
    \end{subfigure}
    \hfill
    \begin{subfigure}[b]{0.32\textwidth}
        \includegraphics[width=\textwidth]{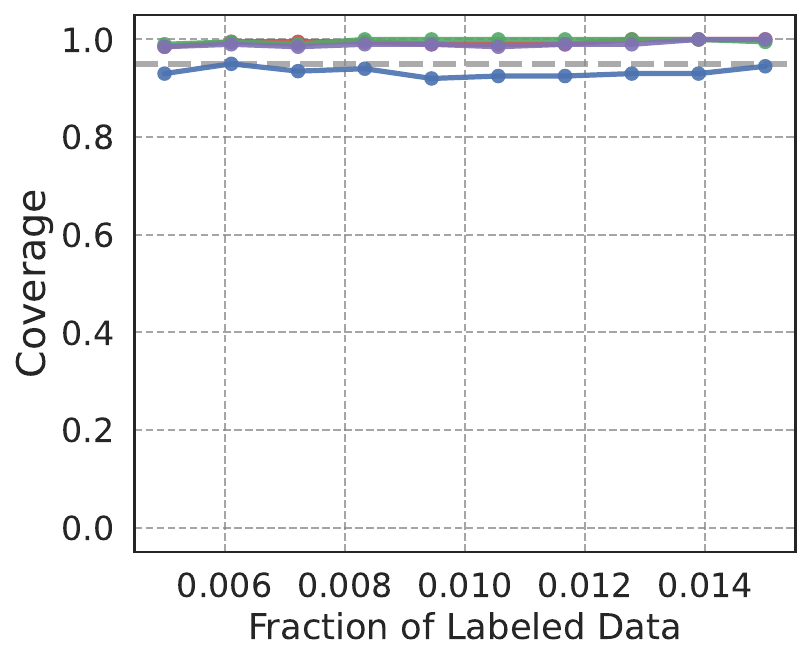}
    \end{subfigure}
    \hfill
    \begin{subfigure}[b]{0.32\textwidth}
        \includegraphics[width=\textwidth]{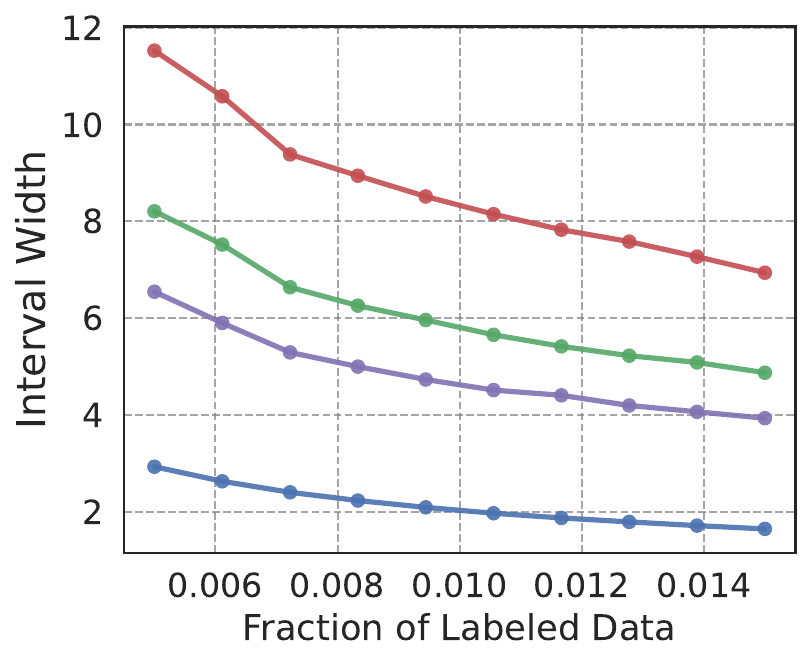}
    \end{subfigure}

    \caption{PPI++Synth results for logistic regression. This is the valid implementation with cross-fitting to select hyperparameters in a statistically valid fashion. We observe that it is upper-bounded by its oracle variant (PPI++Synth $\alpha = 0.8)$, as expected. In the main text (Fig. \ref{fig:key_results}), we report the oracle variant results  to account for potential gains from improved cross-fitting techniques.}
    \label{fig:cross_fitting_lr}
\end{figure}

\begin{figure} [t]
    \centering
    \begin{subfigure}[b]{0.32\textwidth}
        \includegraphics[width=\textwidth]{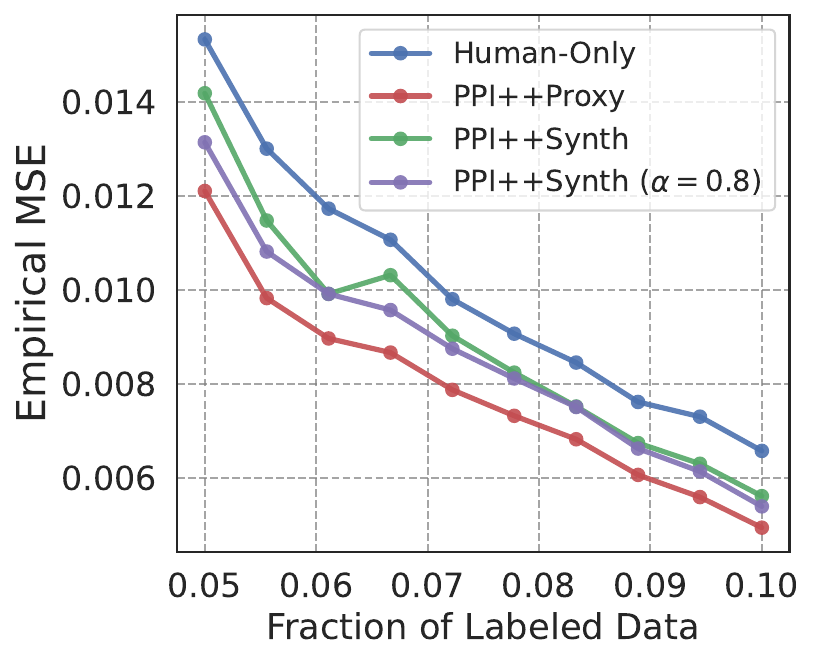}
    \end{subfigure}
    \hfill
    \begin{subfigure}[b]{0.32\textwidth}
        \includegraphics[width=\textwidth]{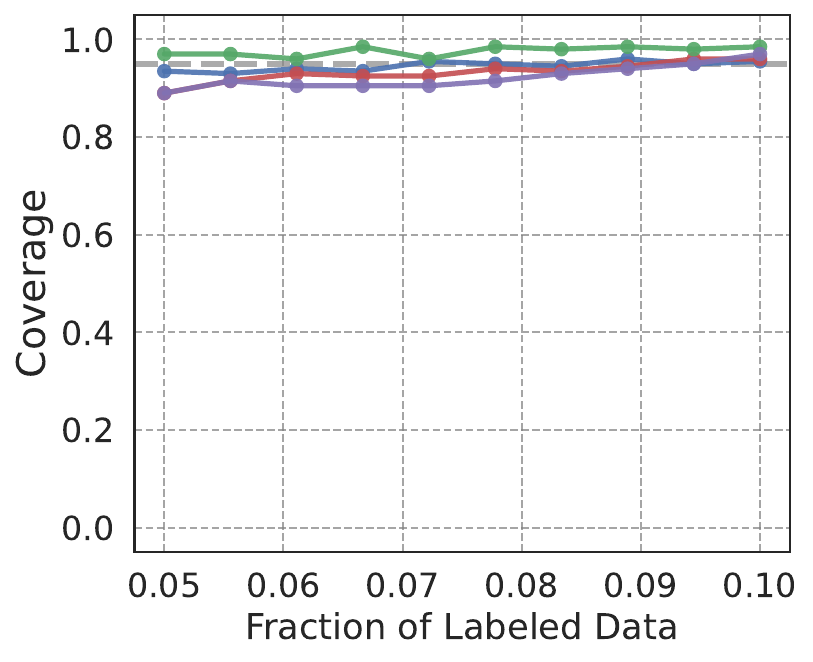}
    \end{subfigure}
    \hfill
    \begin{subfigure}[b]{0.32\textwidth}
        \includegraphics[width=\textwidth]{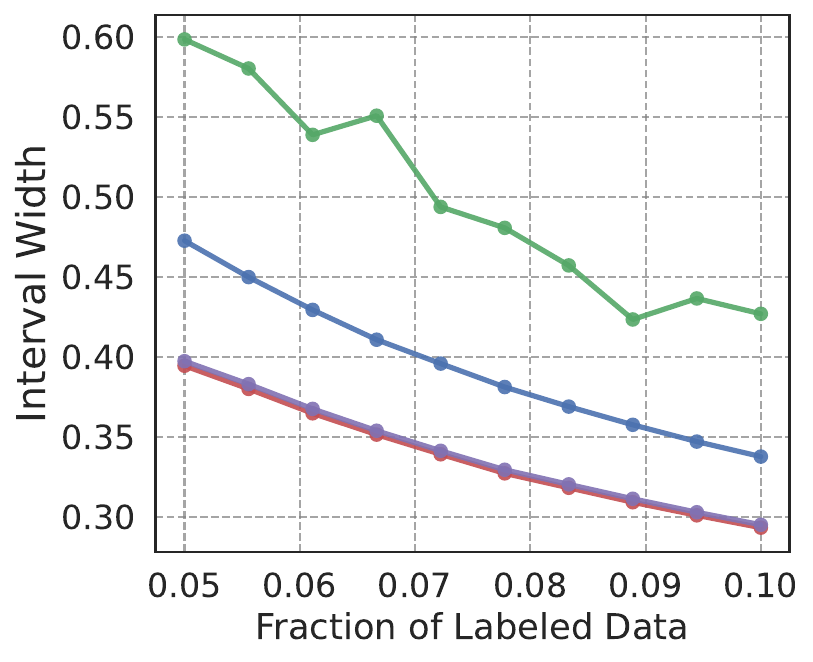}
    \end{subfigure}

    \vspace{0.25cm}
    \begin{subfigure}[b]{0.32\textwidth}
        \includegraphics[width=\textwidth]{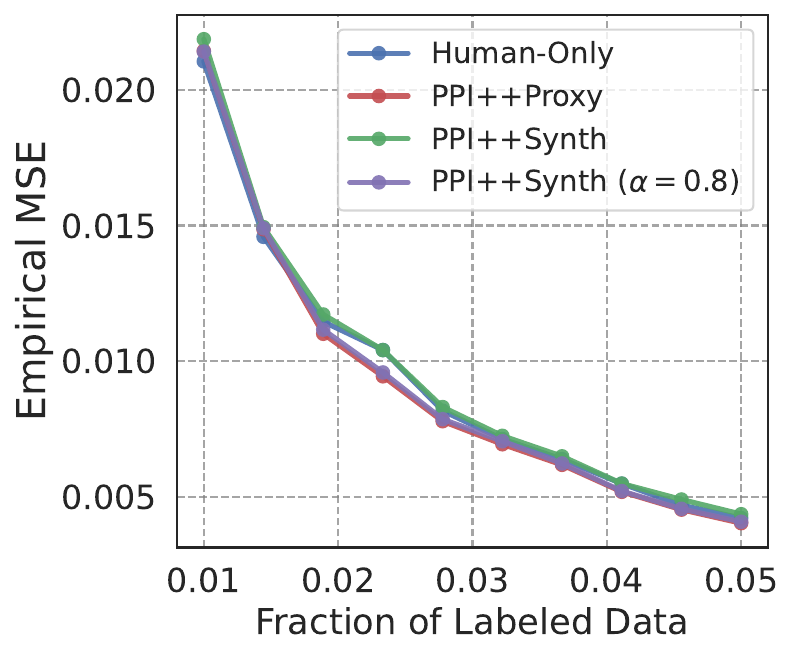}
    \end{subfigure}
    \hfill
    \begin{subfigure}[b]{0.32\textwidth}
        \includegraphics[width=\textwidth]{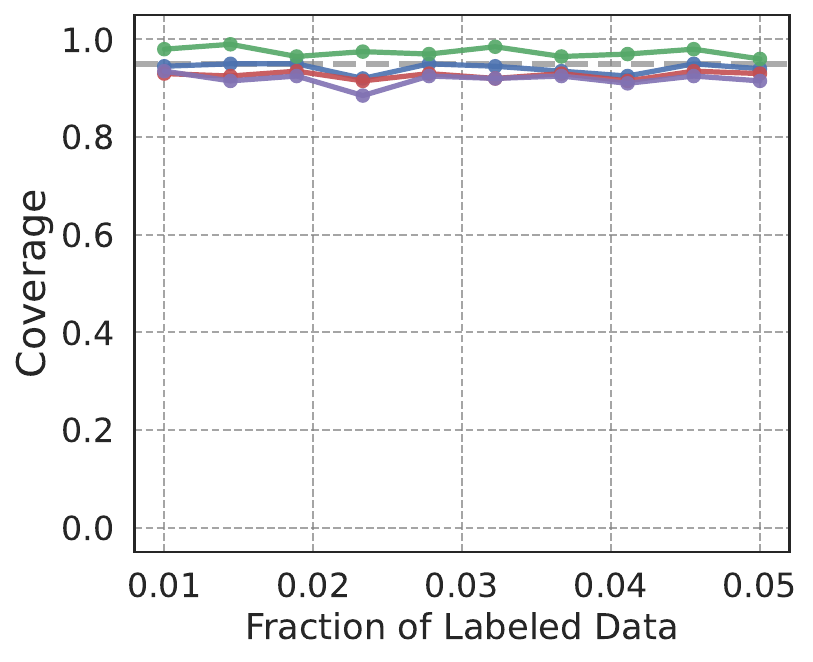}
    \end{subfigure}
    \hfill
    \begin{subfigure}[b]{0.32\textwidth}
        \includegraphics[width=\textwidth]{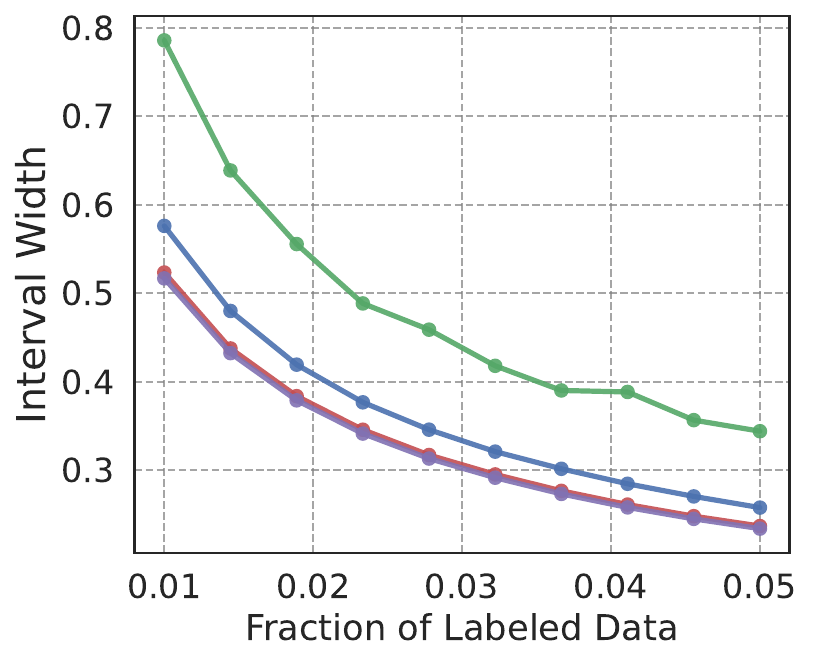}
    \end{subfigure}

    \vspace{0.25cm}

     \begin{subfigure}[b]{0.32\textwidth}
        \includegraphics[width=\textwidth]{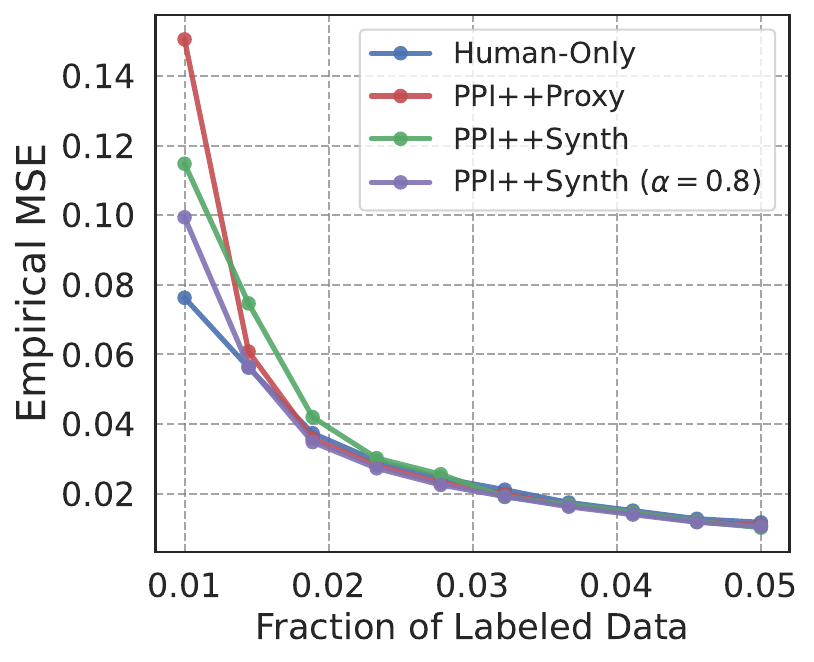}
    \end{subfigure}
    \hfill
    \begin{subfigure}[b]{0.32\textwidth}
        \includegraphics[width=\textwidth]{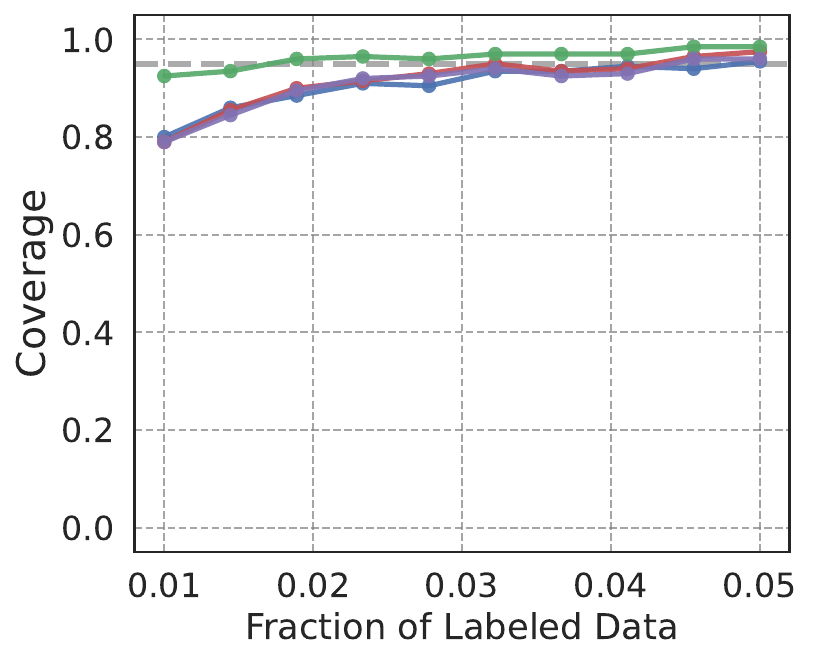}
    \end{subfigure}
    \hfill
    \begin{subfigure}[b]{0.32\textwidth}
        \includegraphics[width=\textwidth]{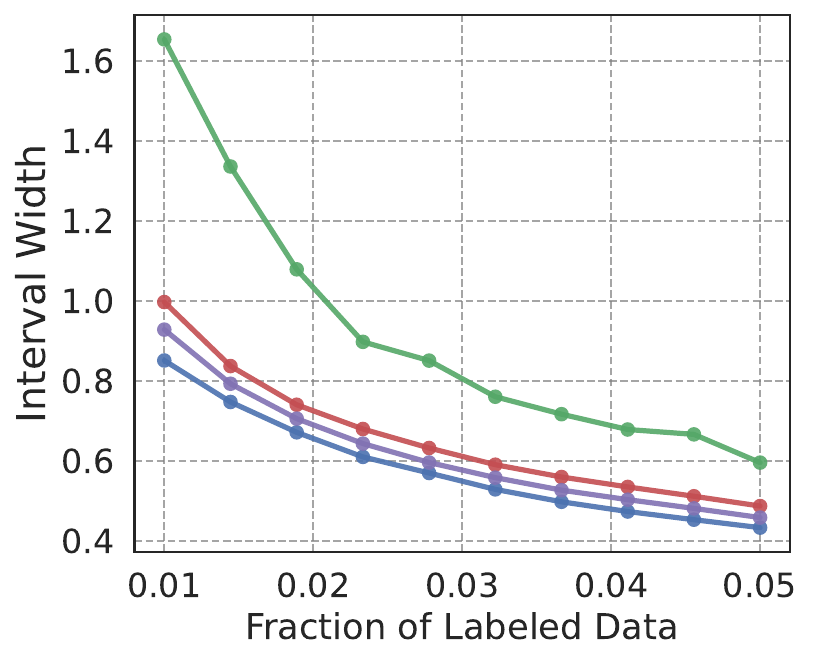}
    \end{subfigure}

    \vspace{0.25cm}

     \begin{subfigure}[b]{0.32\textwidth}
        \includegraphics[width=\textwidth]{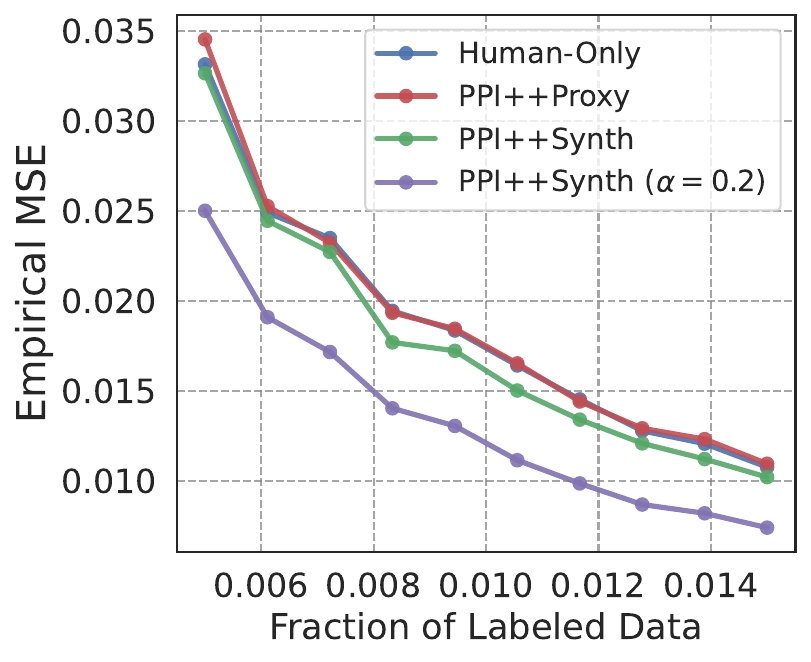}
    \end{subfigure}
    \hfill
    \begin{subfigure}[b]{0.32\textwidth}
        \includegraphics[width=\textwidth]{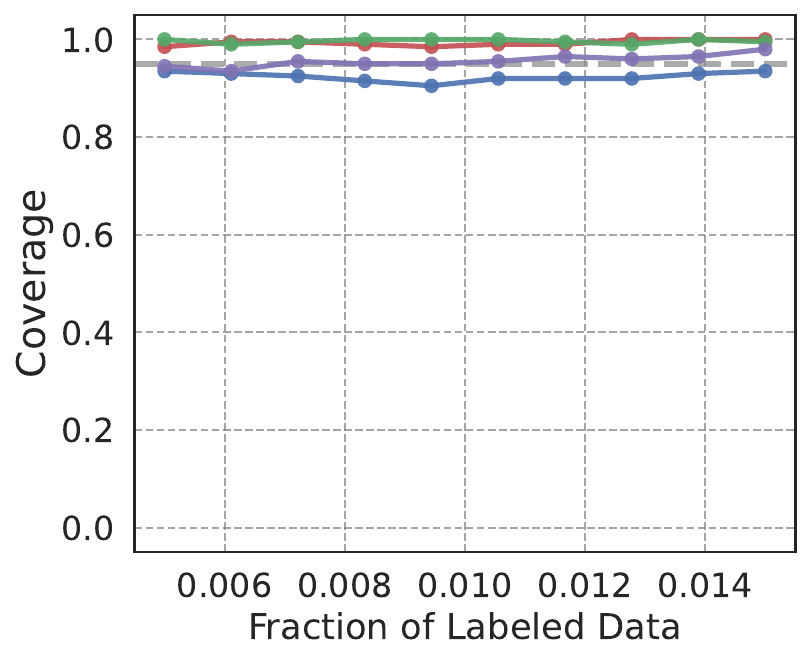}
    \end{subfigure}
    \hfill
    \begin{subfigure}[b]{0.32\textwidth}
        \includegraphics[width=\textwidth]{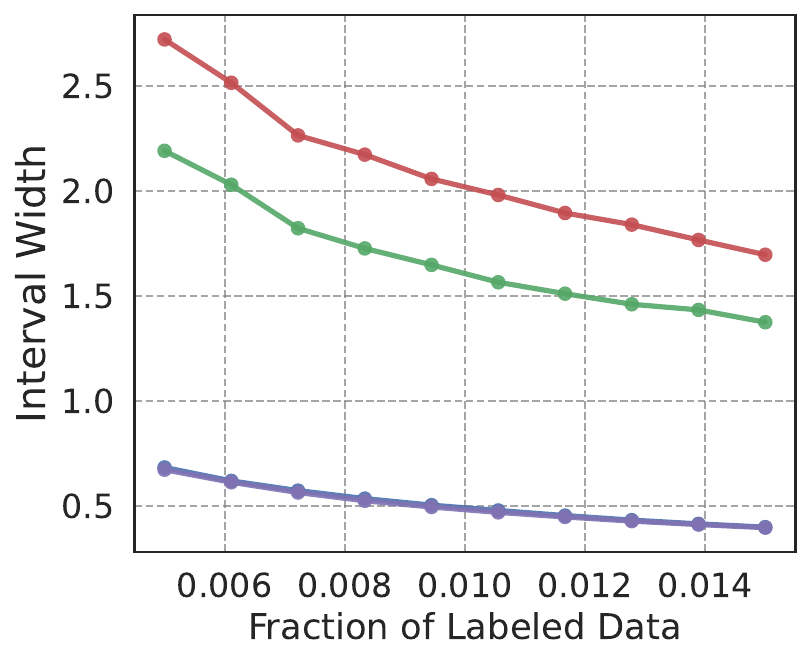}
    \end{subfigure}

    \caption{PPI++Synth results for OLS. Similarly as above, we observe that it is upper-bounded by its oracle variant (PPI++Synth $\alpha = 0.8$ and PPI++Synth $\alpha = 0.2$ for CBP), as expected.}
    \label{fig:cross_fitting_ols}
\end{figure}

\begin{figure} [t]
    \centering
    \begin{subfigure}[b]{0.32\textwidth}
        \includegraphics[width=\textwidth]{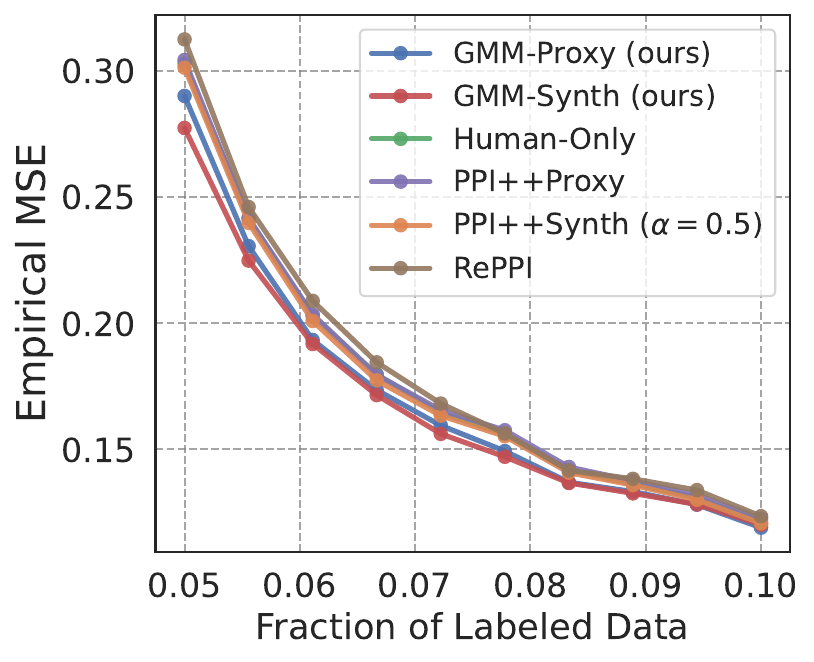}
    \end{subfigure}
    \hfill
    \begin{subfigure}[b]{0.32\textwidth}
        \includegraphics[width=\textwidth]{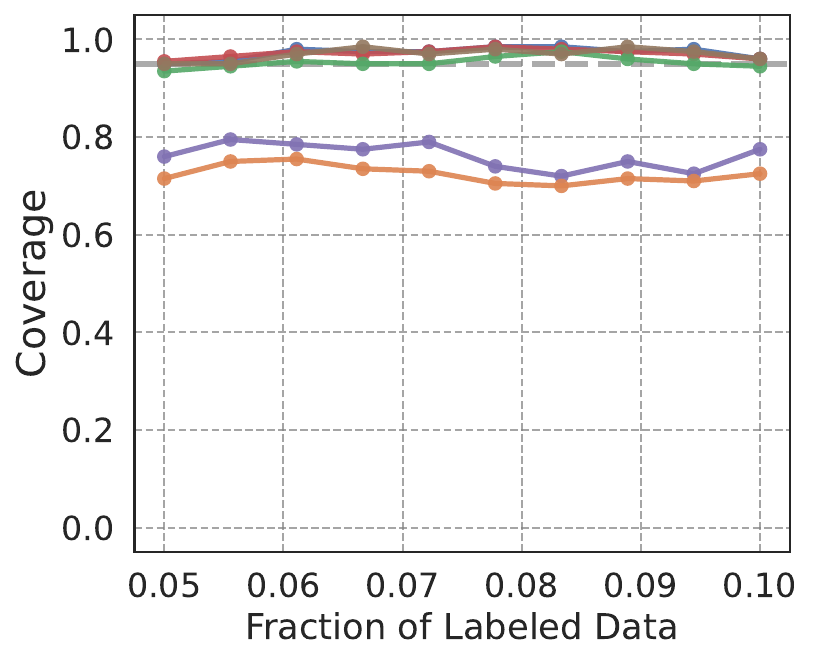}
    \end{subfigure}
    \hfill
    \begin{subfigure}[b]{0.32\textwidth}
        \includegraphics[width=\textwidth]{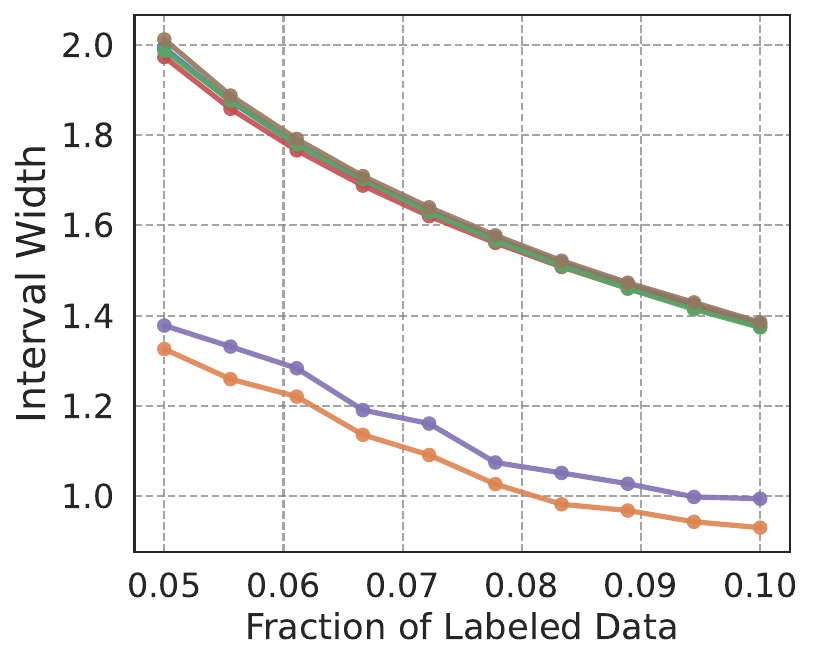}
    \end{subfigure}

    \vspace{0.25cm}
    \begin{subfigure}[b]{0.32\textwidth}
        \includegraphics[width=\textwidth]{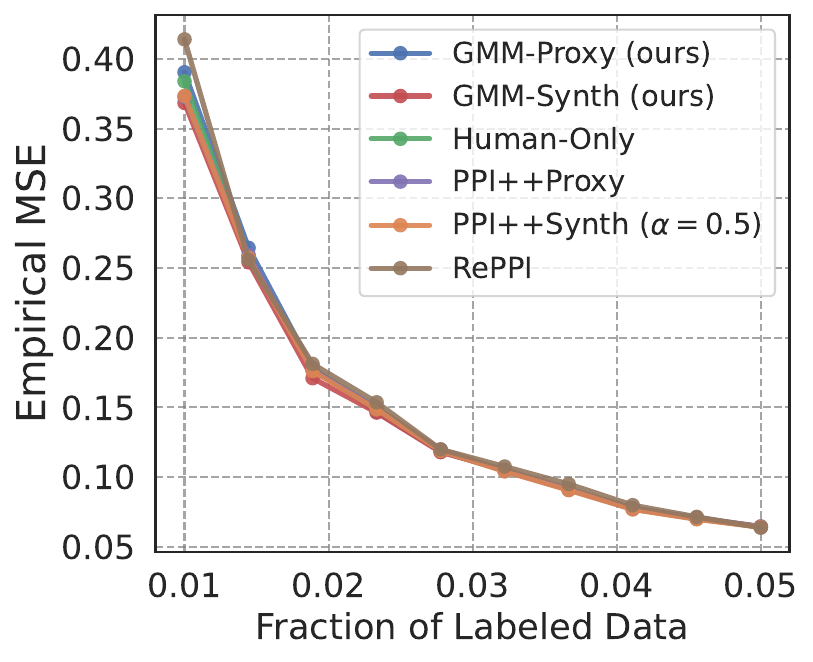}
    \end{subfigure}
    \hfill
    \begin{subfigure}[b]{0.32\textwidth}
        \includegraphics[width=\textwidth]{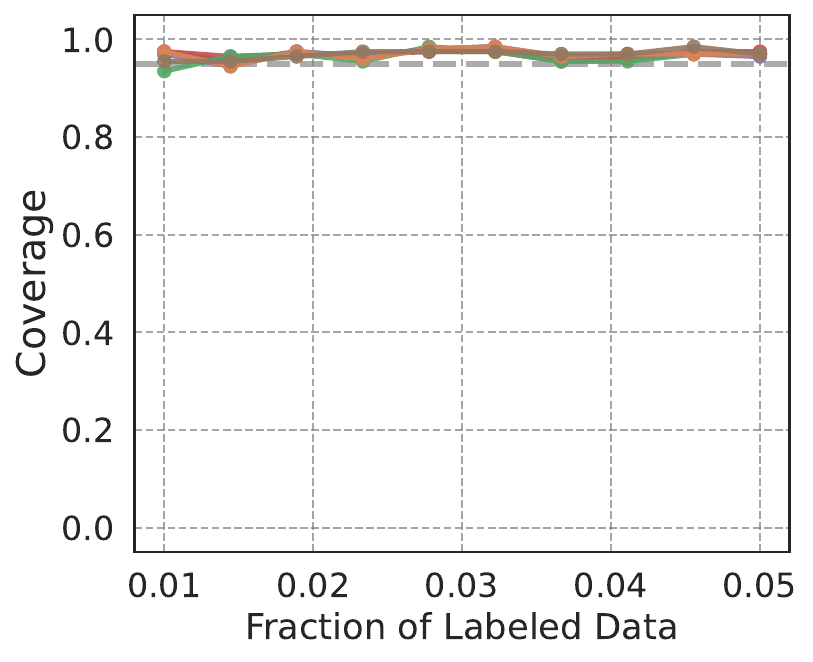}
    \end{subfigure}
    \hfill
    \begin{subfigure}[b]{0.32\textwidth}
        \includegraphics[width=\textwidth]{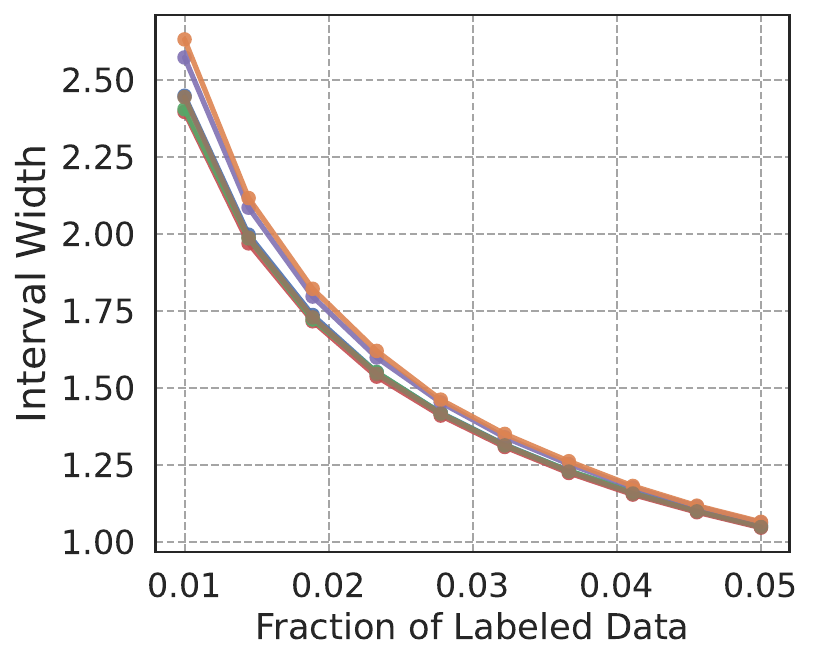}
    \end{subfigure}

    \vspace{0.25cm}

     \begin{subfigure}[b]{0.32\textwidth}
        \includegraphics[width=\textwidth]{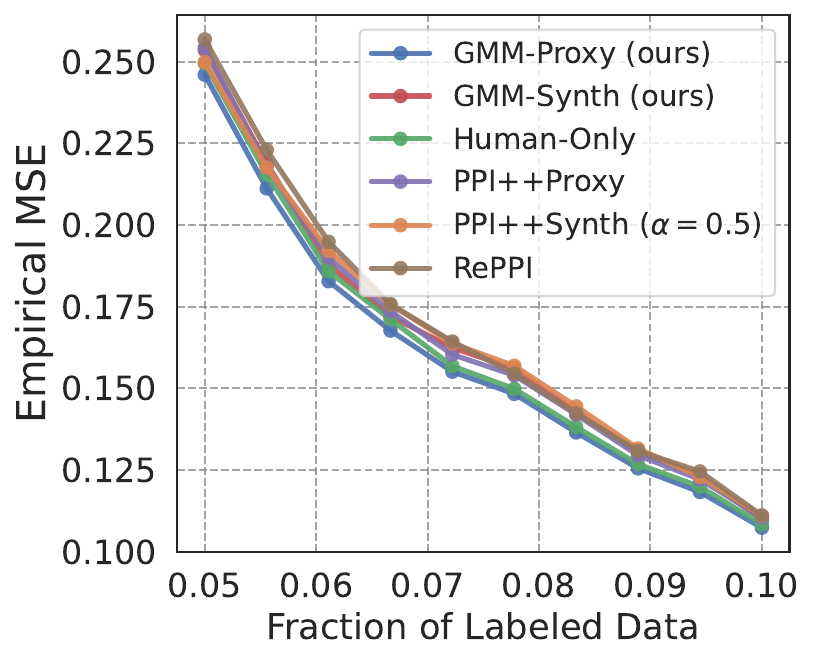}
    \end{subfigure}
    \hfill
    \begin{subfigure}[b]{0.32\textwidth}
        \includegraphics[width=\textwidth]{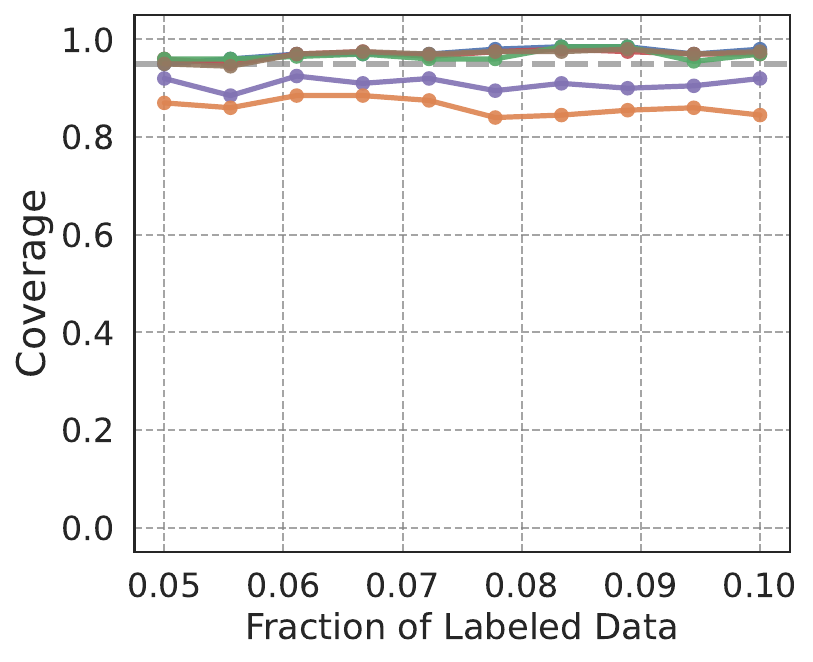}
    \end{subfigure}
    \hfill
    \begin{subfigure}[b]{0.32\textwidth}
        \includegraphics[width=\textwidth]{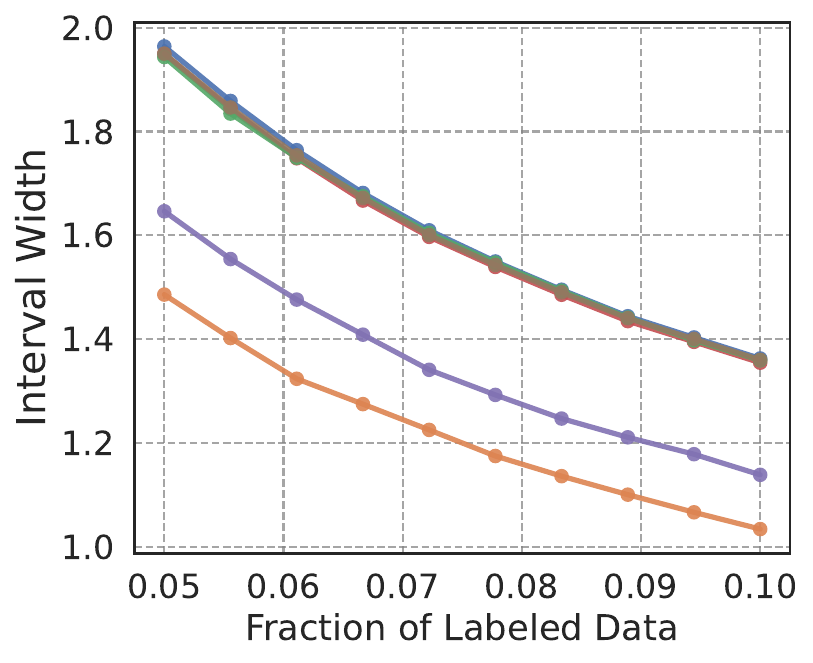}
    \end{subfigure}

    \vspace{0.25cm}

     \begin{subfigure}[b]{0.32\textwidth}
        \includegraphics[width=\textwidth]{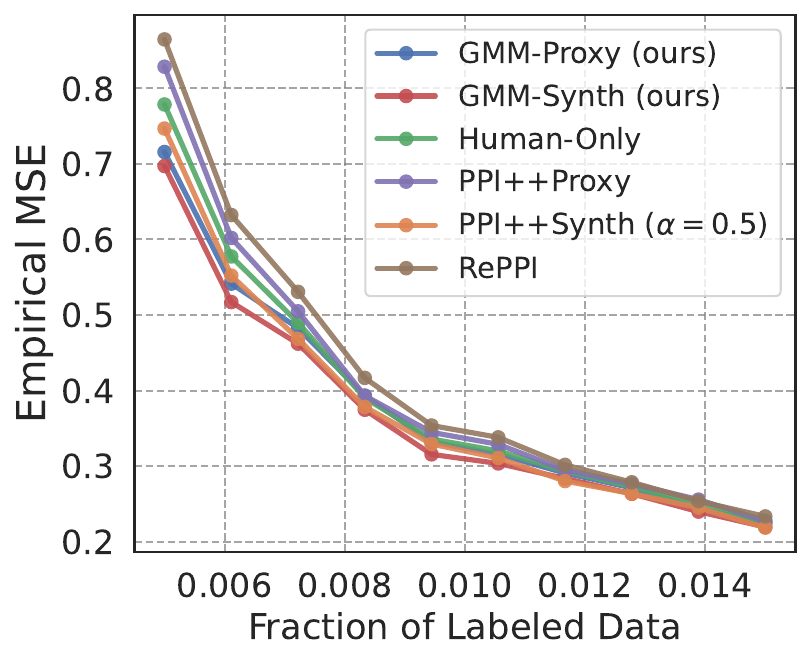}
    \end{subfigure}
    \hfill
    \begin{subfigure}[b]{0.32\textwidth}
        \includegraphics[width=\textwidth]{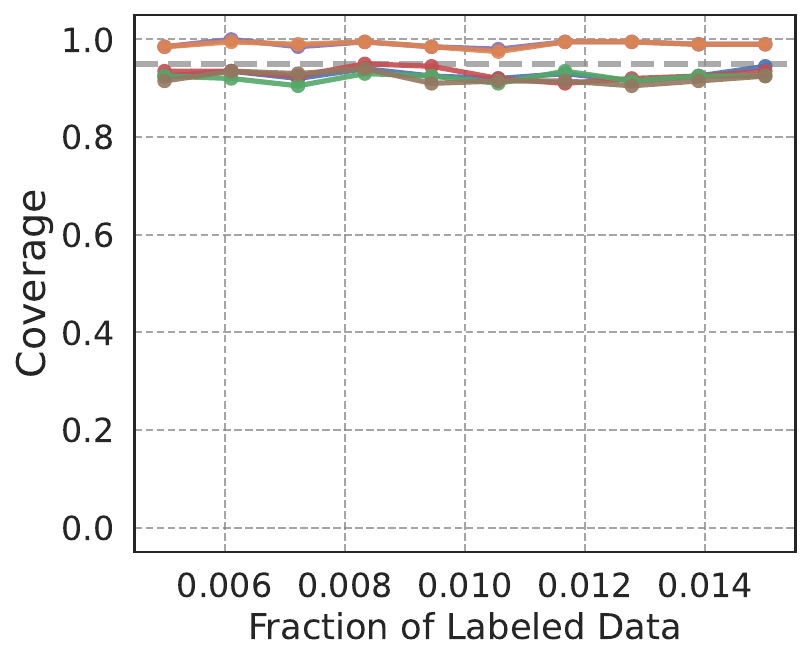}
    \end{subfigure}
    \hfill
    \begin{subfigure}[b]{0.32\textwidth}
        \includegraphics[width=\textwidth]{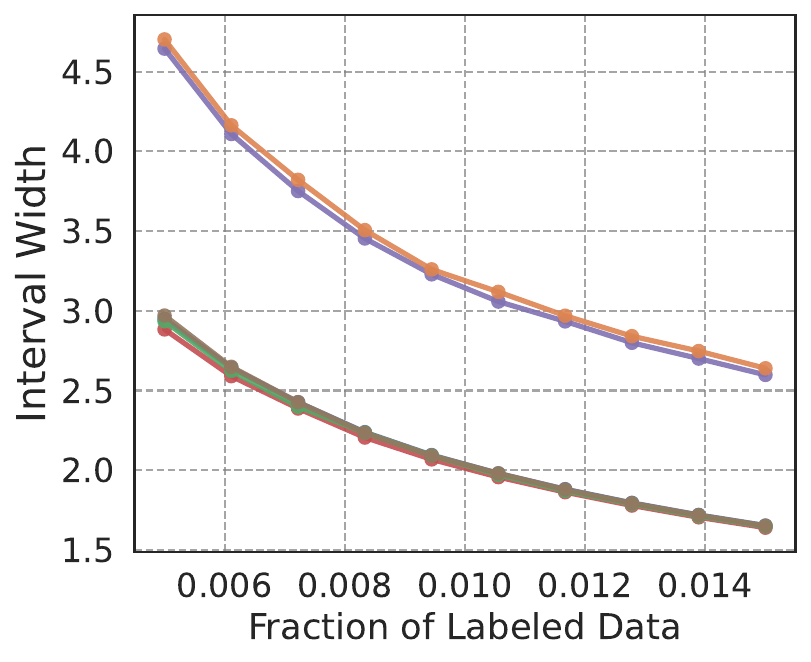}
    \end{subfigure}

    \caption{Llama-3-8b results for logistic regression. Each row corresponds to a task (i.e., 1pp, Hedging, Stance, Congressional Bills Data (from top to bottom)); each column corresponds to a metric (i.e., MSE, coverage, confidence interval width (from left to right)). Results are averaged over 200 trials.}
    \label{fig:llama_lr}
\end{figure}

\begin{figure} [t]
    \centering
    \begin{subfigure}[b]{0.32\textwidth}
        \includegraphics[width=\textwidth]{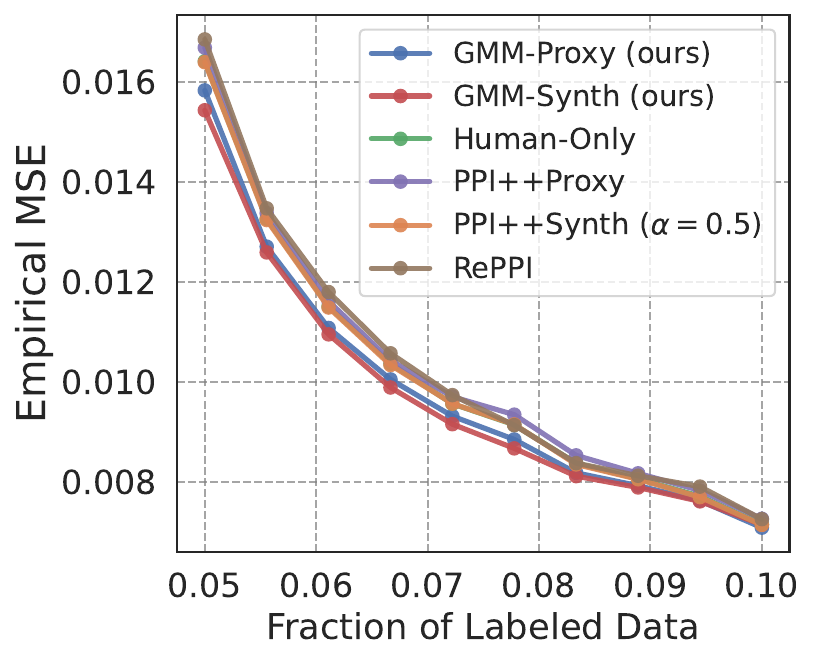}
    \end{subfigure}
    \hfill
    \begin{subfigure}[b]{0.32\textwidth}
        \includegraphics[width=\textwidth]{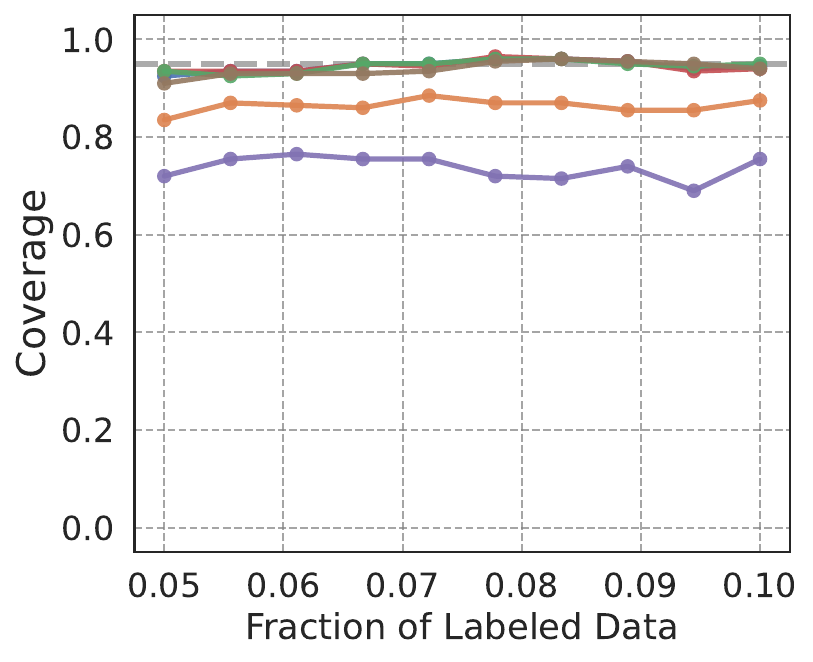}
    \end{subfigure}
    \hfill
    \begin{subfigure}[b]{0.32\textwidth}
        \includegraphics[width=\textwidth]{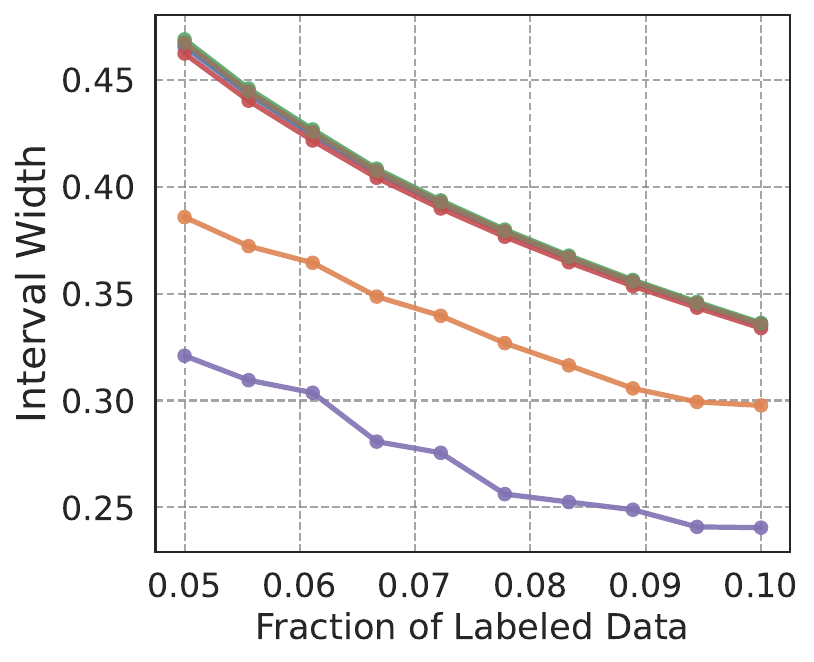}
    \end{subfigure}

    \vspace{0.25cm}
    \begin{subfigure}[b]{0.32\textwidth}
        \includegraphics[width=\textwidth]{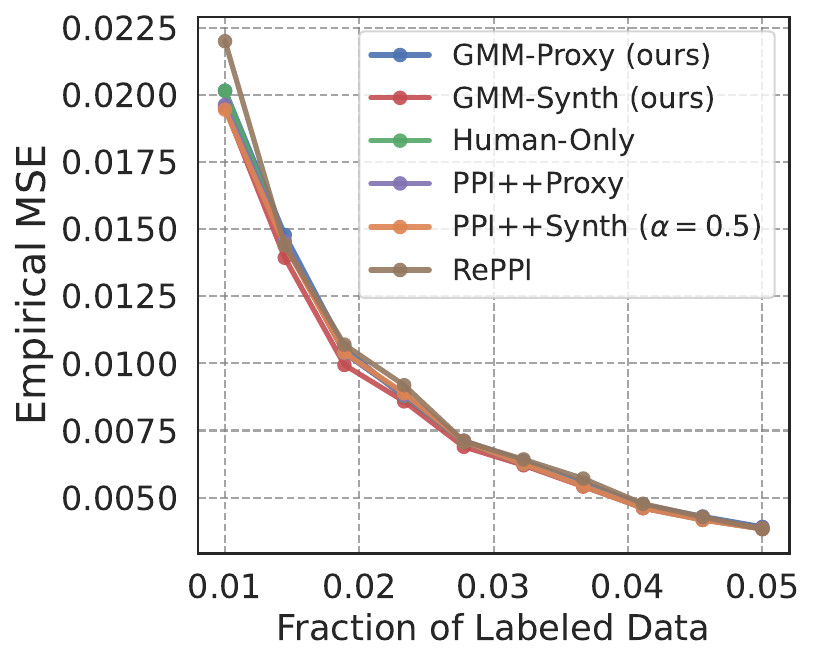}
    \end{subfigure}
    \hfill
    \begin{subfigure}[b]{0.32\textwidth}
        \includegraphics[width=\textwidth]{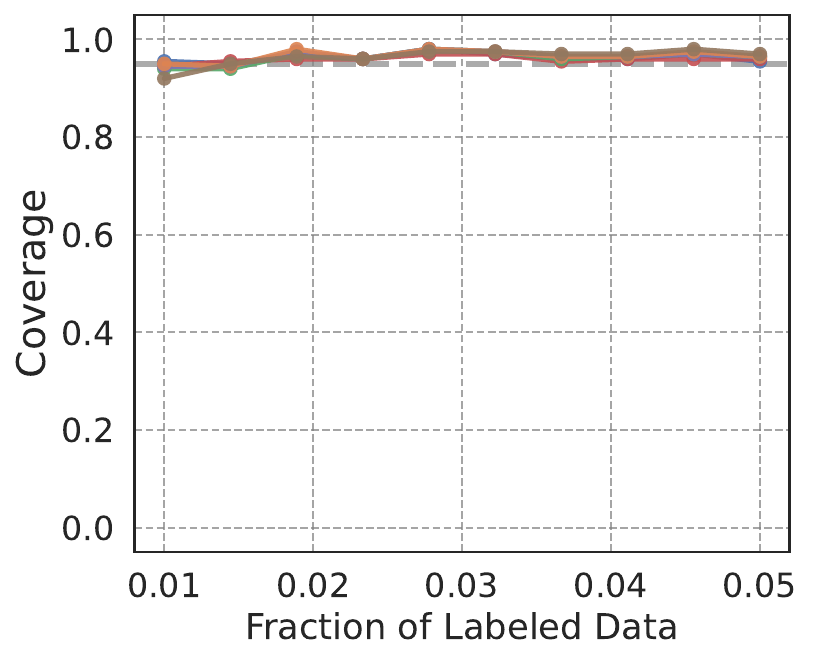}
    \end{subfigure}
    \hfill
    \begin{subfigure}[b]{0.32\textwidth}
        \includegraphics[width=\textwidth]{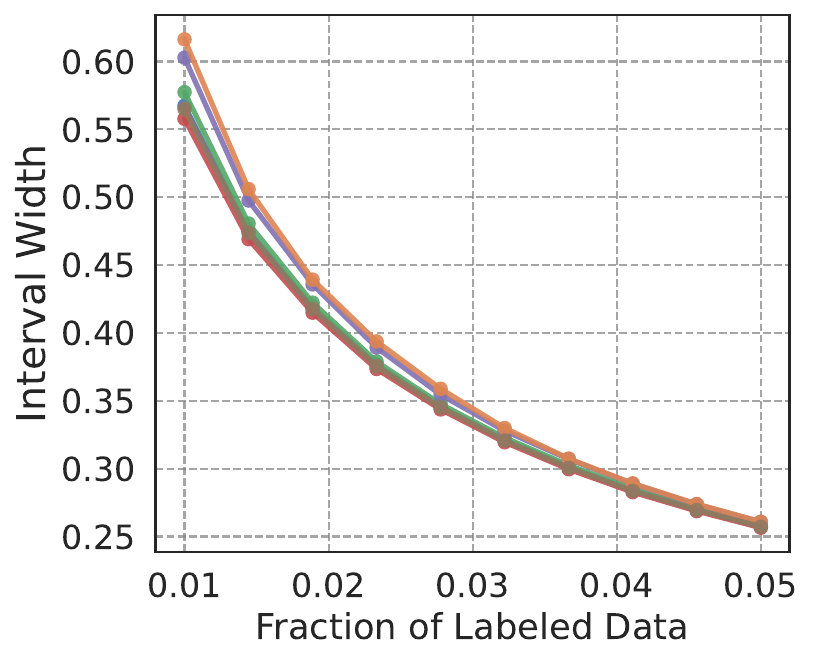}
    \end{subfigure}

    \vspace{0.25cm}

     \begin{subfigure}[b]{0.32\textwidth}
        \includegraphics[width=\textwidth]{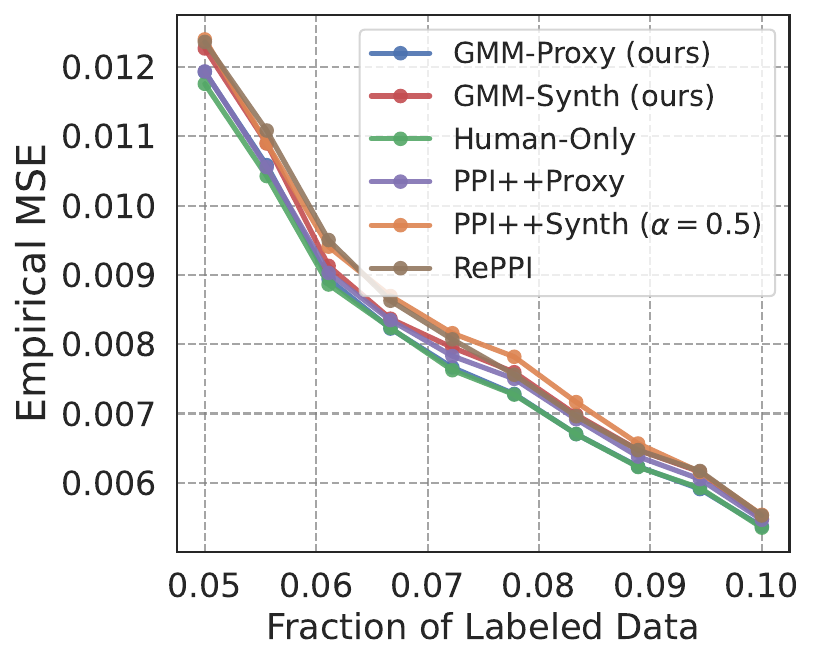}
    \end{subfigure}
    \hfill
    \begin{subfigure}[b]{0.32\textwidth}
        \includegraphics[width=\textwidth]{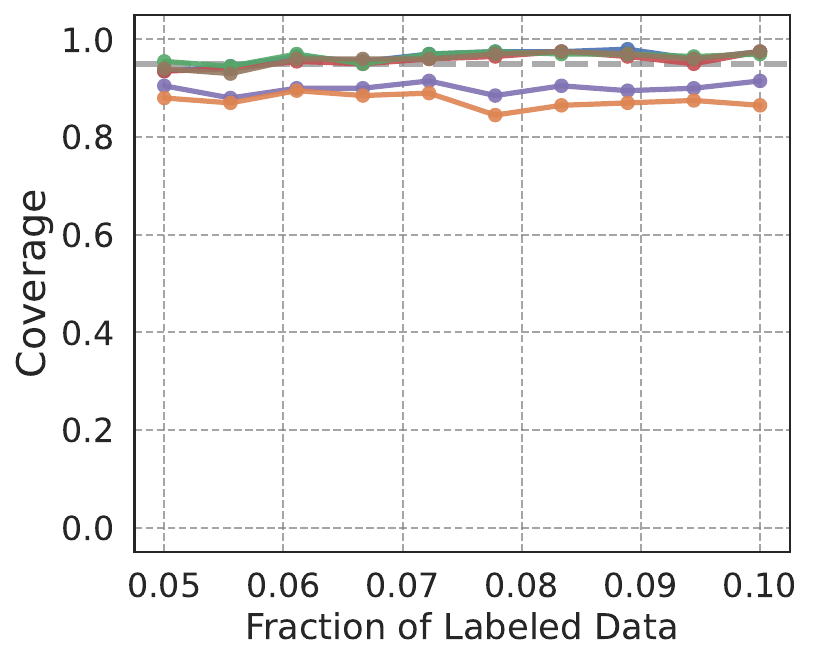}
    \end{subfigure}
    \hfill
    \begin{subfigure}[b]{0.32\textwidth}
        \includegraphics[width=\textwidth]{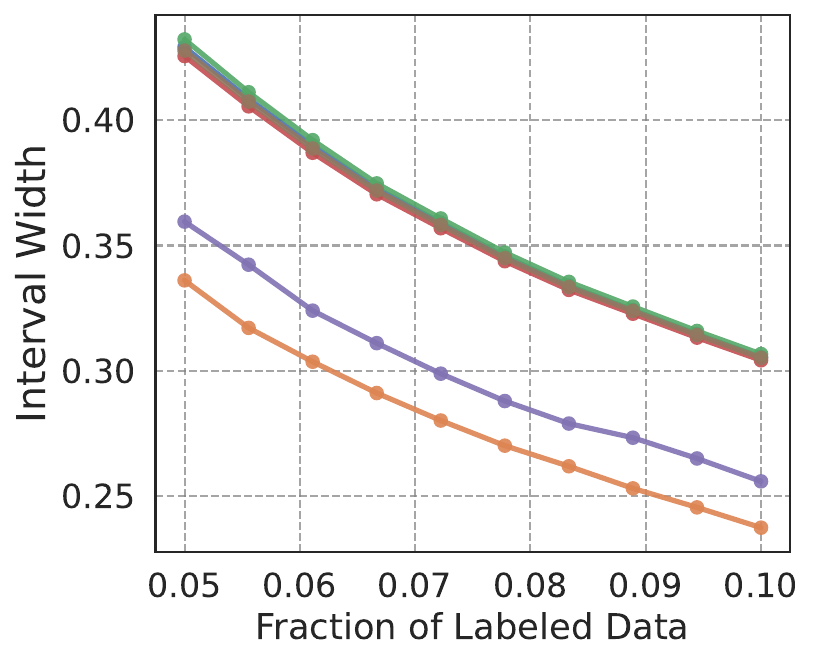}
    \end{subfigure}

    \vspace{0.25cm}

     \begin{subfigure}[b]{0.32\textwidth}
        \includegraphics[width=\textwidth]{plots/plot_cbp_llama_mse.pdf}
    \end{subfigure}
    \hfill
    \begin{subfigure}[b]{0.32\textwidth}
        \includegraphics[width=\textwidth]{plots/plot_cbp_llama_coverage.pdf}
    \end{subfigure}
    \hfill
    \begin{subfigure}[b]{0.32\textwidth}
        \includegraphics[width=\textwidth]{plots/plot_cbp_llama_intervals.pdf}
    \end{subfigure}

    \caption{Llama-3-8b results for OLS. Each row corresponds to a task (i.e., 1pp, Hedging, Stance, Congressional Bills Data (from top to bottom)); each column corresponds to a metric (i.e., MSE, coverage, confidence interval width (from left to right)). Results are averaged over 200 trials.}
    \label{fig:llama_ols}
\end{figure}

\begin{figure} [t]
    \centering
    \begin{subfigure}[b]{0.32\textwidth}
        \includegraphics[width=\textwidth]{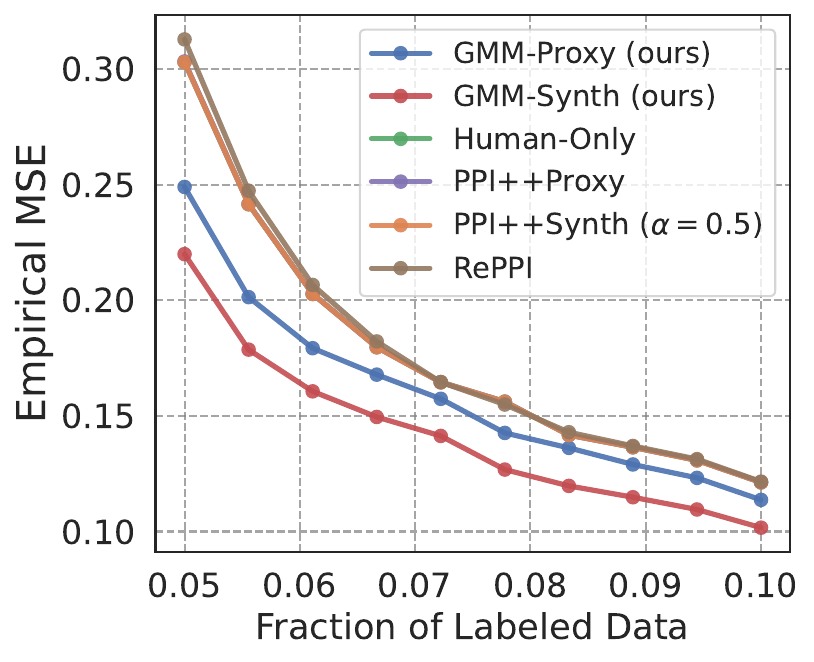}
    \end{subfigure}
    \hfill
    \begin{subfigure}[b]{0.32\textwidth}
        \includegraphics[width=\textwidth]{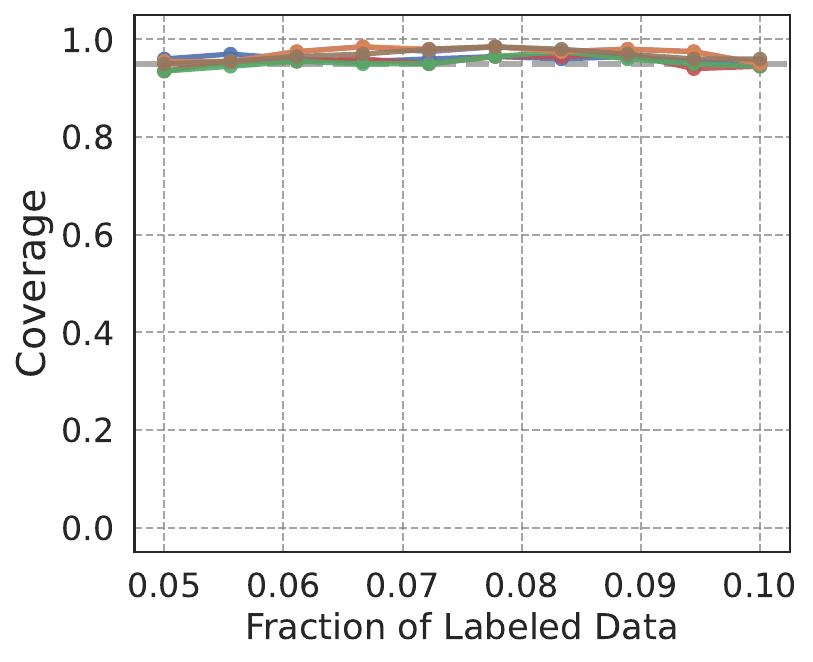}
    \end{subfigure}
    \hfill
    \begin{subfigure}[b]{0.32\textwidth}
        \includegraphics[width=\textwidth]{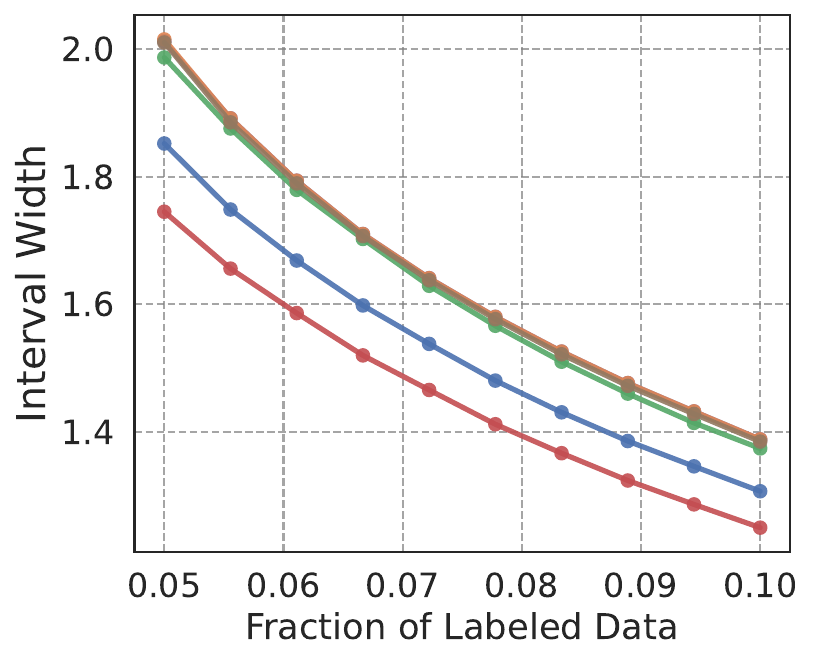}
    \end{subfigure}

    \vspace{0.25cm}
    \begin{subfigure}[b]{0.32\textwidth}
        \includegraphics[width=\textwidth]{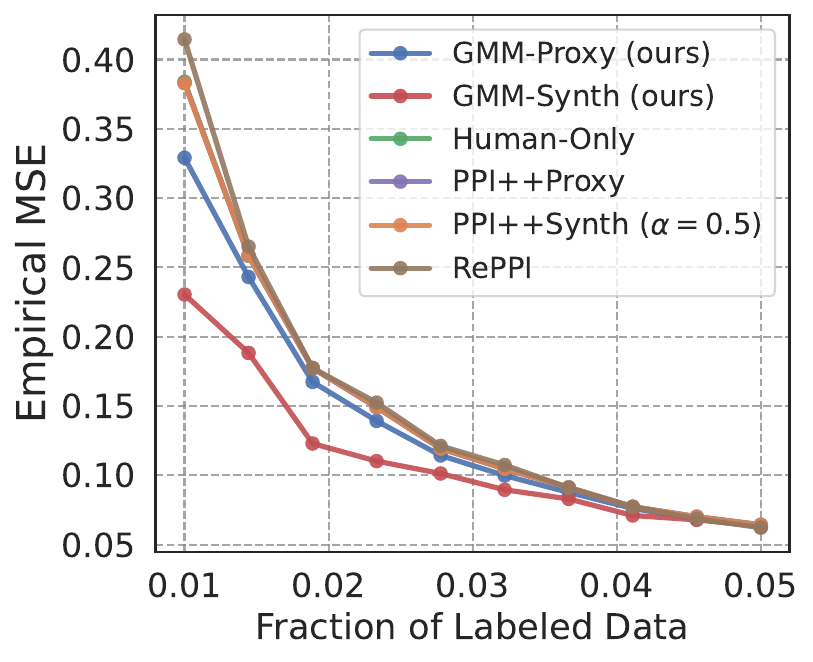}
    \end{subfigure}
    \hfill
    \begin{subfigure}[b]{0.32\textwidth}
        \includegraphics[width=\textwidth]{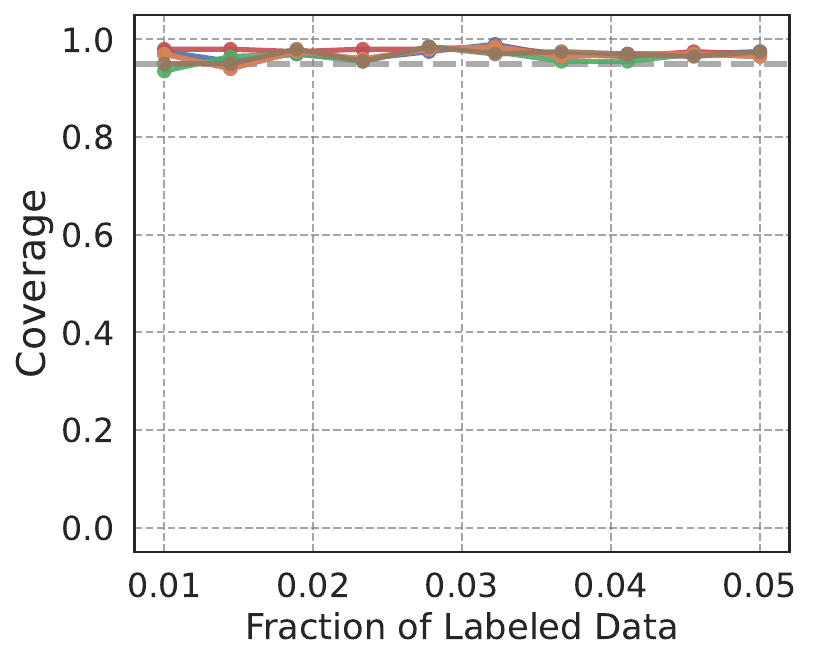}
    \end{subfigure}
    \hfill
    \begin{subfigure}[b]{0.32\textwidth}
        \includegraphics[width=\textwidth]{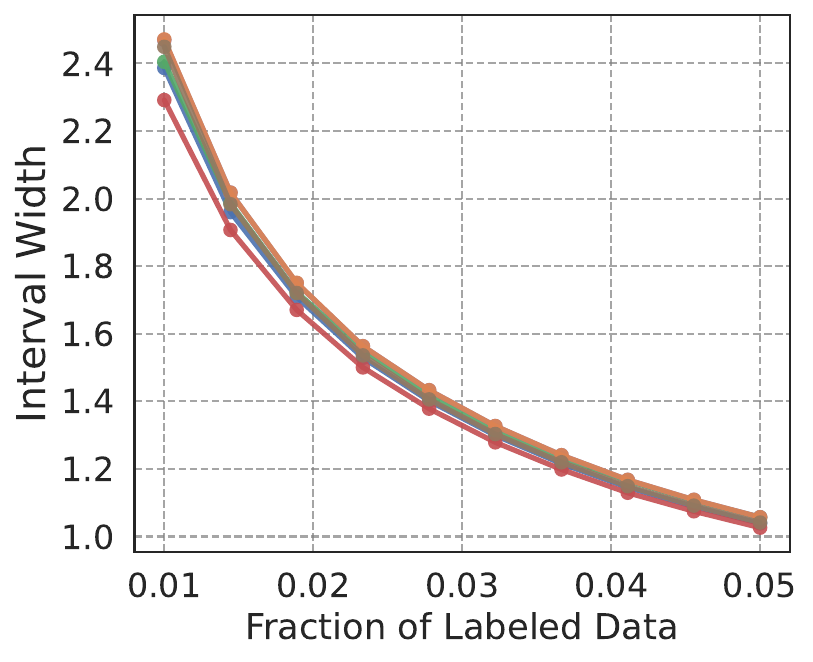}
    \end{subfigure}

    \vspace{0.25cm}

     \begin{subfigure}[b]{0.32\textwidth}
        \includegraphics[width=\textwidth]{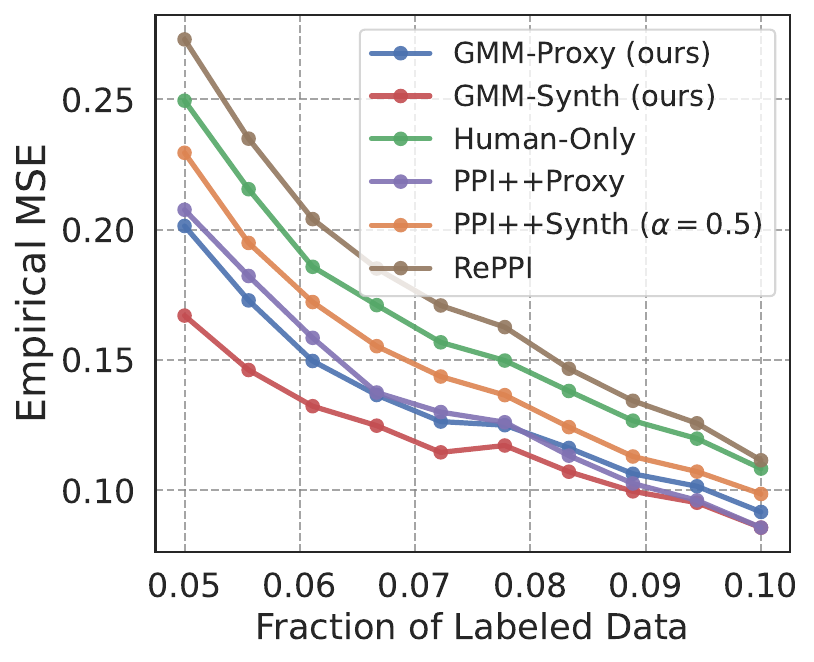}
    \end{subfigure}
    \hfill
    \begin{subfigure}[b]{0.32\textwidth}
        \includegraphics[width=\textwidth]{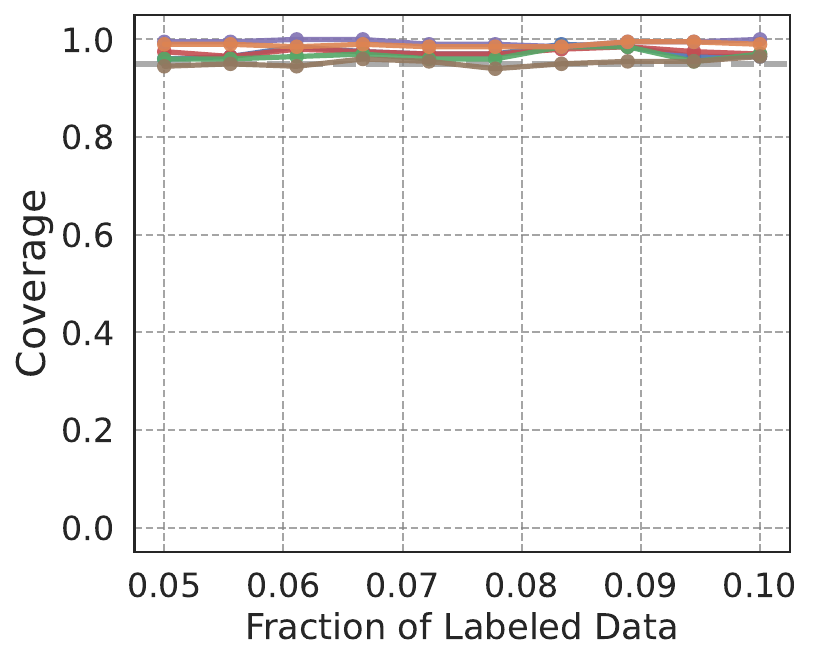}
    \end{subfigure}
    \hfill
    \begin{subfigure}[b]{0.32\textwidth}
        \includegraphics[width=\textwidth]{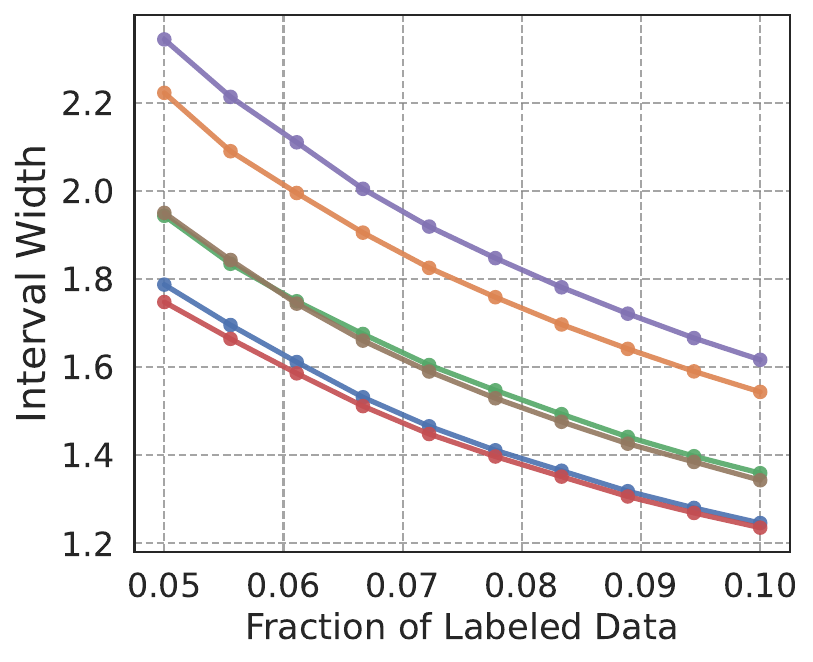}
    \end{subfigure}

    \vspace{0.25cm}

     \begin{subfigure}[b]{0.32\textwidth}
        \includegraphics[width=\textwidth]{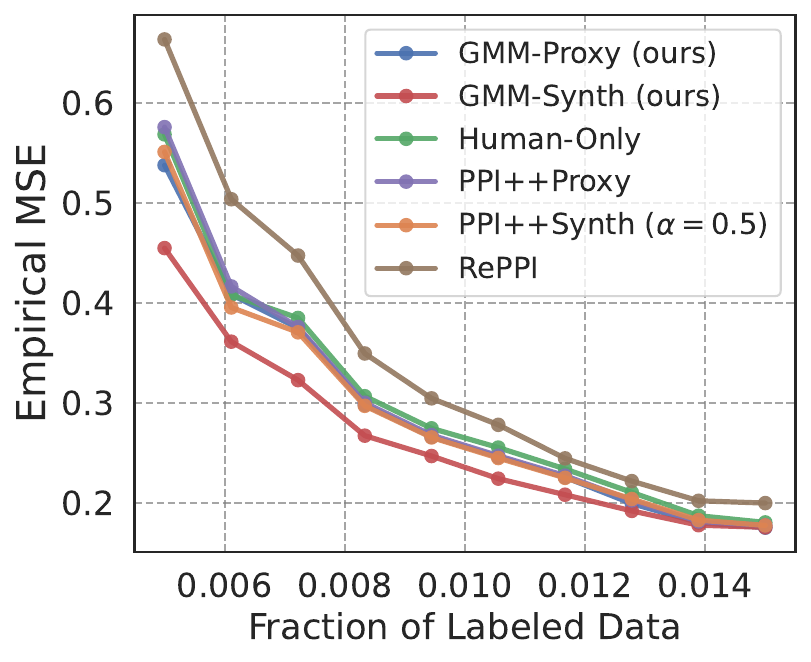}
    \end{subfigure}
    \hfill
    \begin{subfigure}[b]{0.32\textwidth}
        \includegraphics[width=\textwidth]{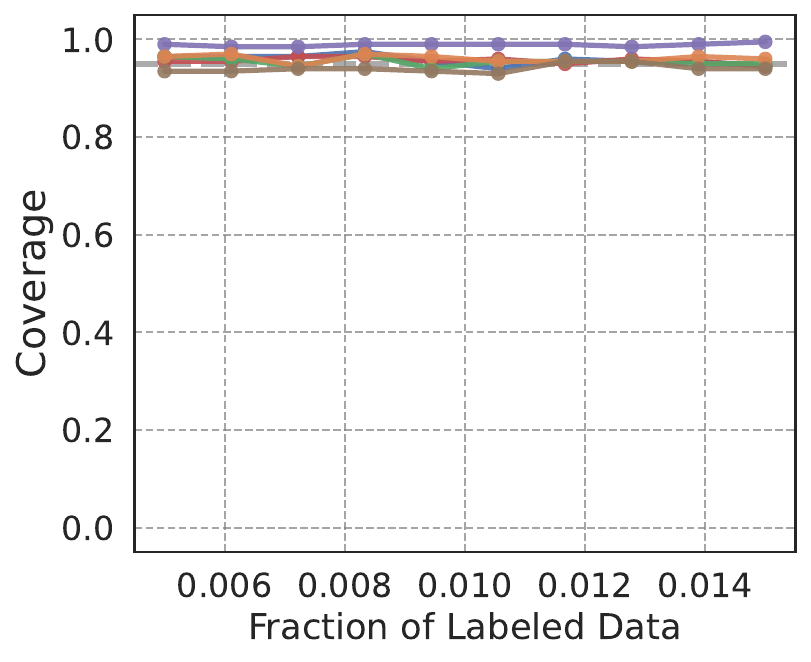}
    \end{subfigure}
    \hfill
    \begin{subfigure}[b]{0.32\textwidth}
        \includegraphics[width=\textwidth]{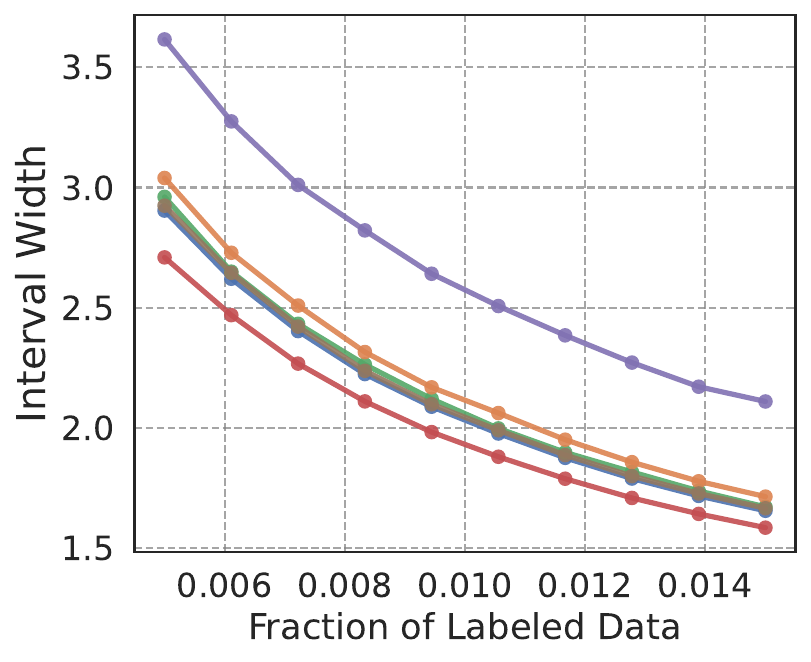}
    \end{subfigure}

    \caption{Qwen-3-8b results for logistic regression. Each row corresponds to a task (i.e., 1pp, Hedging, Stance, Congressional Bills Data (from top to bottom)); each column corresponds to a metric (i.e., MSE, coverage, confidence interval width (from left to right)). Results are averaged over 200 trials.}
    \label{fig:qwen_lr}
\end{figure}

\begin{figure} [t]
    \centering
    \begin{subfigure}[b]{0.32\textwidth}
        \includegraphics[width=\textwidth]{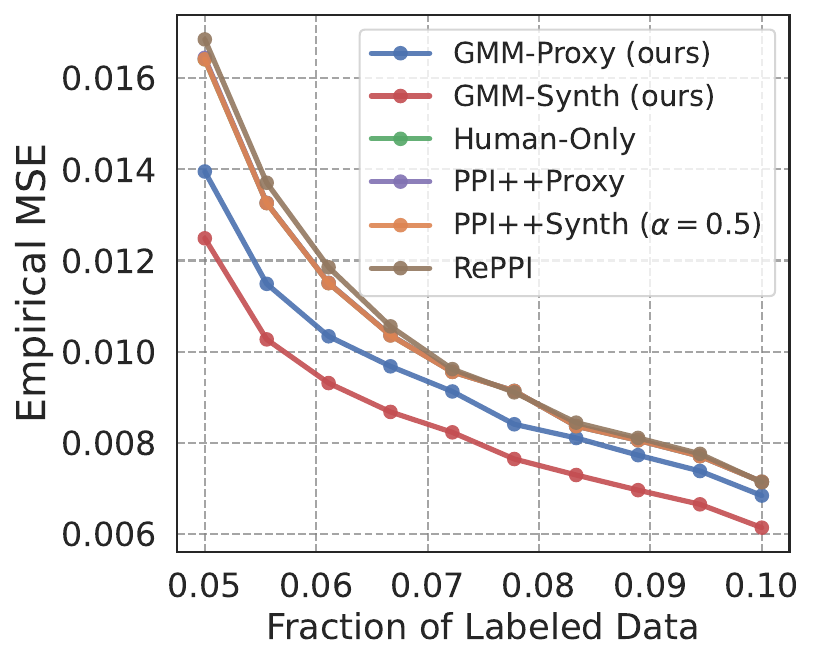}
    \end{subfigure}
    \hfill
    \begin{subfigure}[b]{0.32\textwidth}
        \includegraphics[width=\textwidth]{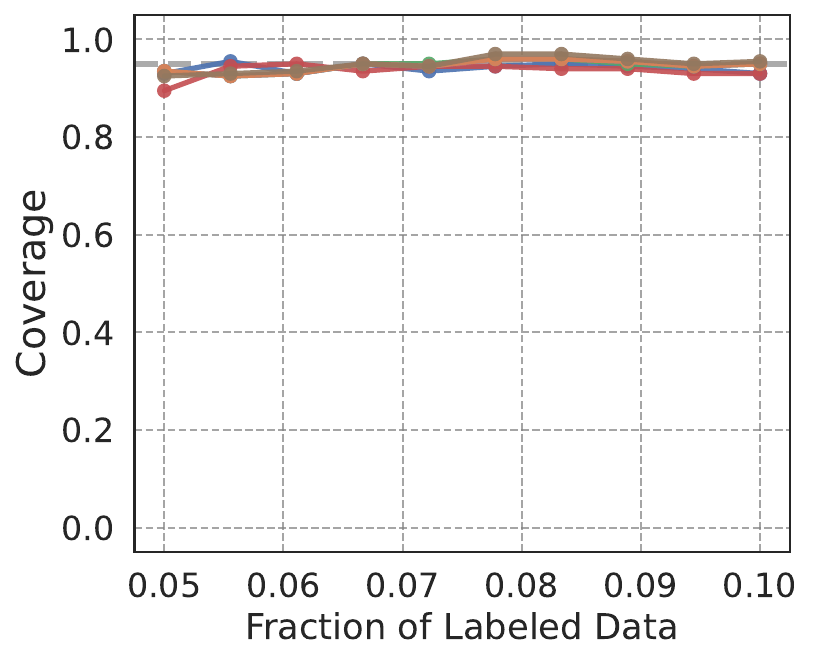}
    \end{subfigure}
    \hfill
    \begin{subfigure}[b]{0.32\textwidth}
        \includegraphics[width=\textwidth]{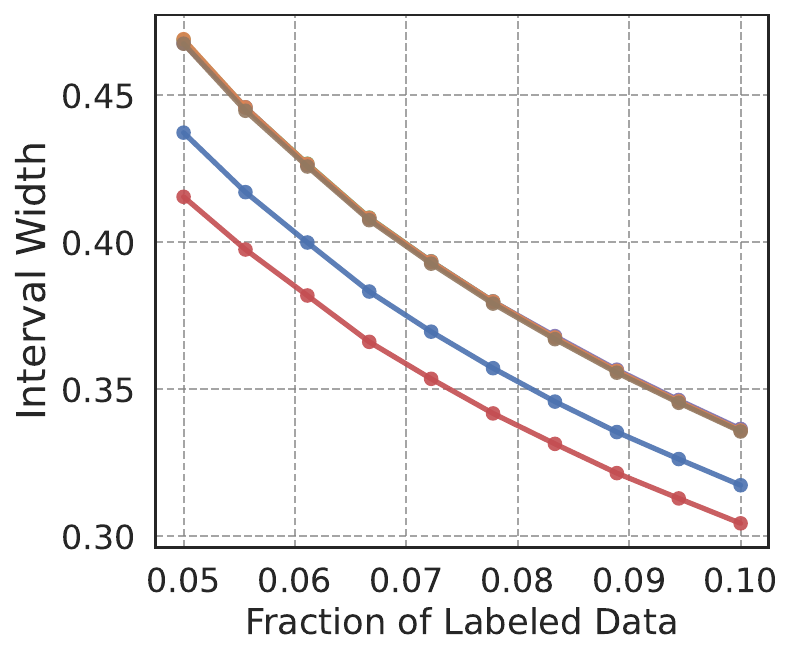}
    \end{subfigure}

    \vspace{0.25cm}
    \begin{subfigure}[b]{0.32\textwidth}
        \includegraphics[width=\textwidth]{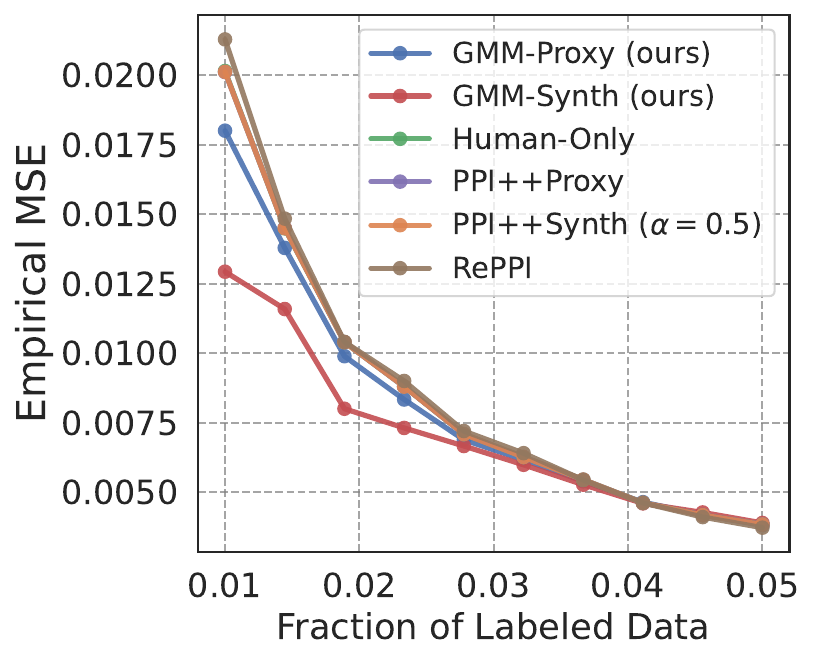}
    \end{subfigure}
    \hfill
    \begin{subfigure}[b]{0.32\textwidth}
        \includegraphics[width=\textwidth]{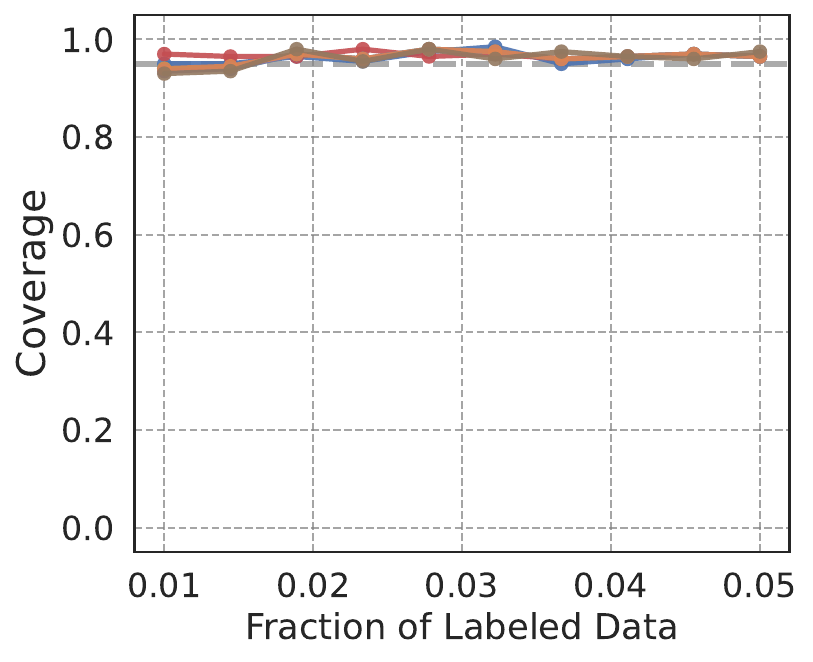}
    \end{subfigure}
    \hfill
    \begin{subfigure}[b]{0.32\textwidth}
        \includegraphics[width=\textwidth]{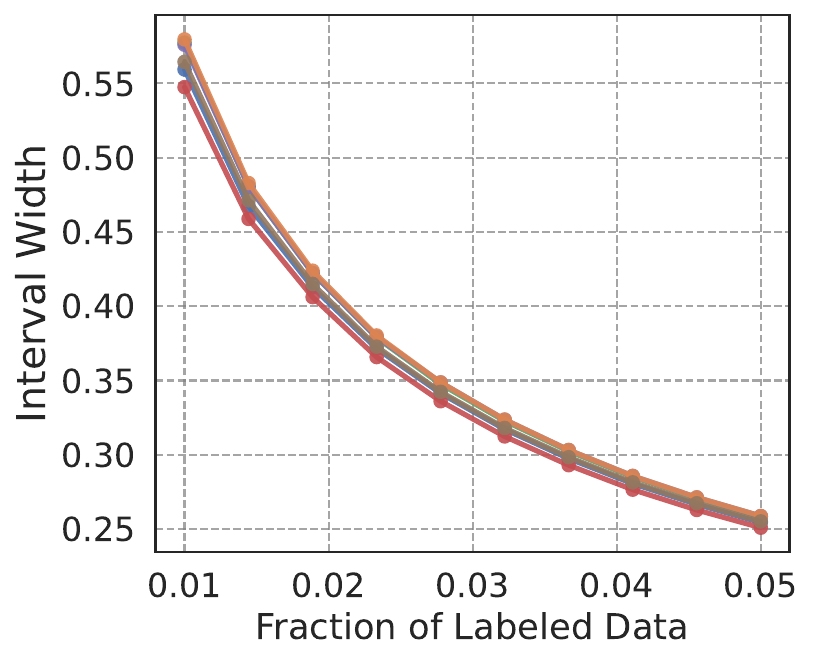}
    \end{subfigure}

    \vspace{0.25cm}

     \begin{subfigure}[b]{0.32\textwidth}
        \includegraphics[width=\textwidth]{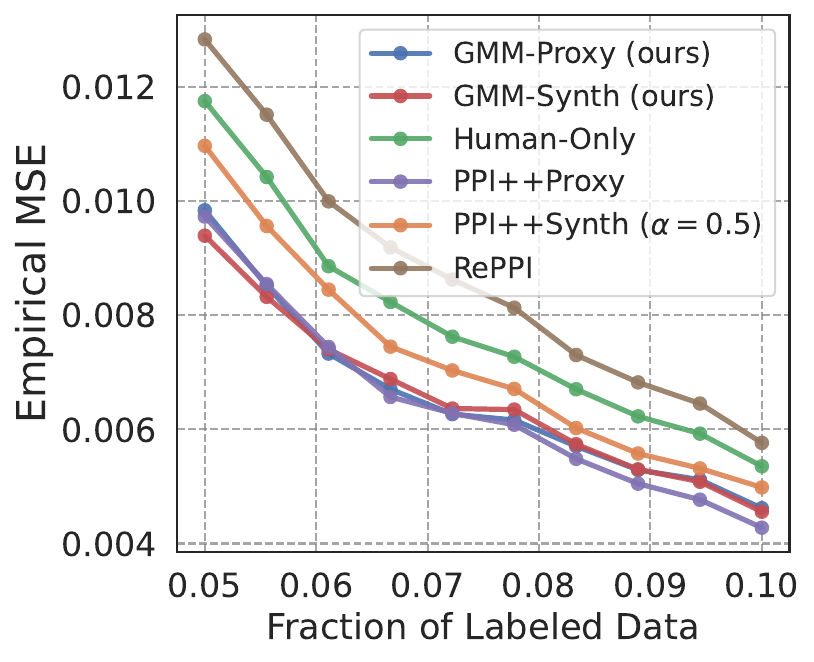}
    \end{subfigure}
    \hfill
    \begin{subfigure}[b]{0.32\textwidth}
        \includegraphics[width=\textwidth]{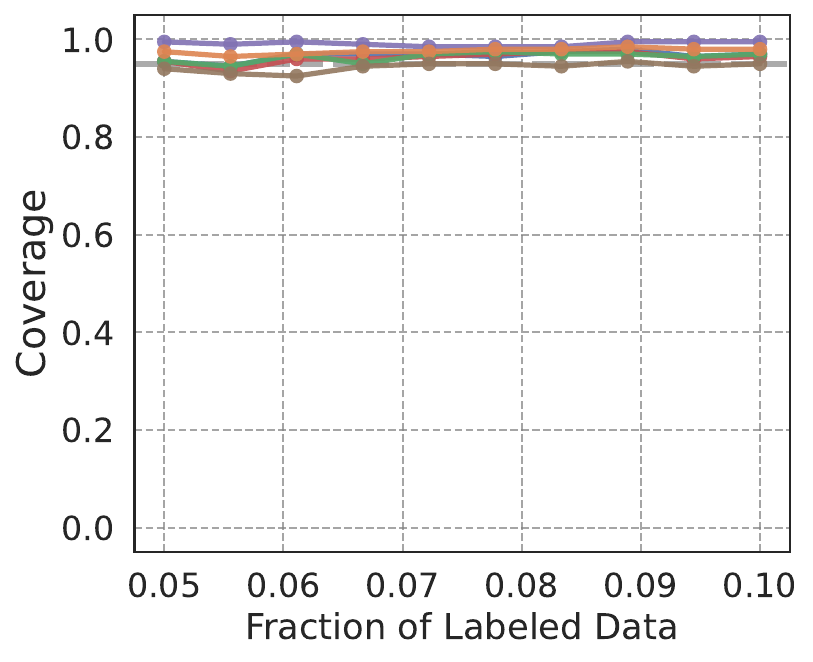}
    \end{subfigure}
    \hfill
    \begin{subfigure}[b]{0.32\textwidth}
        \includegraphics[width=\textwidth]{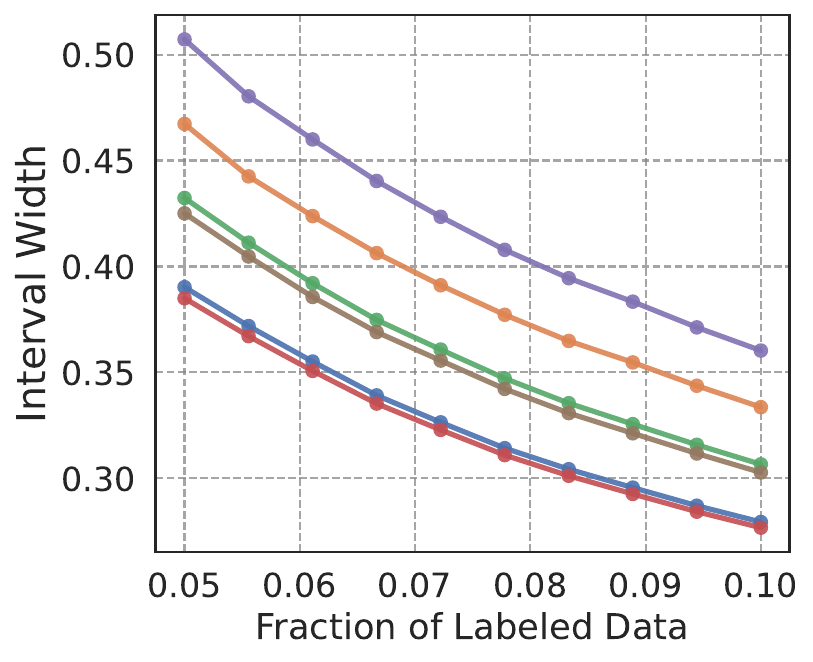}
    \end{subfigure}

    \vspace{0.25cm}

     \begin{subfigure}[b]{0.32\textwidth}
        \includegraphics[width=\textwidth]{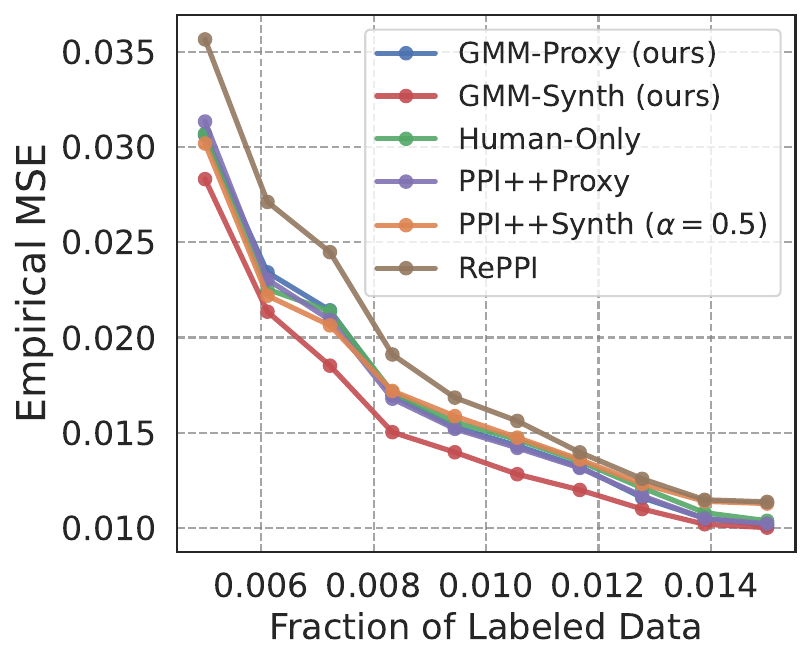}
    \end{subfigure}
    \hfill
    \begin{subfigure}[b]{0.32\textwidth}
        \includegraphics[width=\textwidth]{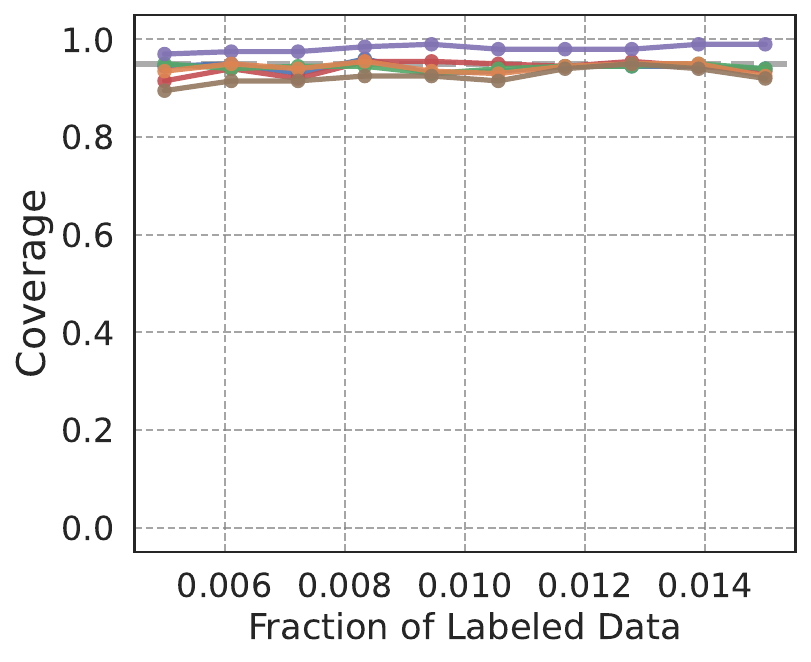}
    \end{subfigure}
    \hfill
    \begin{subfigure}[b]{0.32\textwidth}
        \includegraphics[width=\textwidth]{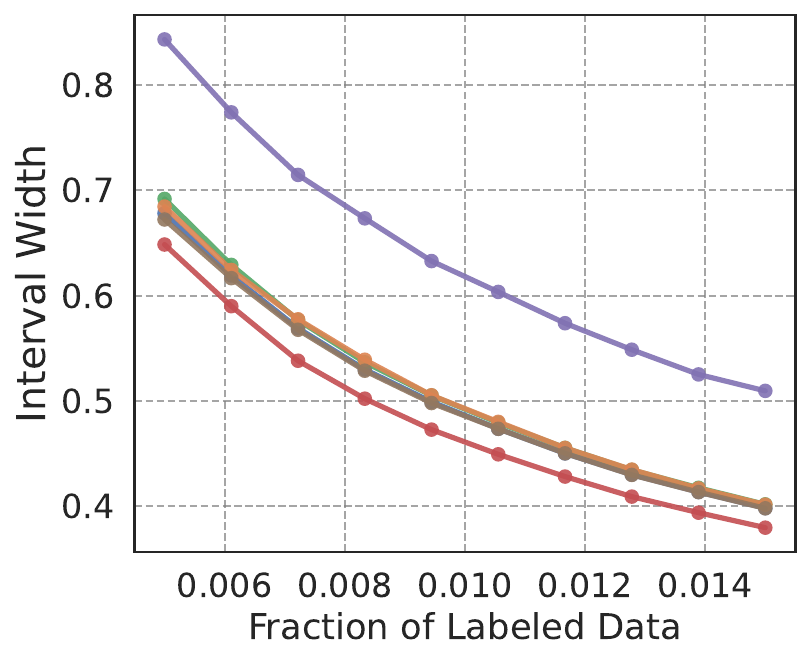}
    \end{subfigure}

    \caption{Qwen-3-8b results for OLS. Each row corresponds to a task (i.e., 1pp, Hedging, Stance, Congressional Bills Data (from top to bottom)); each column corresponds to a metric (i.e., MSE, coverage, confidence interval width (from left to right)). Results are averaged over 200 trials.}
    \label{fig:qwen_ols}
\end{figure}

\begin{figure} [t]
    \centering
    \begin{subfigure}[b]{0.32\textwidth}
        \includegraphics[width=\textwidth]{plots/plot_1pp_mse.pdf}
    \end{subfigure}
    \hfill
    \begin{subfigure}[b]{0.32\textwidth}
        \includegraphics[width=\textwidth]{plots/plot_1pp_coverage.pdf}
    \end{subfigure}
    \hfill
    \begin{subfigure}[b]{0.32\textwidth}
        \includegraphics[width=\textwidth]{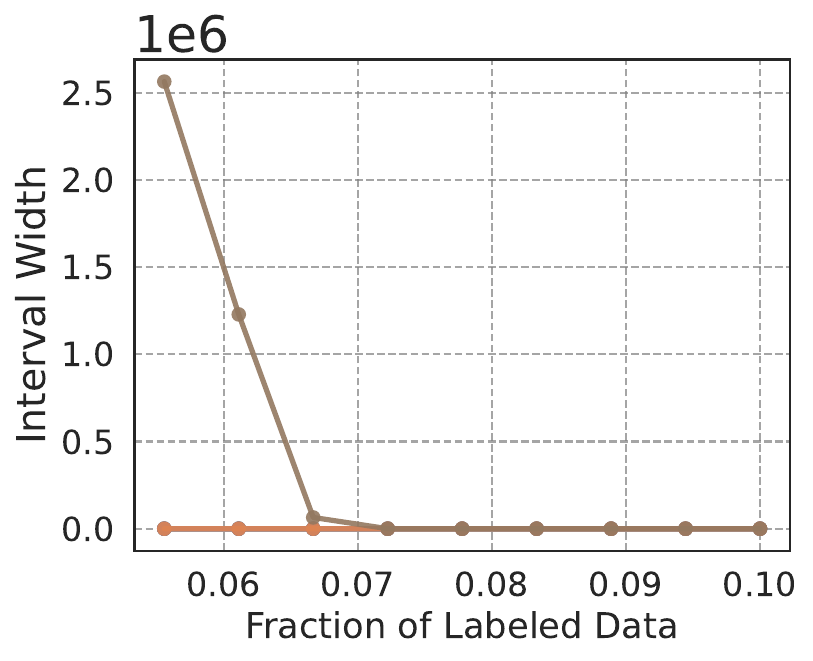}
    \end{subfigure}

    \vspace{0.25cm}
    \begin{subfigure}[b]{0.32\textwidth}
        \includegraphics[width=\textwidth]{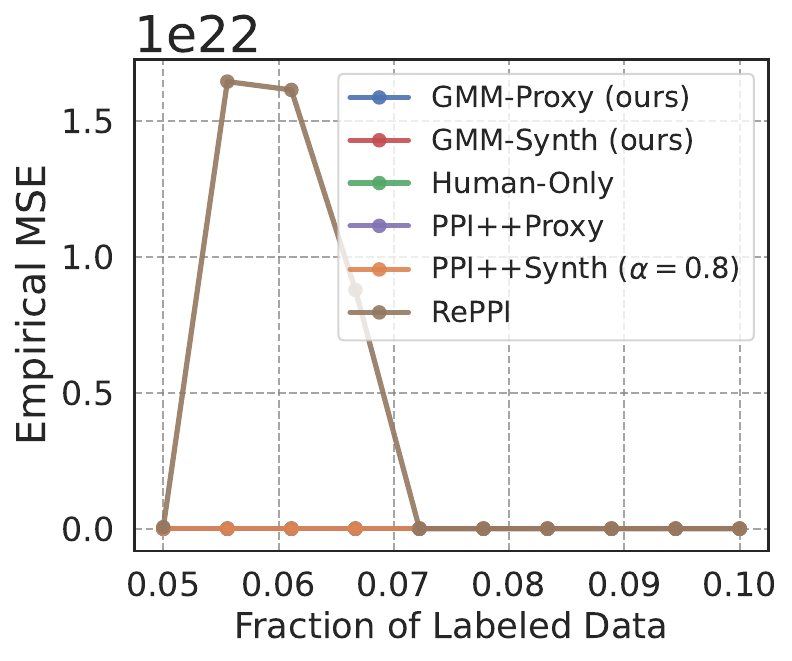}
    \end{subfigure}
    \hfill
    \begin{subfigure}[b]{0.32\textwidth}
        \includegraphics[width=\textwidth]{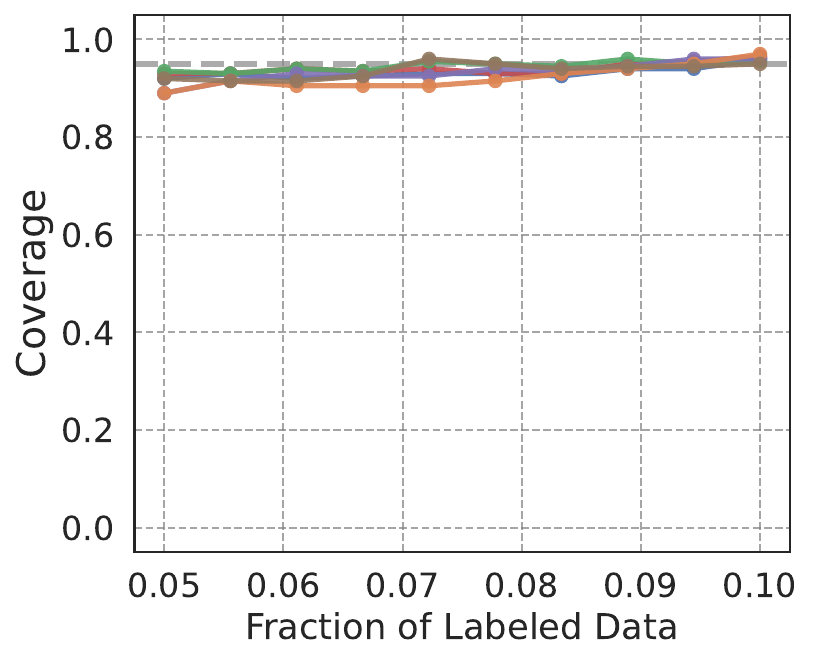}
    \end{subfigure}
    \hfill
    \begin{subfigure}[b]{0.32\textwidth}
        \includegraphics[width=\textwidth]{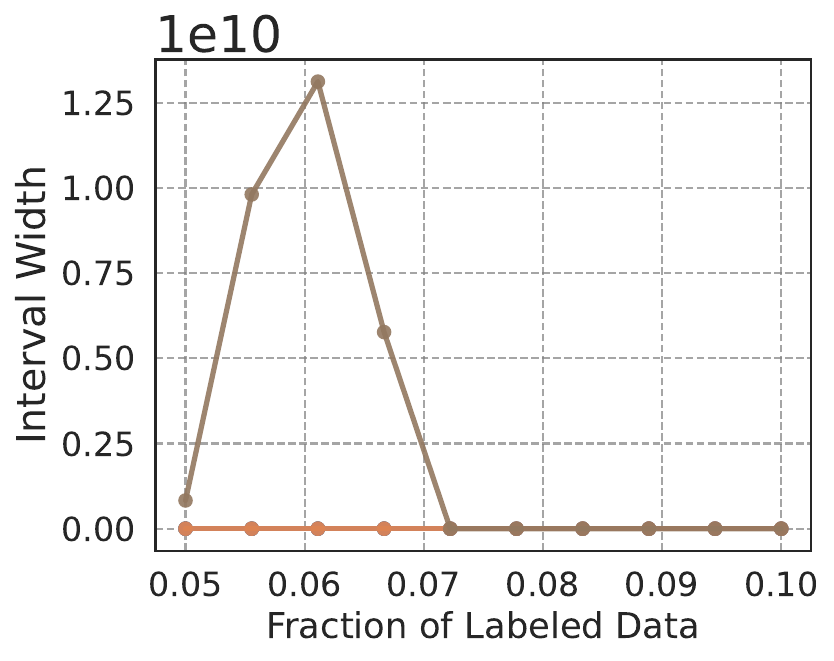}
    \end{subfigure}

    \caption{RePPI results for 1pp task for logistic regression (top) and OLS (bottom). We report the RePPI method results for the 1pp task separately here, due to some very large values. In the main text (Figures \ref{fig:key_results}, \ref{fig:key_results_ols}), we exclude the RePPI numbers from the plots for better visibility.}
    \label{fig:reppi_results}
\end{figure}

\end{document}